 
\documentclass{article}

\PassOptionsToPackage{numbers, square, sort}{natbib}

\usepackage{microtype}
\usepackage{graphicx}
\usepackage{subfigure}
\usepackage{booktabs} 

\usepackage[dvipsnames]{xcolor}
\usepackage[citecolor=NavyBlue, linkcolor=NavyBlue, colorlinks=True]{hyperref} 


\usepackage[preprint]{neurips_2024}

\usepackage{amsmath}
\usepackage{amssymb}
\usepackage{mathtools}
\usepackage{amsthm}
\usepackage{mathmacros}


\newcommand{\y}{\m{y}}
\newcommand{\z}{\m{z}}

\newcommand{\Q}{\m{Q}}
\newcommand{\J}{\m{J}}

\newcommand{\al}{\alpha}
\newcommand{\bt}{\beta}
\newcommand{\dl}{\delta}
\newcommand{\gm}{\gamma}
\renewcommand{\th}{\sm{\theta}}
\newcommand{\Lo}{\mathcal{L}}
\renewcommand{\P}{P}
\newcommand{\knorm}{\mathcal{K}}
\newcommand{\D}{D}
\newcommand{\lr}{\eta}

\renewcommand{\H}{\m{H}}
\newcommand{\g}{\m{g}}

\renewcommand{\v}{\m{v}}
\renewcommand{\u}{\m{u}}
\newcommand{\V}{\m{V}}

\newcommand{\ntk}{\sm{\hat{\Theta}}}

\newcommand{\B}{B}

\newcommand{\Id}{\m{I}}

\newcommand{\tl}{\tilde}

\newcommand{\w}{\m{w}}

\newcommand{\eps}{\epsilon}

\newcommand{\lam}{\lambda}

\newcommand{\diag}{{\rm diag}}

\newcommand{\bfr}{\beta}
\newcommand{\pmat}{\m{P}}

\newcommand{\rad}{\rho}

\newcommand{\pvec}{\m{p}}
\newcommand{\tpvec}{\tilde{\pvec}}
\newcommand{\lmat}{\sm{\Lambda}}
\newcommand{\dmat}{\m{D}}

\renewcommand{\S}{\m{S}}
\newcommand{\cmat}{\m{C}}
\newcommand{\bmat}{\m{B}}
\newcommand{\amat}{\m{A}}

\newcommand{\Svec}{\m{S}}
\newcommand{\mom}{\mu}
\newcommand{\momd}{\alpha}
\newcommand{\Om}{\Omega}
\newcommand{\f}{\m{f}}
\newcommand{\shat}{\hat{\sigma}}
\newcommand{\lhat}{\hat{\lambda}}
\renewcommand{\tpose}{\top}
\newcommand{\tlr}{\tl{\lr}}
\newcommand{\linop}{\m{T}}
\newcommand{\C}{C}
\newcommand{\modcov}{\tl{\sm{\Sigma}}}
\newcommand{\modntk}{\tl{\sm{\Theta}}}

\usepackage[capitalize,noabbrev]{cleveref}

\theoremstyle{plain}
\newtheorem{theorem}{Theorem}[section]

\newtheorem{lemma}[theorem]{Lemma}

\theoremstyle{definition}

\theoremstyle{remark}


\usepackage[textsize=tiny]{todonotes}

\newif\ifcomments
\commentsfalse 

\ifcomments
\newcommand{\jp}[1]{{\color{blue}[JP: #1]}}
\newcommand{\aga}[1]{{\color{red}[AA: #1]}}
\else
\newcommand{\jp}[1]{}
\newcommand{\aga}[1]{}
\fi


\title{High dimensional analysis reveals conservative sharpening and a stochastic edge of stability}








\vskip 0.3in

\author{%
  Atish Agarwala \\
  Google DeepMind\\
  \texttt{thetish@google.com} \\
   \And
   Jeffrey Pennington \\
  Google DeepMind \\
  \texttt{jpennin@google.com} \\
  }

\begin{document}

\maketitle

\begin{abstract}
Recent empirical and theoretical work has shown that the dynamics of the large eigenvalues of the training loss Hessian have some remarkably robust features across models and datasets in the full batch regime. There is often an early period of \emph{progressive sharpening} where the large eigenvalues increase, followed by stabilization at a predictable value known as the \emph{edge of stability}. Previous work showed that in the stochastic setting, the eigenvalues increase more slowly - a phenomenon we call \emph{conservative sharpening}. We provide a theoretical analysis of a simple high-dimensional model which shows the origin of this slowdown. We also show that there is an alternative \emph{stochastic edge of stability} which arises at small batch size that is sensitive to the trace of the Neural Tangent Kernel rather than the large Hessian eigenvalues. We conduct an experimental study which highlights the qualitative differences from the full batch phenomenology, and suggests that controlling the stochastic edge of stability can help optimization.
\end{abstract}

\section{Introduction \aga{comments on}}

Despite rapid advances in the capabilities of machine learning systems,
a large open question about training remains: what makes stochastic gradient descent work in deep learning? Much recent work has focused on understanding learning dynamics through the lens of the loss landscape geometry. The Hessian of the training loss with respect to the parameters
changes significantly over training, and its statistics are intimately linked to optimization
choices \citep{ghorbani_investigation_2019, gilmer_loss_2022}.

In the full batch setting, is a robust observation about the eigenvalues of the
loss Hessian: the large eigenvalues tend to increase at early times (\emph{progressive sharpening}), until the maximum eigenvalue $\lam_{max}$ stabilizes at the \emph{edge of stability} (EOS) - the maximum value consistent with convergence in the convex setting
\citep{cohen_gradient_2022, cohen_adaptive_2022}. This phenomenology can be explained
via positive alignment and negative feedback between $\lam_{max}$ and the parameter changes in the largest eigendirection of the Hessian \citep{damian_selfstabilization_2022, agarwala_secondorder_2022}.

The phenomenology is more complicated in the minibatch setting (SGD). For one, progressive
sharpening decreases in strength as batch size decreases \citep{jastrzebski_three_2018, cohen_gradient_2022} - a phenomenon which we dub
\emph{conservative sharpening}. In addition,
there is theoretical and experimental evidence that the stochastic nature of the gradients suggests that quantities like the \emph{trace} of the Hessian, are important for long-time convergence and stability \citep{jastrzebski_breakeven_2020, wu_alignment_2022}. This observation has lead to attempts to define a \emph{stochastic edge of stability} (S-EOS) to
understand loss landscape dynamics 
in the SGD setting \citep{wu_implicit_2023a, mulayoff_exact_2023}.

In parallel, there has been progress in understanding aspects of SGD in simple but
high-dimensional models. The theory of infinitely-wide neural networks has shown that
in the appropriate limit, model training resembles gradient-based training of kernel
methods \citep{jacot_neural_2018, lee_wide_2019, adlam_neural_2020}. More recent work has
studied the dynamics of SGD in convex models where the number of datapoints and the number of parameters scale to infinity at the same rate \citep{mei_meanfield_2019, paquette_sgd_2021, paquette_homogenization_2022, paquette_implicit_2022, benarous_highdimensional_2022, arnaboldi_highdimensional_2023}.
These theoretical works have found tight stability/convergence conditions in this
high-dimensional regime -- a regime that is increasingly important in the current landscape of increasing model and dataset sizes.

In this work, we present evidence that a stochastic instability phenomenon
is useful
for understanding neural network training dynamics. We use theoretical analysis
to show the following:
\vspace{-2mm}
\begin{itemize}
\setlength\itemsep{0.25mm}
    \item There is a \emph{stochastic edge of stability} (S-EOS) which in the
    MSE setting is controlled by a scalar $\knorm$ which we call the \emph{noise kernel norm}.
    \item Conservative sharpening depends on the statistics of both the Jacobian and
    its gradient, and provides stronger suppression on larger eigenvalues.
\end{itemize}
\vspace{-2mm}
The theory suggests that S-EOS effects can become important in practical regimes.
We then demonstrate the following experimentally:
\vspace{-2mm}
\begin{itemize}
\setlength\itemsep{0.25mm}
    \item $\knorm$ self-stabilizes near the critical value $1$,
    giving us an S-EOS stabilization which is qualitatively distinct from stabilization of
    $\lam_{max}$ in the original EOS.
    \item For small batch size the behavior of $\knorm$ is a slowly varying
    function of $\lr/\B$.
    \item $\knorm$ is predictive of training outcomes across a variety of model sizes,
    and with additional effects like momentum and learning rate schedules.
\end{itemize}
\vspace{-2mm}
We conclude with a discussion of the utility of $\knorm$ in understanding SGD dynamics more generally.

\begin{figure}[t]
\centering
\begin{tabular}{ccc}
\includegraphics[width=0.32\linewidth]{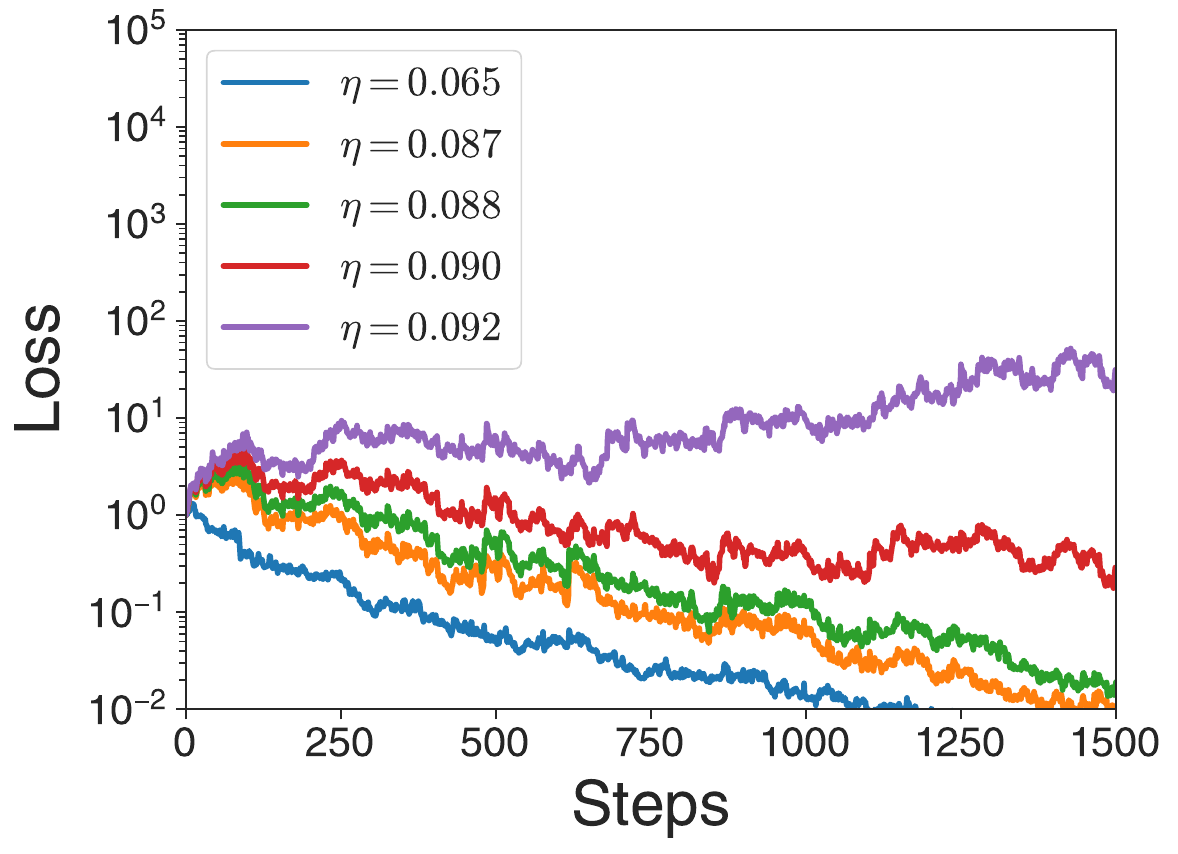}
\includegraphics[width=0.32\linewidth]{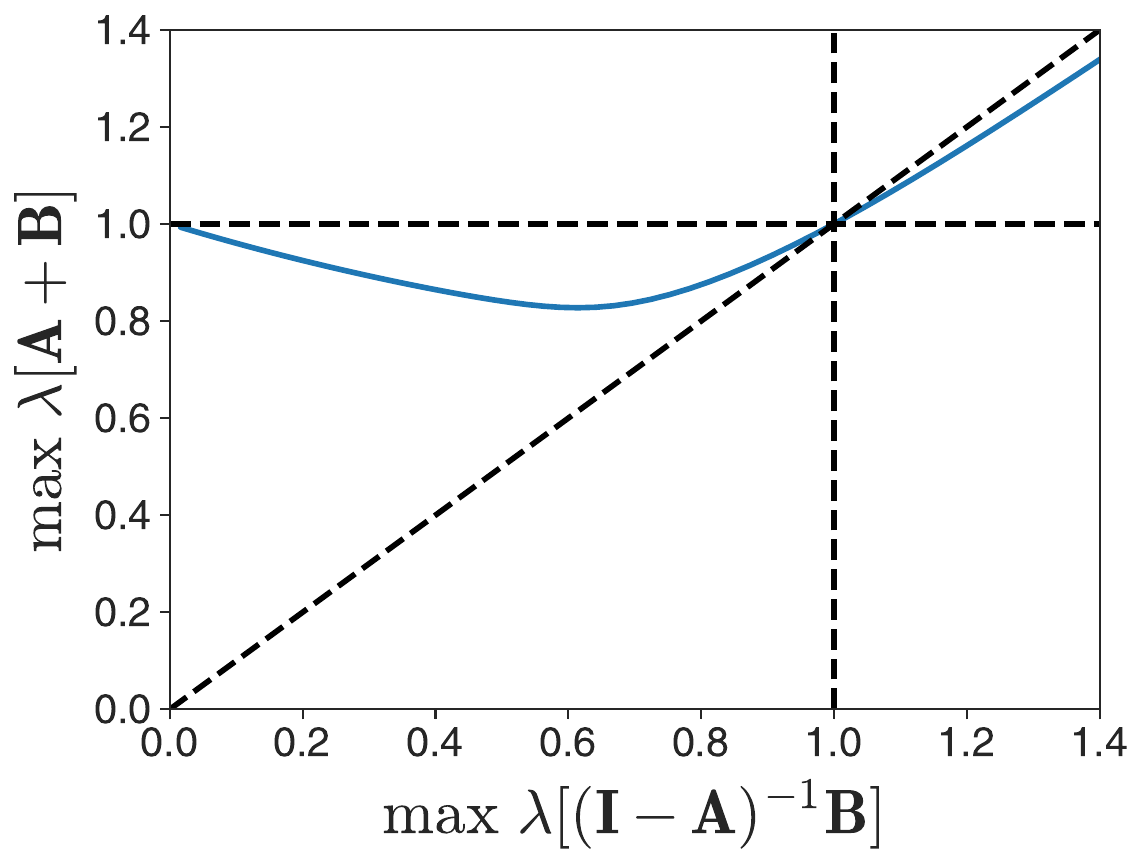} &
\includegraphics[width=0.32\linewidth]{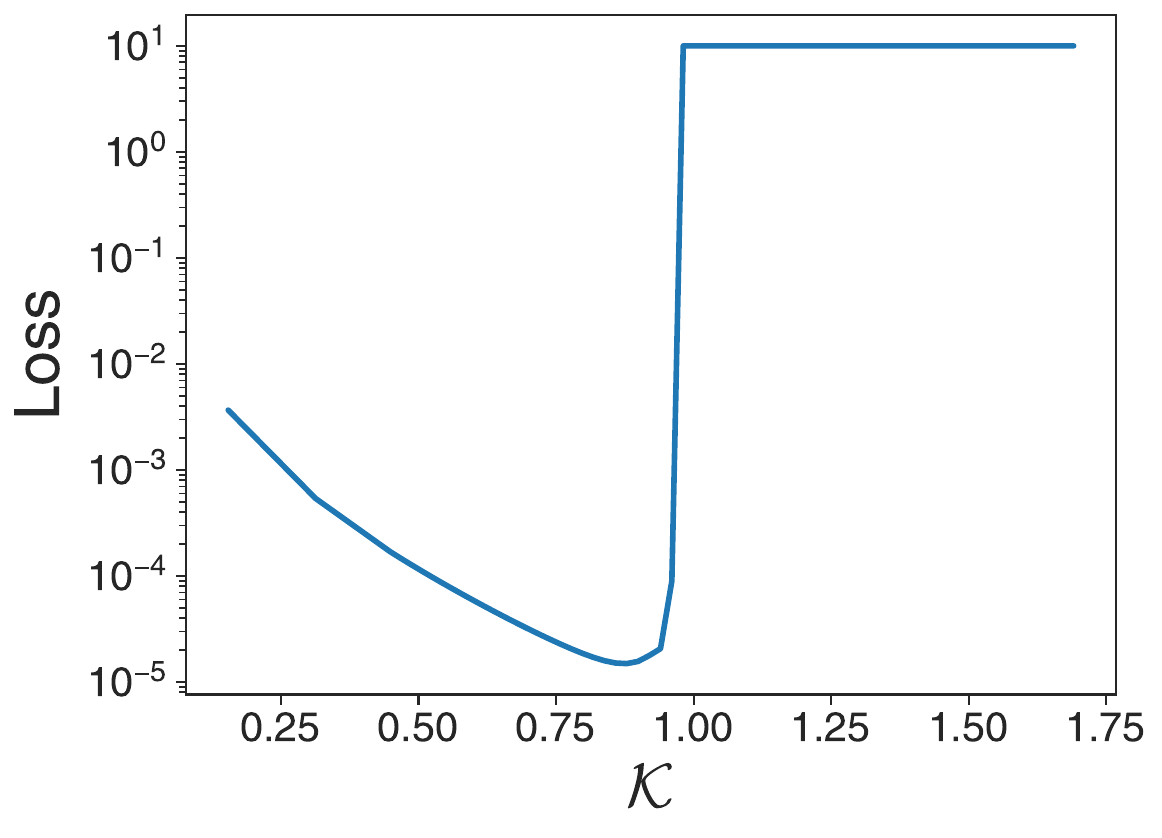}
\end{tabular}
\caption{SGD trajectories for linear regression show divergence due to stochastic effects as
$\lr$ is increased (left, $\B = 5$, $\D = 100$, $\P = 120$, i.i.d. Gaussian $\J$). $\knorm$ interpolates
from $0$ at small learning rate, to value $1$ precisely when $\lam_{max}[\amat+\bmat] = 1$
(middle). Loss after $10^{4}$ steps diverges for $\knorm >1$ (right, plot saturated $10^{1}$ 
for convenience).}
\label{fig:knorm_stability_basics}
\end{figure}

\section{The stochastic edge of stability}

\label{sec:s_eos_theory}

In the deterministic setting, the edge of stability (EOS) is derived by performing
stability analysis of the loss under full batch (GD) dynamics
about a minimum on a convex model. In this section, we derive a stability condition for
SGD in an analogous fashion. In the stochastic setting, we will focus on the long-time
behavior of the \emph{second moments} of the network outputs - where the averages are taken
over the sampling of the minibatches.
A local, weight space analysis of the second moment was studied previously in
\citep{ma_linear_2021, mulayoff_exact_2023}.

Instead, we will use a function space analysis to define a \emph{noise kernel norm} $\knorm$ which characterizes the global stability of
of the residuals $\z_{t}$ under SGD noise. The resulting measure will
range from $0$ in the full batch SGD case to $1$ at the stability threshhold - analogous to
the role the normalized eigenvalue $\lr\lam_{max}$ plays in the full batch case.
This approach most similar to \citet{paquette_sgd_2021}, which focused on a specific, 
high-dimensional, rotationally invariant limit; the majority of our analysis will not make such
assumptions.

\aga{New structure}

\subsection{Linearized model and deterministic EOS}

We first define the basic model of study. Consider a $\P$-dimensional parameter
vector $\th$ and a $\D$-dimensional output function $\f(\th)$.
\aga{next line helpful or not?}
We will generally interpret the $\D$ outputs as coming from $\D$ inputs with $1$-dimensional outputs; however, our analysis
naturally covers the case of $C$-dimensional outputs on $\D/C$ datapoints.

We focus on the case of MSE loss. Given training targets $\y_{tr}$, the full loss is
given by
\begin{equation}
\Lo(\th) = \frac{1}{2\D}||\z||^{2},~\z\equiv \f(\th)-\y_{tr}.
\end{equation}
We will consider training with minibatch SGD with batch size $\B$, which can
be described as follows. Let $\pmat_{t}$ be a sequence of random, i.i.d.
diagonal matrices with exactly $\B$ random $1$s on the diagonal, and $0$s everywhere else.
Then the loss for minibatch $t$ is given by
\begin{equation}
\Lo_{mb,t}(\th) = \frac{1}{2\B}\z^{\tpose}\pmat_{t}\z.
\end{equation}
Like the case of full batch EOS, we will construct
a convex approximation to the training setup.
Consider linearizing $\f$ around a point $\th_{0}$:
\begin{equation}
\f(\th) \approx \f(\th_{0})+\J[\th-\th_{0}]
\end{equation}
where we have ignored higher order terms of $O(||\th-\th_{0}||^{2})$. Here
$\J\equiv \frac{\partial \f}{\partial\th}(\th_{0})$ is the $\D\times\P$-dimensional Jacobian at $\th_{0}$. For convenience we assume, WLOG, that $\th_{0} = 0$.
The update rule for minibatch gradient descent on the linearized model with MSE loss is
\begin{equation}
\th_{t+1}-\th_{t} = -\frac{\lr}{\B} \J^{\tpose}\pmat_{t}\z_{t}.
\end{equation}
To understand the dynamics in function space we can write
the updates for $\z_{t}$:
\begin{equation}
\z_{t+1}-\z_{t}  = -\frac{\lr}{\B}\J \J^{\top}\pmat_{t}\z_{t}.
\label{eq:lin_reg_sgd}
\end{equation}
We can get a basic understanding of the behavior of this system by averaging $\z$ with respect to
the minibatch sampling $\pmat$. The first moment evolves as:
\begin{equation}
\expect_{\pmat}[\z_{t+1}-\z_{t}|\z_{t}, \J_{t}] = -\lr\ntk\expect_{\pmat}[\z_{t}]
\label{eq:z_ave_quad}
\end{equation}
where we define the (empirical) \emph{neural tangent kernel} (NTK, \citep{jacot_neural_2018})  as $\ntk \equiv \frac{1}{\D}\J\J^{\top}$.

This gives us a linear recurrence equation for $\expect[\z_{t}]$, which
converges to $0$ if and only if $\lr\lam_{max} < 2$ for the largest eigenvalue
$\lam_{\max}$ of $\ntk$. This is exactly the full-batch (deterministic) EOS condition. Therefore we
can interpret the ``standard'' EOS as a stability
condition on the first moment of $\z_{t}$.

\aga{end new structure}

\subsection{Second moment stability defines stochastic EOS}

We now describe a method to find \emph{noise-driven} instabilities in the dynamics
of Equation \ref{eq:lin_reg_sgd} which have no full-batch analogue.
These instabilities are found by analyzing the long-time behavior of the
\emph{second moments} of $\z$. We will find a stability condition in terms of $\ntk$, $\lr$,
and $\B$ which we will call the \emph{stochastic EOS} (S-EOS). The covariance of the residuals evolves as:
\begin{equation}
\begin{split}
\expect_{\pmat}[\z_{t+1}\z_{t+1}^{\top}|\z_{t}]  = \z_{t}\z_{t}^{\top}-\lr \left(\ntk\z_{t}\z_{t}^{\top}+\z_{t}\z_{t}^{\top}\ntk\right) \\
+\tl{\bfr}\bfr^{-1}\lr^{2}\ntk\z_{t}\z_{t}^{\top}\ntk
+ (\bfr^{-1}-\tl{\bfr}\bfr^{-1})\lr^{2} \ntk\diag\left[\z_{t}\z_{t}^{\top}\right]\ntk
\end{split}
\label{eq:cov_dyn_lin}
\end{equation}
where $\bfr\equiv \B/\D$ is the batch fraction, and $\tl{\bfr}\equiv (\B-1)/(\D-1)$. Inspecting Equation \ref{eq:cov_dyn_lin}, we see that the covariance evolves as a linear dynamical system, whose corresponding linear operator we denote will denote as $\linop$ (see Appendix \ref{app:second_mom_dyn} for a full expression).
The stability of the dynamics is controlled by $\max||\lam[\linop]||$, the largest eigenvalue
of $\linop$. If $\max||\lam[\linop]||<1$, the dynamics are stable
($\lim_{t\to\infty}\expect_{\pmat}[\z_{t}\z^{\top}_{t}] = 0$). If $\max||\lam[\linop]||<1$, then the
dynamics diverge ($\lim_{t\to\infty}\expect_{\pmat}[\z_{t}\z^{\top}_{t}] = \infty$). Note that
$\expect_{\pmat}[\z_{t}^{\top}\z_{t}]$ is the expected loss.

We say a system is at the \emph{stochastic edge of stability} (S-EOS) if both
$\lr\max\lam[\ntk]<2$ and $\max||\lam[\linop]|| = 1$. This is impossible in the full
batch setting $\bfr = 1$, but for SGD the
last term
in Equation \ref{eq:cov_dyn_lin} contributes to $\max||\lam[\linop]||$, and there are systems which are unstable due to the effects of SGD noise (Figure \ref{fig:knorm_stability_basics}, left).

\subsection{Noise kernel norm}

In general, $\linop$ is a $\D^{2}\times\D^{2}$ matrix, whose entries are derived from $\P$-dimensional
inner products. This can quickly become intractable for
large $\D$ and $\P$. Addtionally, $\max||\lam[\linop]||$ does \emph{not} distinguish between noise-driven and
deterministically-driven instabilities. We will use a $\D\times\D$ dimensional approximation
to the dynamics to define the \emph{noise kernel norm} $\knorm$ - an interpretable measure
of the influence of noise in the optimization dynamics and a good predictor of the S-EOS.

Consider the rotated covariance
$\S_{t}\equiv \V^{\tpose}\expect_{\pmat}[\z_{t}\z_{t}^{\tpose}]\V$, where $\V$
comes from the eigendecomposition
$\ntk = \V\lmat\V^{\tpose}$. We define the normalized diagonal $\tpvec \equiv \lmat^{+}\diag(\S)$,
where $\diag(\S)$ is the vector obtained from the diagonal of $\S$. Consider the dynamics of
$\tpvec$ under the linear operator $\linop$, restricted to $\tpvec$. That is, we ignore
any contributions to the dynamics from  
terms like $\expect_{\pmat}[(\v\cdot\z_{t})(\v'\cdot\z_{t})]$ for distinct eigenvectors
$\v$ and $\v'$ of $\ntk$. We have (Appendix \ref{app:second_mom_dyn}):
\begin{equation}
\begin{split}
\tpvec_{t+1} & = (\amat+\bmat)^{t}\tpvec_{0},~ \amat  \equiv(\Id-\lr\lmat)^2+(\tl{\bfr}\bfr^{-1}-1)\lmat^{2}\\
& ~\bmat\equiv (\bfr^{-1}-\tl{\bfr}\bfr^{-1})\lr^2\lmat\cmat\lmat.
\end{split}
\label{eq:pvec_volt}
\end{equation}
Here $\amat$ (the deterministic contribution)
and $\bmat$ (the stochastic contribution) are both PSD matrices, and
$\cmat_{\bt\mu} \equiv \sum_{\al}\V_{\al\bt}^2\V_{\al\mu}^2$ gives the noise-induced
coupling between the eigenmodes of $\ntk$. The largest eigenvalue of this linear system
gives us an approximation of $\max||\lam[\linop]||$.

Instead of computing $\max\lam[\amat+\bmat]$ directly, we define the
\emph{noise kernel norm} $\knorm$, which interpolates from $0$ for $\bfr = 1$
(no noise) to $\knorm = 1$ at the S-EOS. In Appendix \ref{app:s_eos_def} we 
prove the following:
\begin{theorem}
If the diagonal of $\S$ is governed by Equation \ref{eq:pvec_volt}, then $\lim_{t\to\infty}\expect_{\pmat}[\z_{t}\z_{t}^{\top}] = 0$ for any initialization $\z_{t}$
if and only if $||\amat||_{op} <1$ and $\knorm < 1$ where
\begin{equation}
\knorm \equiv \max\lam\left[(\Id-\amat)^{-1}\bmat\right]
\label{eq:knorm_def}
\end{equation}
for the PSD matrices $\amat$ and $\bmat$ defined above. $\knorm$ is always non-negative.
\label{thm:s_eos}
\end{theorem}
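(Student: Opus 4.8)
The plan is to reduce the convergence statement to a spectral condition on the symmetric matrix $\amat+\bmat$ and then factor that condition into the two advertised pieces. First I would work on the support of $\lmat$ (the range of $\ntk$), where $\lmat^{+}$ acts as an honest inverse; the kernel directions of $\ntk$ are frozen by the dynamics and are projected out by $\tpvec\equiv\lmat^{+}\diag(\S)$, so convergence of the relevant part of the loss is exactly convergence of $\tpvec$. Since $\V$ is orthogonal, $\mathrm{tr}(\expect_{\pmat}[\z_t\z_t^{\top}])=\mathrm{tr}(\S_t)=\sum_i\lam_i[\tpvec_t]_i$ with all terms nonnegative, so (using that a PSD matrix with vanishing trace is $0$) the statement ``$\expect_{\pmat}[\z_t\z_t^{\top}]\to0$'' is equivalent to ``$\tpvec_t\to0$.'' By hypothesis $\tpvec_t=(\amat+\bmat)^t\tpvec_0$, and every realizable initial diagonal is an arbitrary nonnegative vector (choose $\z_0$ with $[\V^{\top}\z_0]_i=\sqrt{\lam_i[\tpvec_0]_i}$). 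Both $\amat$ (diagonal, PSD) and $\bmat=(\bfr^{-1}-\tl{\bfr}\bfr^{-1})\lr^2\lmat\cmat\lmat$ are symmetric with nonnegative entries, so $\amat+\bmat$ is symmetric and entrywise nonnegative; by Perron--Frobenius its spectral radius equals $\lam_{max}(\amat+\bmat)$ and is attained at a nonnegative eigenvector. Taking that eigenvector as $\tpvec_0$ witnesses divergence when $\lam_{max}(\amat+\bmat)\ge1$, while $\lam_{max}(\amat+\bmat)<1$ yields convergence for all initial conditions. Hence the claim reduces to: $\lam_{max}(\amat+\bmat)<1$ iff $\|\amat\|_{op}<1$ and $\knorm<1$.

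For this equivalence I would use a congruence (Schur-complement-style) factorization of $\Id-\amat-\bmat\succ0$. For the forward direction, since $\bmat\succeq0$ we have $\Id-\amat\succeq\Id-\amat-\bmat\succ0$, giving $\|\amat\|_{op}<1$ so that $(\Id-\amat)^{1/2}$ is invertible. Then
\begin{equation}
\Id-\amat-\bmat=(\Id-\amat)^{1/2}\Big(\Id-(\Id-\amat)^{-1/2}\bmat(\Id-\amat)^{-1/2}\Big)(\Id-\amat)^{1/2},
\end{equation}
and by Sylvester's law of inertia the left side is positive definite iff the middle factor is, i.e.\ iff $\lam_{max}\big[(\Id-\amat)^{-1/2}\bmat(\Id-\amat)^{-1/2}\big]<1$. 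Because $(\Id-\amat)^{-1/2}\bmat(\Id-\amat)^{-1/2}$ is similar to $(\Id-\amat)^{-1}\bmat$ (conjugate by $(\Id-\amat)^{1/2}$), they share eigenvalues, and this largest eigenvalue is precisely $\knorm$. The reverse direction reads the same factorization backwards: $\|\amat\|_{op}<1$ makes it valid, and $\knorm<1$ forces the middle factor, hence the whole product, to be positive definite.

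Nonnegativity of $\knorm$ then follows from the same symmetrization: whenever $\|\amat\|_{op}<1$, the matrix $(\Id-\amat)^{-1}\bmat$ is similar to $(\Id-\amat)^{-1/2}\bmat(\Id-\amat)^{-1/2}$, which is PSD because $\bmat\succeq0$; thus all its eigenvalues are real and nonnegative and $\knorm\ge0$.

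I expect the main obstacle to be the first paragraph rather than the matrix algebra. The delicate point is making the reduction from the full $\D\times\D$ covariance to the nonnegative scalar vector $\tpvec$ genuinely lossless for deciding convergence: one must handle the pseudoinverse $\lmat^{+}$ carefully when $\ntk$ is rank-deficient (residual components in $\ker\ntk$ are static and must be excluded by restricting to the range, which is also what forces the relevant $\amat$-entries to satisfy $\amat_{ii}<1$), and one must confirm that the realizable initial diagonals fill the entire nonnegative orthant so that the Perron--Frobenius divergence witness is a legitimate initial residual. Once $\bmat\succeq0$ and the symmetry and entrywise-nonnegativity of $\amat+\bmat$ are in hand, the positive-definiteness equivalence and the sign claim are routine.
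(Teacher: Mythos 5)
Your proof is correct, and it reaches the same two pivot points as the paper's proof --- reducing convergence to $\lam_{\max}[\amat+\bmat]<1$, and then splitting that condition into $||\amat||_{op}<1$ together with $\knorm<1$ --- but the machinery you use at each step is genuinely different. The paper handles the off-diagonal entries of $\expect_{\pmat}[\z_t\z_t^{\top}]$ with the elementary bound $\expect[|ab|]\leq\expect[a^2]+\expect[b^2]$, whereas you use the trace of the PSD covariance; both work. For the splitting, the paper proves the two implications separately by explicit quadratic-form and Rayleigh-quotient computations at eigenvectors of $(\Id-\amat)^{-1}\bmat$ and of $\amat+\bmat$ respectively, plus a standalone lemma (proved by a fairly laborious block computation) that products of PSD matrices have non-negative spectrum; your single congruence $\Id-\amat-\bmat=(\Id-\amat)^{1/2}\bigl(\Id-(\Id-\amat)^{-1/2}\bmat(\Id-\amat)^{-1/2}\bigr)(\Id-\amat)^{1/2}$ combined with Sylvester's law of inertia delivers both directions and the non-negativity of $\knorm$ in one stroke, which is tidier (the paper does use the same symmetrization $(\Id-\amat)^{-1/2}\bmat(\Id-\amat)^{-1/2}$, but only in one of the two directions). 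Finally, your Perron--Frobenius argument producing a \emph{realizable}, entrywise non-negative divergent initialization is a step the paper's proof silently skips: it invokes convergence ``for all inputs'' of the abstract linear system without checking that the divergent input corresponds to an actual residual $\z_0$, so your version closes a small gap. Your caution about the kernel of $\lmat$ is also well placed --- the appendix contains an author note acknowledging exactly that loose end --- and restricting to the range of $\ntk$ is the right way to handle it.
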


$\knorm$ is a normalized measure of the SGD-induced noise in the dynamics. For $\bfr = 1$
(full-batch training), $\knorm = 0$ -  there is no noise.
This is in contrast to $\max\lam[\amat+\bmat]$, which is often close to $1$ even in the
deterministic setting (Figure \ref{fig:knorm_stability_basics}, middle),
where it is given by $(1-\lr\lam_{min})^{2}$ for the minimum eigenvalue $\lam_{min}$ of $\ntk$.
Even though $\knorm$ is derived from an approximation of $\linop$,
the S-EOS is often well-predicted by $\knorm = 1$- even for small systems
(Figure \ref{fig:knorm_stability_basics}, right, $\D = 100$).
As we will show later, these properties of $\knorm$ make it suitable for analysis of the effects
of SGD in non-convex settings.

\subsection{Approximations of $\knorm$}

\label{sec:knorm_scaled}

A key difference between the S-EOS and the deterministic EOS is that the S-EOS depends
on the whole spectrum of $\ntk$. We can show this directly by computing approximations to
$\knorm$. These will have the additional benefit of being easy to compute, especially on
real neural network setups.
In the high-dimensional limit,
\citet{paquette_sgd_2021} showed that $\tl{\bfr} \approx \bfr$ and $\cmat \approx \frac{1}{\D}\m{1}\m{1}^{\tpose}$,
and we arrive at
\begin{equation}
\knorm \approx \hat{\knorm}_{HD} \equiv  \frac{\lr}{\B}\sum_{\al=1}^{\D} \frac{\lam_{\al}}{2-\lr\lam_{\al}}
\end{equation}
where the $\lam_{\al}$ are the eigenvalues of $\ntk$. The key features are the dependence
on the ratio $\lr/\B$, and the fact that eigenvalues close to the deterministic EOS
$\lr\lam = 2$ have higher weight. We can immediately see that the S-EOS condition is not vacuous;
if the largest $\B$ eigenvalues have $\lr\lam = 1$, then $\knorm\geq 1$ while
$\lr\lam_{max} < 2$. If $\lr\lam_{\al}\ll 2$ for all eigenvalues,
we have the approximation
\begin{equation}
\knorm \approx \hat{\knorm}_{tr}\equiv \frac{\lr}{2\B} \tr\left(\ntk\right)
\label{eq:knorm_approx}
\end{equation}

Equation \ref{eq:knorm_approx} gives us an intuitive understanding of SGD noise. $\knorm$
depends on the ratio $\lr/\B$ which controls the scale of the noise in SDE-based analyses
of SGD \cite{smith_don_2017, jastrzebski_three_2018}. The dependence on the trace of the empirical NTK shows
that the noise depends on many eigendirections. It is interesting to note that some
popular regularization techniques implicitly or explicitly regularize a similar quantity
\cite{wen_how_2023, dauphin_neglected_2024}.



The approximations of $\knorm$ underestimate the noise level; we have
$\hat{\knorm}_{tr} \leq \hat{\knorm}_{HD} \leq \knorm$. In general $\hat{\knorm}_{tr}$ becomes
a poor predictor of $\knorm$ when there are eigenvalues close to $2/\lr$. $\hat{\knorm}_{HD}$
loses accuracy when there is a large spread of eigenvalues. Both become inaccurate when
the eigenvectors $\V$ of $\ntk$ are no longer delocalized with respect to the coordinate
basis of $\z$.
See Appendix \ref{app:knorm_highd_approx} for more details.

Though our exact analysis is restricted to MSE loss, any model can be locally linearized. The relevant quantity then
becomes the trace of the Gram matrix of the Gauss-Newton matrix (Appendix \ref{app:non_mse_loss}).
In that setting, the analysis breaks down if the linearization changes over training timescales.

Nevertheless, $\knorm$ and its approximations are accurate enough to
estimate the effect of noise on optimization trajectories in many linear regression settings.
In Section \ref{sec:experiments} we provide
experimental evidence that $\knorm$ and the S-EOS are useful for understanding aspects of
non-linear settings as well - particularly, training deep neural networks.



\begin{figure}[ht]
    \centering
    \begin{tabular}{cc}
    \includegraphics[width=0.45\linewidth]{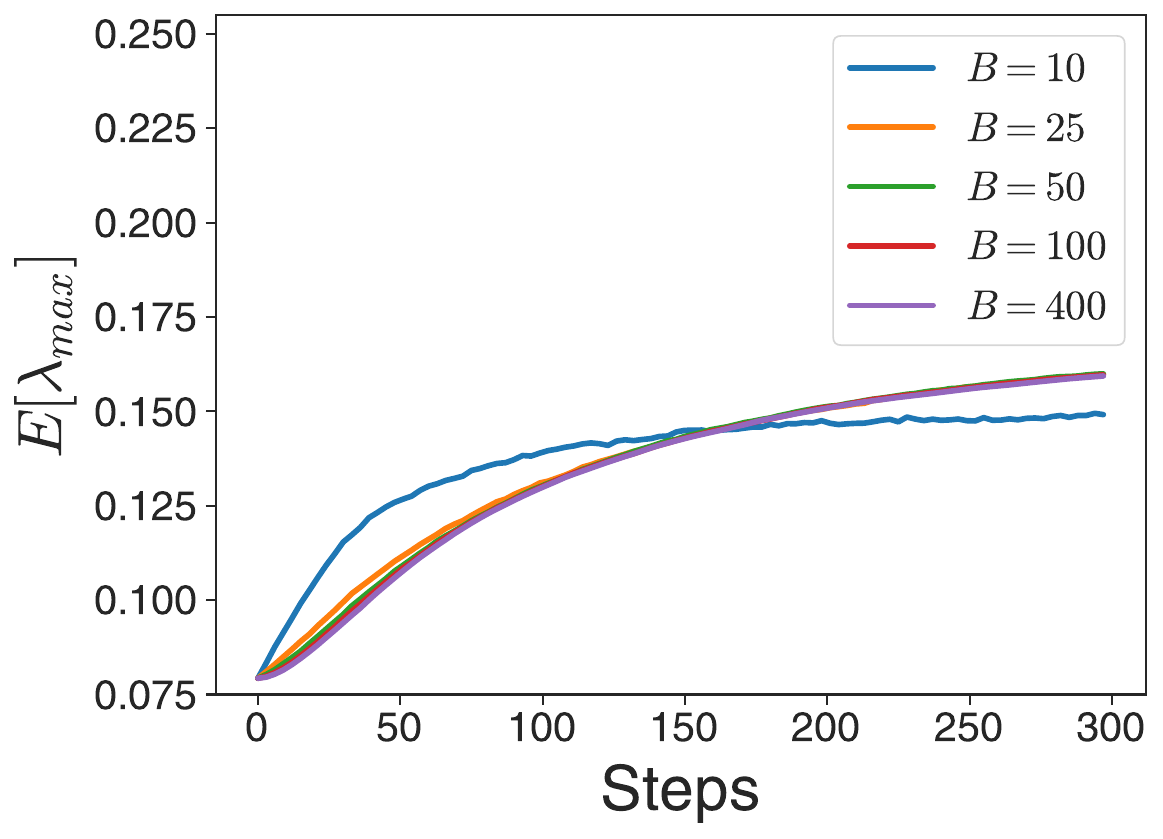} & \includegraphics[width=0.45\linewidth]{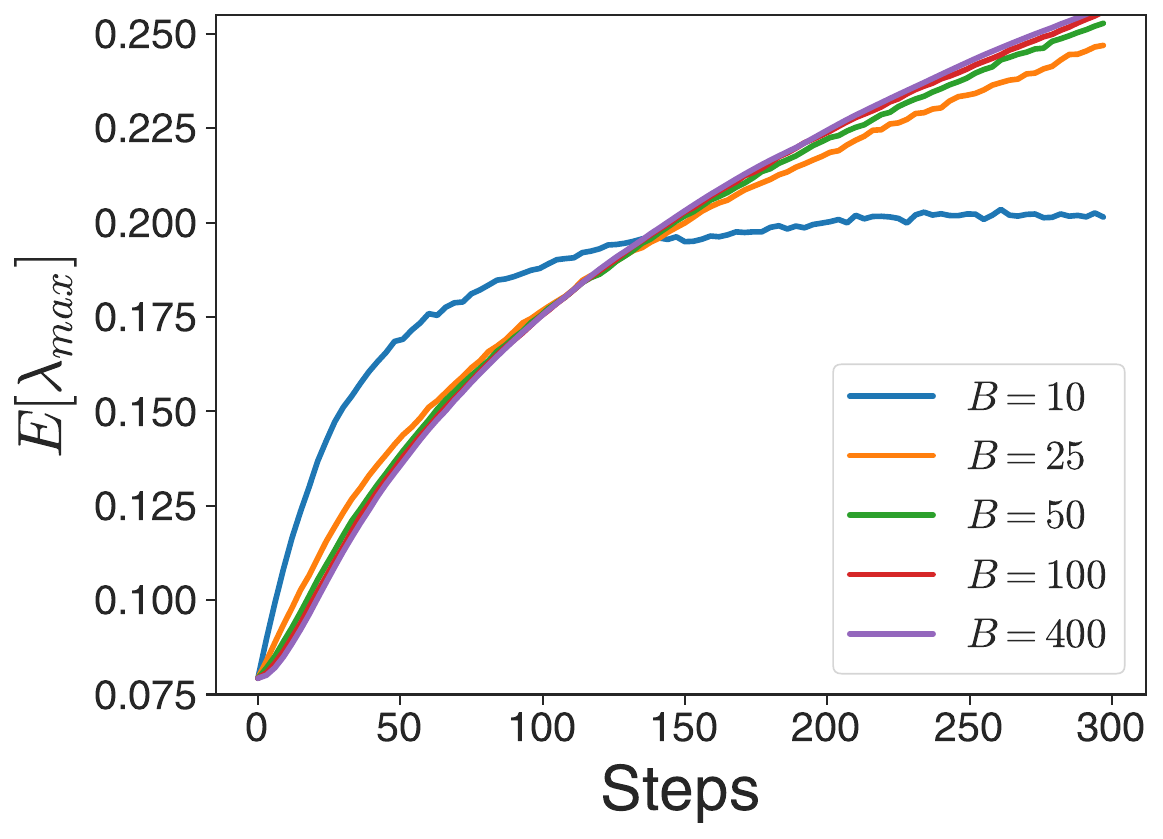}
    \end{tabular}
    \caption{Dynamics of largest Hessian eigenvalue in randomly initialized
    quadratic regression model for fixed learning rate, various batch sizes (averaged over $100$ seeds.
    Small batch size leads to increased initial sharpening, but faster
    saturation (left, $V(\sigma) = 1$). Batch size differences are amplified when $\Q$ is more
    heavily weighted in larger eigenmodes (right, $V(\sigma) = \sigma$). 
    }
    \label{fig:cons_sharpening_example}
\end{figure}

\section{Conservative sharpening}

\label{sec:con_sharp}

In this section, we analyze \emph{conservative sharpening} - the suppression of Hessian
eigenvalue increase with decreasing batch size. We will provide theoretical evidence that
SGD noise suppresses larger eigenvalues more than smaller ones. This phenomenology can
help explain conditions under which the S-EOS can be reached in non-convex settings.

\subsection{Quadratic regression model dynamics}

The most basic model of curvature dynamics requires non-linear models. The simplest such model
is the \emph{quadratic regression model} \cite{agarwala_secondorder_2022, zhu_quadratic_2022}.
The model can be derived by a second order Taylor expansion of $\f(\th)$. Under MSE loss,
it can be shown (Appendix \ref{app:quad_reg_model})
that the SGD dynamics can be written in terms of the residuals $\z_{t}$ and the
(time-varying) Jacobian $\J_{t}$ as
\begin{align}
\z_{t+1}-\z_{t} & = -\frac{\lr}{\B}\J_{t} \J_{t}^{\top}\pmat_{t}\z_{t}  +\frac{\lr^{2}}{2\B^{2}} \Q(\J_{t}^{\top}\pmat_{t}\z_{t},\J_{t}^{\top}\pmat_{t}\z_{t})\notag\\
\J_{t+1} -\J_{t} & = -\frac{\lr}{\B} \Q(\J_{t}^{\top}\pmat_{t}\z_{t}, \cdot)\,.
\label{eq:SGD_general}
\end{align}
Here $\Q$ is the $\D\times\P\times\P$ dimensional \emph{model curvature} tensor
$\frac{\partial^{2}\f}{\partial\th\partial\th'}$, taken as a fixed value at some
point $\th_{0}$. Equation \ref{eq:SGD_general} lets us understand the joint dynamics
of the loss and geometry directly.

We study the dynamics of the singular values of $\J$ (and therefore the eigenvalues of
$\ntk$)
at early times in the quadratic regression model of Equation \ref{eq:SGD_general}. We will
model $\z$ at initialization as
i.i.d. random and independent of $\J$ and $\Q$.
It has been previously observed that the model curvature tensor $\Q$ has more ``weight'' in
directions corresponding to the large NTK eigenvalues \citep{agarwala_secondorder_2022}.
Therefore we will model $\Q$ using a tensor product decomposition. Let $\w_{\al}$
be the left singular vector of $\J_{0}$ associated with singular value $\sigma_{\al}$.
Then we will decompose $\Q$ as:
\begin{equation}
\Q = \sum_{\al} \w_{\al}\otimes\m{M}_{\al}
\label{eq:q_def}
\end{equation}
where each $\m{M}_{\al}$ is a random $\P\times\P$
symmetric matrix with i.i.d. elements with mean $0$ and variance $V(\sigma_{\al})$,
for some non-decreasing function $V$. We use random matrices to model $\m{M}_{\al}$
to study the eigenvalue dynamics under some minimal high-dimensional structure.
Note that $V(\sigma) = 1$ is equivalent to
an i.i.d. initialization of each element of $\Q$.

\subsection{Estimating eigenvalue dynamics under SGD}

In order to understand the eigenvalue dynamics, we will assume that the eigenvectors of the NTK
change relatively slowly. This has been shown empirically for the large eigendirection of the Hessian
\citep{bao_hessian_2023}, which correlate with the large NTK eigendirections (which are of
particular interest here).
Consider the following estimators.
Let $\{(\w_{\al}, \v_{\al}, \sigma_{\al})\}$ be the set of triples that consists of
a pair of the left and right singular vectors of $\J_{0}$ associated with singular value
$\sigma_{\al}$. We define the equivalent
approximate singular value $\shat_{\al, t}$ and NTK eigenvalue $\lhat_{\al, t}$
as
\begin{equation}
\shat_{\al, t} \equiv \w_{\al}^{\tpose}\J_{t}\v_{\al},~\lhat_{\al, t} \equiv \w_{\al}^{\tpose}\J_{t}\J_{t}^{\tpose}\w_{\al}
\label{eq:sig_hat}
\end{equation}
Note that $\shat_{\al, 0}^{2} = \lhat_{\al, 0} = \sigma_{\al}^{2}$. If the singular
vectors change slowly, then this lets us approximate the eigenvalues.
We will also compute the \emph{discrete time derivatives}; for any timeseries
$\{x_{t}\}$ we write
\begin{equation}
\Delta_{1}x_{t} \equiv x_{t+1}-x_{t},~\Delta_{2}x_{t} \equiv x_{t+2}-2x_{t+1}+x_{t}.
\label{eq:discrete_deriv}
\end{equation}
We will show that the discrete first derivative increases with batch size while the
discrete second derivative decreases with batch size, dependent on $\sigma_{\al}$
and $V(\sigma_{\al})$.
Concretely, we prove the following theorem (Appendix \ref{app:cons_sharp_proofs}):
\begin{theorem}
Let $\{(\w_{\al}, \v_{\al}, \sigma_{\al})\}$ be the triple of left and right singular vectors
of $\J_{0}$ with the associated singular value. Let $\Q$ be a random tensor with the
decomposition given by Equation \ref{eq:q_def}. Let $\z_{0}$ have i.i.d. elements with
mean $0$ and variance $V_{z}$. If $\z$, $\J$, and $\Q$ are statistically
independent, we can compute the following average discrete time
derivatives (Equation \ref{eq:discrete_deriv})
of the estimators $\shat_{0}$ and $\lhat_{0}$ (Equation \ref{eq:sig_hat}):
\begin{equation}
\expect_{\pmat,\Q,\z}[\Delta_{1}\lhat_{\al,0}] =
\B^{-1}\P V_{z}\tr\left[\ntk_{t}\right]\lr^{2} V(\sigma_{\al})+O(\D^{-1})
\label{eq:first_deriv}
\end{equation}
\vspace{-2.5mm}
\begin{equation}
\expect_{\pmat, \Q, \z}[\Delta_{2}\shat_{\al,0}]  = d_{2}(\lr)
-\B^{-1}\D^{-2}\lr^{3}\sigma_{\al, t}^{3}V(\sigma_{\al}) \P V_{z} +O(\lr^4)
\label{eq:second deriv}
\end{equation}
where $d_{2}(\tilde{\lr}) = \expect_{\pmat, \Q, \z}[\Delta_{2}\shat_{\al,0}]$ for $\bfr = 1$
and $\lr = \tilde{\lr}$.
\label{thm:cons_sharp}
\end{theorem}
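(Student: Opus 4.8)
The plan is to use the slowly-varying singular-vector assumption so that both estimators become \emph{fixed} quadratic/bilinear forms in the time-varying Jacobian, and then to average the resulting polynomials in $\J_{t}$ over the three independent noise sources $\pmat$, $\Q$, $\z$ one at a time. The single fact that makes the $\Q$-average tractable is that, by the decomposition in Equation \ref{eq:q_def} and orthonormality of the $\w_{\al}$, one step reads $\J_{t+1}^{\top}\w_{\al} = \J_{t}^{\top}\w_{\al} - \frac{\lr}{\B}\m{M}_{\al}\g_{t}$ with $\g_{t}=\J_{t}^{\top}\pmat_{t}\z_{t}$, so only the single block $\m{M}_{\al}$ appears in the relevant projections, and we may use $\expect_{\Q}[\m{M}_{\al}]=0$ together with $\expect_{\Q}[\m{M}_{\al}^{2}]=\P V(\sigma_{\al})\Id$.

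For the first derivative (Equation \ref{eq:first_deriv}) I would write $\lhat_{\al,0}=\w_{\al}^{\top}\J_{0}\J_{0}^{\top}\w_{\al}$ and expand $\Delta_{1}\lhat_{\al,0}=2\w_{\al}^{\top}\J_{0}\,\dl\J_{0}^{\top}\w_{\al}+\w_{\al}^{\top}\dl\J_{0}\dl\J_{0}^{\top}\w_{\al}$ with $\dl\J_{0}=\J_{1}-\J_{0}$. Since at $t=0$ the gradient $\g_{0}$ is independent of $\Q$, the linear cross term vanishes under $\expect_{\Q}$, and the quadratic term becomes $\frac{\lr^{2}}{\B^{2}}\g_{0}^{\top}\expect_{\Q}[\m{M}_{\al}^{2}]\g_{0}=\frac{\lr^{2}}{\B^{2}}\P V(\sigma_{\al})\,\g_{0}^{\top}\g_{0}$. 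Averaging over batch and residual with $\pmat_{0}^{2}=\pmat_{0}$, $\expect_{\pmat}[\pmat_{0}]=\bfr\Id$ and $\expect_{\z}[\z_{0}\z_{0}^{\top}]=V_{z}\Id$ gives $\expect[\g_{0}^{\top}\g_{0}]=V_{z}\bfr\,\tr(\J_{0}\J_{0}^{\top})=V_{z}\B\,\tr(\ntk_{0})$, using $\ntk_{0}=\D^{-1}\J_{0}\J_{0}^{\top}$ and $\bfr\D=\B$. This collapses $\lr^{2}/\B^{2}$ to $\lr^{2}/\B$ and yields the stated leading term; the $O(\D^{-1})$ error collects the finite-size corrections from the slowly-varying-vector approximation and cross-eigenmode couplings.

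For the second derivative (Equation \ref{eq:second deriv}) I would first note the clean telescoping identity $\Delta_{2}\shat_{\al,0}=-\frac{\lr}{\B}\v_{\al}^{\top}\m{M}_{\al}(\g_{1}-\g_{0})$, which follows because $\shat_{\al,t+1}-\shat_{\al,t}=-\frac{\lr}{\B}\v_{\al}^{\top}\m{M}_{\al}\g_{t}$. The heart of the computation is a Taylor expansion of $\g_{1}=\J_{1}^{\top}\pmat_{1}\z_{1}$ in $\lr$, inserting the one-step updates for $\J$ and $\z$ from Equation \ref{eq:SGD_general}. Because the prefactor already carries one factor of $\m{M}_{\al}$ and odd moments of $\m{M}_{\al}$ vanish, only the parts of $\g_{1}-\g_{0}$ linear in $\m{M}_{\al}$ survive after $\expect_{\Q}$; there are exactly two, from the Jacobian update acting on $\z_{0}$ (order $\lr$) and on the first-order residual change $-\frac{\lr}{\B}\H_{0}\pmat_{0}\z_{0}$ (order $\lr^{2}$), while the curvature term in the $\z$-update is quadratic in $\m{M}$ and drops. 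Using $\expect_{\Q}[\m{M}_{\al}^{2}]=\P V(\sigma_{\al})\Id$, $\v_{\al}^{\top}\g_{0}=\sigma_{\al}\w_{\al}^{\top}\pmat_{0}\z_{0}$ and $\w_{\al}^{\top}\H_{0}=\sigma_{\al}^{2}\w_{\al}^{\top}$, the $O(\lr^{2})$ piece reduces to a batch-independent term $\tfrac{\lr^{2}}{\D^{2}}\P V(\sigma_{\al})\sigma_{\al}V_{z}$, absorbed into $d_{2}(\lr)$, and the $O(\lr^{3})$ piece, after averaging the two independent minibatches $\pmat_{0},\pmat_{1}$ and the residual, evaluates to $-\B^{-1}\D^{-2}\lr^{3}\sigma_{\al}^{3}V(\sigma_{\al})\P V_{z}$, the claimed stochastic correction.

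The step I expect to be the main obstacle is the second-derivative bookkeeping: correctly enumerating which monomials in the triple expansion over $(\pmat,\Q,\z)$ survive, keeping the two distinct minibatch draws straight, and checking that the negatively correlated diagonal structure of a single $\pmat$ (encoded by $\tl{\bfr}$) does not enter the surviving terms because the $\z$-average collapses them to the diagonal. One must also verify that the batch-independent $O(\lr^{2})$ contribution is exactly what $d_{2}(\lr)$ absorbs, so that the residual is precisely the batch-dependent $1/\B$ term. Controlling the truncation of the $\lr$-expansion then supplies the $O(\lr^{4})$ remainder, while the slowly-varying singular-vector assumption supplies the $O(\D^{-1})$ remainder in the first identity.
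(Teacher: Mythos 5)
Your proposal follows essentially the same route as the paper's proof: expand the discrete derivatives via the SGD updates, use the decomposition $\Q=\sum_{\gm}\w_{\gm}\otimes\m{M}_{\gm}$ and orthonormality of the $\w_{\gm}$ so that only the single block $\m{M}_{\al}$ appears in the relevant projections, kill the terms odd in $\m{M}_{\al}$, and apply $\expect_{\Q}[\m{M}_{\al}^{2}]=\P V(\sigma_{\al})\Id$ before averaging over $\pmat$ and $\z$. Your compact identity $\Delta_{2}\shat_{\al,0}=-\frac{\lr}{\B}\v_{\al}^{\top}\m{M}_{\al}(\g_{1}-\g_{0})$ is exactly the paper's expansion of $\J_{t+2}-2\J_{t+1}+\J_{t}$ projected onto $(\w_{\al},\v_{\al})$, and your choice to average over $\z$ before $\pmat_{0}$ legitimately collapses the $\tl{\bfr}$ bookkeeping that the paper carries through its averaging lemma; the two surviving contributions you identify reproduce the stated $O(\lr^{2})$ deterministic piece and the $-\B^{-1}\D^{-2}\lr^{3}\sigma_{\al}^{3}V(\sigma_{\al})\P V_{z}$ correction.

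One misstatement in the bookkeeping: the curvature term $\frac{\lr^{2}}{2\B^{2}}\Q(\g_{0},\g_{0})$ in the residual update is \emph{linear} in $\Q$ (since $\g_{0}$ does not involve $\Q$ at $t=0$), not quadratic, so when paired with the $\m{M}_{\al}$ prefactor it does not vanish under $\expect_{\Q}$; it contributes at $O(\lr^{3})$ through $\expect_{\Q}[\m{M}_{\al}\u\,(\g_{0}^{\top}\m{M}_{\al}\g_{0})]=2V(\sigma_{\al})\,\g_{0}(\u\cdot\g_{0})$. The correct reason it may be dropped is that this contribution carries no factor of $\P$, so it is subleading by $O(\P^{-1})$ relative to the retained term, which is enhanced by $\expect[\m{M}_{\al}^{2}]=\P V(\sigma_{\al})\Id$; the paper likewise absorbs it into the deterministic remainder. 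This does not affect your final expressions, but the justification as written would not survive scrutiny.
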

For small batch size $\B$, the first derivative is positive. This depends
on the projection $V(\sigma_{\al})$, but the average eigenvalue of $\ntk$. In contrast,
the second derivative is smaller for smaller $\B$ (and can even become negative), and
also shows sensitivity to the particular singular value
$\sigma_{\al}^{3}$. This suggests that the deviations
due to SGD are more pronounced for eigenmodes with larger model curvature $\Q$, but also
that conservative sharpening is stronger for larger eigenmodes.

We can see this in numerical simulations of randomly initialized $\{\z, \J, \Q\}$ as well.
For a ``flat'' weighting $V(\sigma) = 1$, at small batch sizes the largest eigenvalue
increases more quickly than the full batch case, but its growth slows down quicker
(Figure \ref{fig:cons_sharpening_example}, left). This effect is even stronger for the
correlated weighting $V(\sigma) = \sigma$
(Figure \ref{fig:cons_sharpening_example}, right). This supports the claim that
conservative sharpening depends on not just batch size, but the spectrum of $\Q$ as well.
Our results suggest that conservative sharpening can suppress the large eigenvalues
more than the smaller ones - preventing small batch size models from reaching
the deterministic EOS while leaving the S-EOS attainable.

\section{Experiments on neural networks}

\label{sec:experiments}

We conducted experimental studies on neural networks to understand how the noise kernel
norm $\knorm$ behaves in the convex setting.
We will show that for small batch sizes,
$\knorm$ is a more informative object to study than $\lam_{max}$, the key measurement in the
full batch setting. We show that the best training outcomes come from settings where $\knorm$
is \emph{below} the S-EOS, unlike the full batch case where best training happens \emph{at} the EOS.

\subsection{Fully connected network, vanilla SGD}

\label{sec:fcn_experiments}

We begin by training a fully connected network on $2500$ examples of MNIST with MSE loss.
The details of the setup can be found in Appendix \ref{app:mnist_details}. In this setting
we can compute $\knorm$ exactly and efficiently. We trained with a variety of batch sizes
$\B$ and learning rates $\lr$ to probe the dependence of learning dynamics on each of these
hyperparameters.

Plotting training loss trajectories for fixed, small $\B$ and varying $\lr$ elucidates some of the
key phenomenology (Figure \ref{fig:mnist_loss_kern_dyn}, left, for $\B = 1$). For very small
$\lr$, the loss decreases smoothly but slowly. For larger $\lr$, the optimization is more
efficient, and similar over a range of learning rates. Finally, for larger learning rates,
the loss decreases slowly, until for the largest learning rates the loss diverges.

\begin{figure}[t]
    \centering
    \begin{tabular}{cc}
    \includegraphics[width=0.4\linewidth]{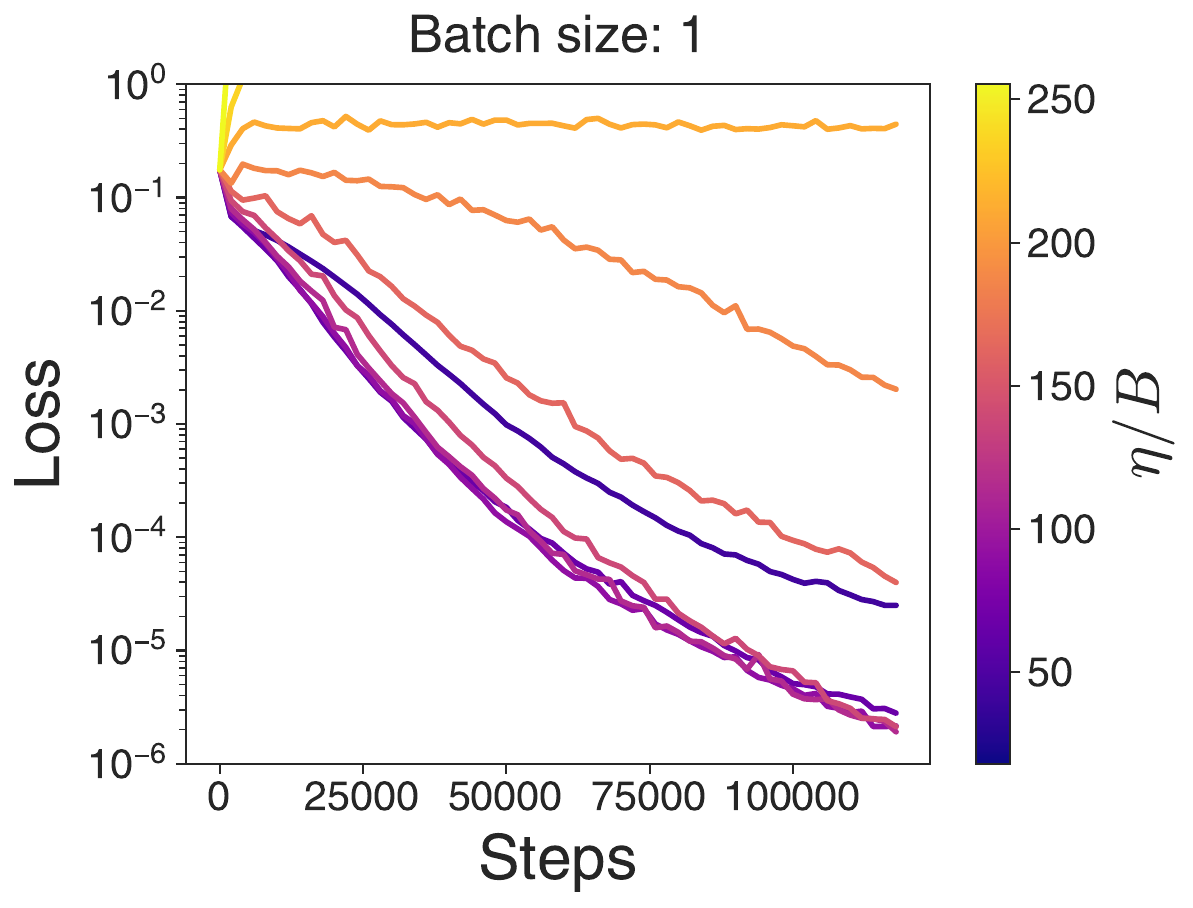} & \includegraphics[width=0.4\linewidth]{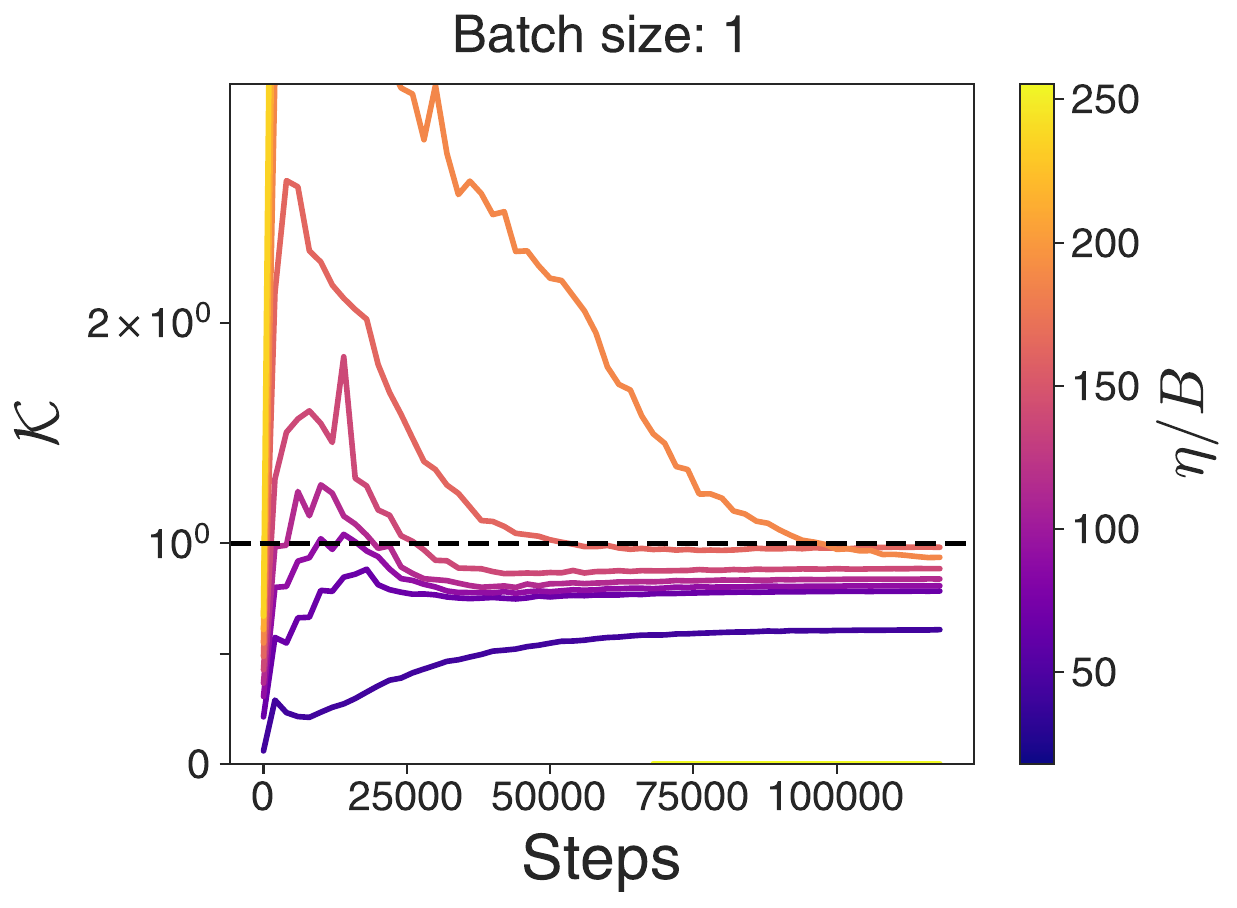}
    \end{tabular}
    \caption{Dynamics of loss (left) and noise kernel norm $\knorm$ (right) for a FCN trained on MNIST, various learning rates, batch size $1$. For small
    learning rates, loss decrease is slow and kernel norm is well below $1$. For intermediate learning rates, $\knorm$ is larger than the
    critical value of $1$, but then decreases and stabilizes below $1$ and
    loss decreases quickly. For larger learning rates, $\knorm$ stays
    above $1$ for a long period and loss decreases slowly.
    }
    \label{fig:mnist_loss_kern_dyn}
\end{figure}

These different regimes are reflected in the dynamics of $\knorm$ as well (Figure
\ref{fig:mnist_loss_kern_dyn}, right). At small $\lr$,
$\knorm$ is small. This corresponds to a low noise regime where the steps are being taken
conservatively. As $\lr$ increases, we begin to see the emergence of S-EOS stabilization -
$\knorm$ is initially increasing, attains values above the S-EOS boundary $\knorm = 1$,
but eventually stabilizes below $1$. For the poorly optimizing trajectories at large $\lr$,
$\knorm$ stays above $1$ for a longer time.

These experiments suggest that there is a
negative feedback effect which prevents the runaway growth of $\knorm$ at intermediate
$\lr$, and eventually drives it below the critical threshold. Unlike the
deterministic EOS, the S-EOS involves only a single, multistep return to the critical
value - unlike
the period $2$ quasi-stable oscillations around the boundary
which characterize the deterministic EOS phase
\citep{agarwala_secondorder_2022, damian_selfstabilization_2022}.

\begin{figure}[ht]
    \centering
    \begin{tabular}{cc}
    \includegraphics[height=0.34\linewidth]{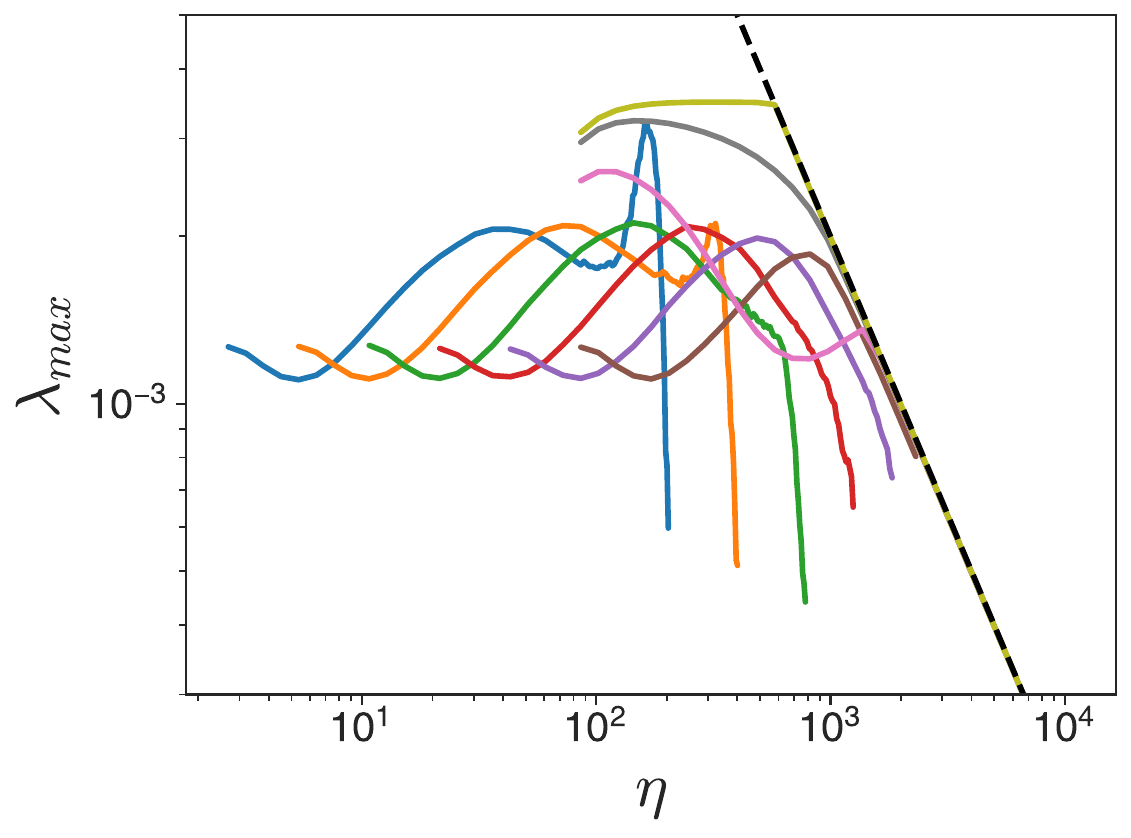} & \includegraphics[height=0.34\linewidth]{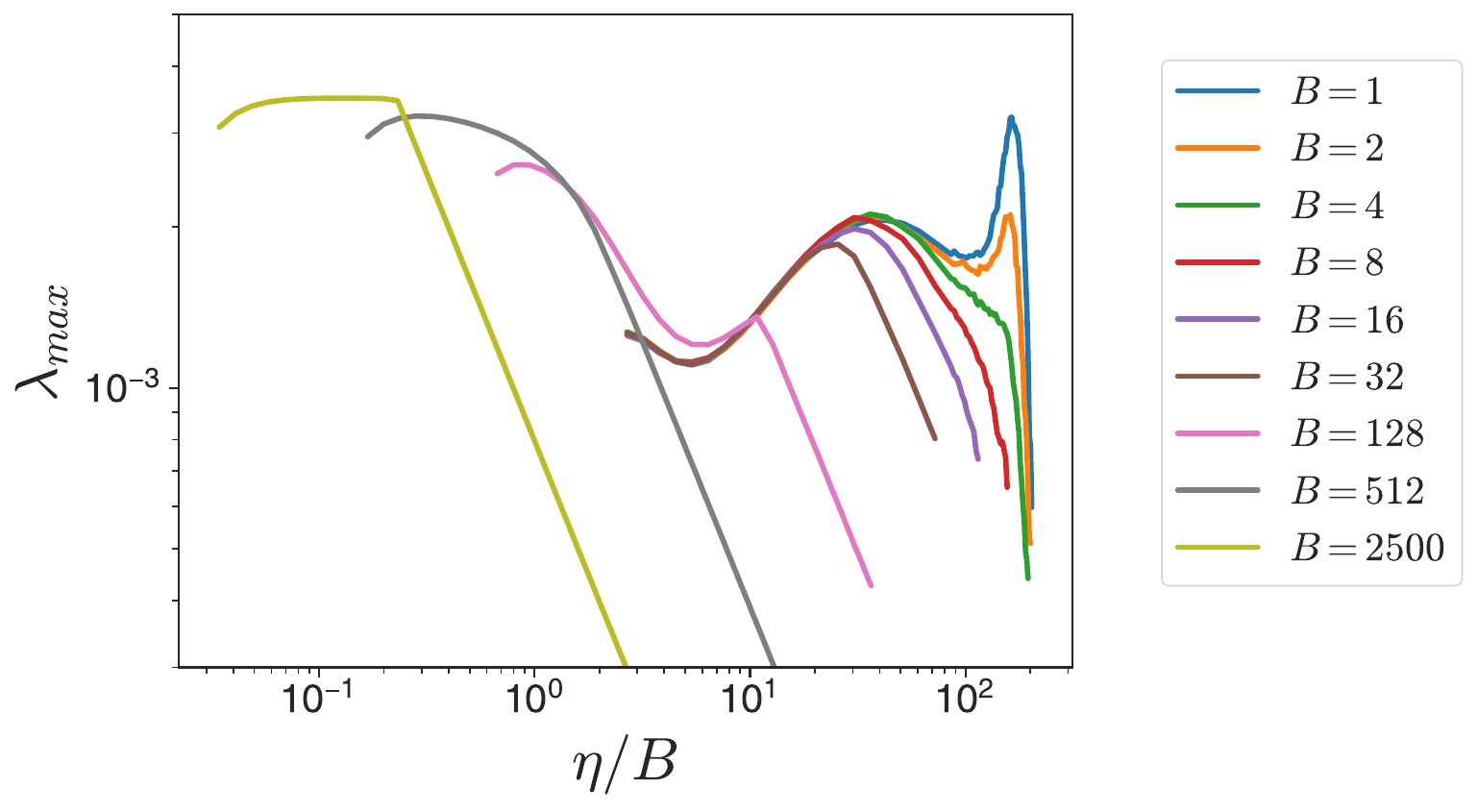}
    \end{tabular}
    \caption{$\lambda_{max}$ at convergence in MNIST experiment. Left: for large $\B$, final values of $\lam_{max}$ are similar for same $\lr$, especially when dynamics reaches EOS as $2/\lr$ (black dashed line); for small $\B$, $\lr$ is not predictive
    of $\lam_{max}$ and EOS is not reached. Right: quantities are similar
    for equal $\lr/\B$ for small $\B$ and small $\lr/\B$.
    }
    \label{fig:mnist_max}
\end{figure}

We also studied the dynamics of the largest NTK eigenvalue $\lam_{max}$ as a function of
batch size and learning rate. For larger batch sizes, the final value of
$\lam_{max}$ stabilizes at the deterministic EOS, $2/\lr$, over a wide range of learning
rates (Figure \ref{fig:mnist_max}, left). However, for small batch sizes such large learning
rates lead to divergent training.
In this regime, it is more informative to plot the dynamics
as a function of $\lr/\B$ (Figure \ref{fig:mnist_max}, right). All batch sizes
follow the $\B = 1$ curve for small and intermediate $\lr/\B$, but there
are batch-size dependent effects for larger learning rates.

For small $\B$, it is more informative to study the final value of the noise kernel norm
$\knorm_{f}$ after a fixed number of epochs of training (Figure \ref{fig:mnist_kern_phase_error}, left,
480 epochs).
For small values of $\lr/\B$, $\knorm_{f}$ is small, as expected,
and there is consistent behavior across $\B$ for constant $\lr/\B$. As $\lr/\B$ increases,
there is a regime where the kernel norm takes on values in the range $[0.7, 0.9]$ over
a large range of learning rates. In this regime, there is consistency across constant
$\lr/\B$, over a limited range in $\B$ - dynamics for larger $\B$ now diverge.

In the small batch regime, $\knorm_{f}$ is also highly informative
of the final training loss reached (Figure \ref{fig:mnist_kern_phase_error}, middle). If $\knorm_{f}$
is small, the dynamics has low noise but doesn't get as far in the given number of epochs -
the choice of stepsize is too conservative given the noise level.
If $\knorm_{f}$ is too close to $1$, convergence also seems to slow down - the steps are large
and generate too much noise. In this setting there
appears to a good range of $\knorm_{f}\in[0.6, 0.8]$ where the learning
rate is aggressive enough to drive the loss down considerably, but not enough to cause
noise-induced convergence issues. In contrast, the maximum eigenvalue is a poor
predictor of the final loss, even when scaled by the learning rate (Figure \ref{fig:mnist_kern_phase_error}, right).



\begin{figure}[ht]
    \centering
\begin{tabular}{ccc}
\includegraphics[width=0.33\linewidth]{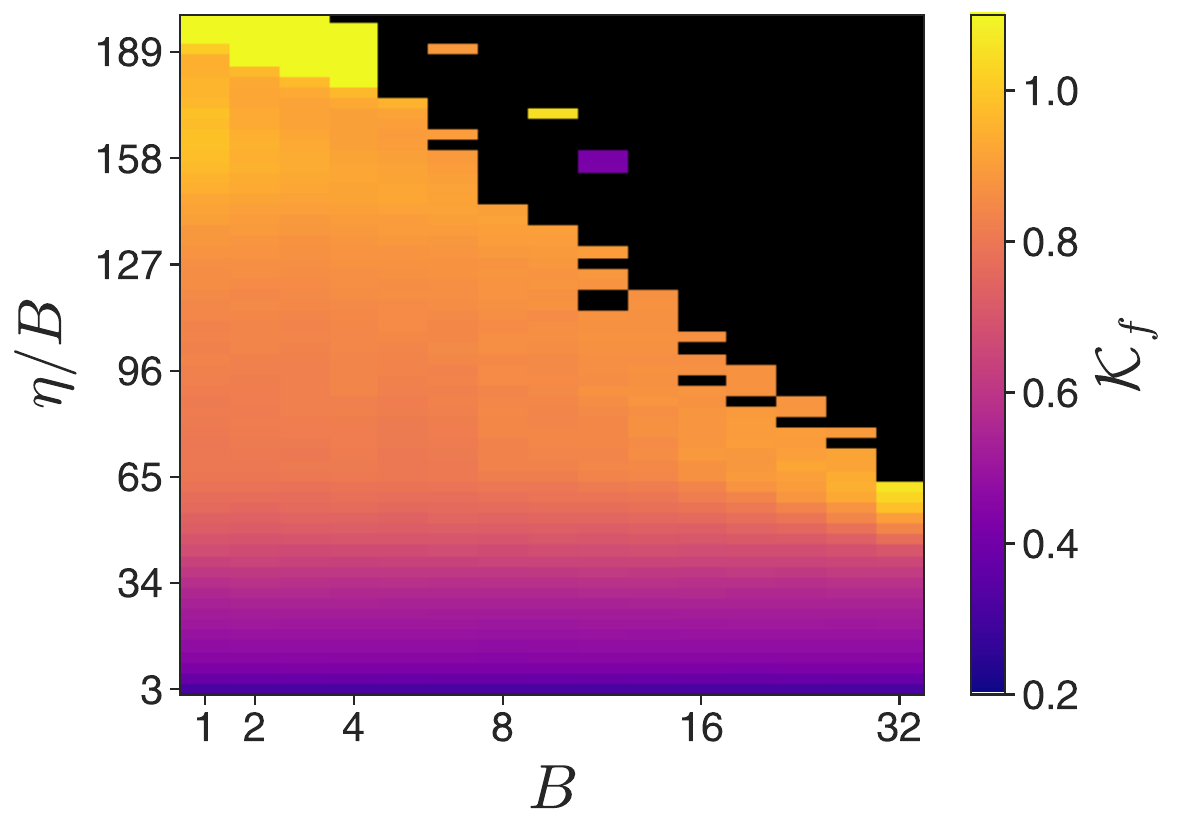} & 
    \includegraphics[width=0.33\linewidth]{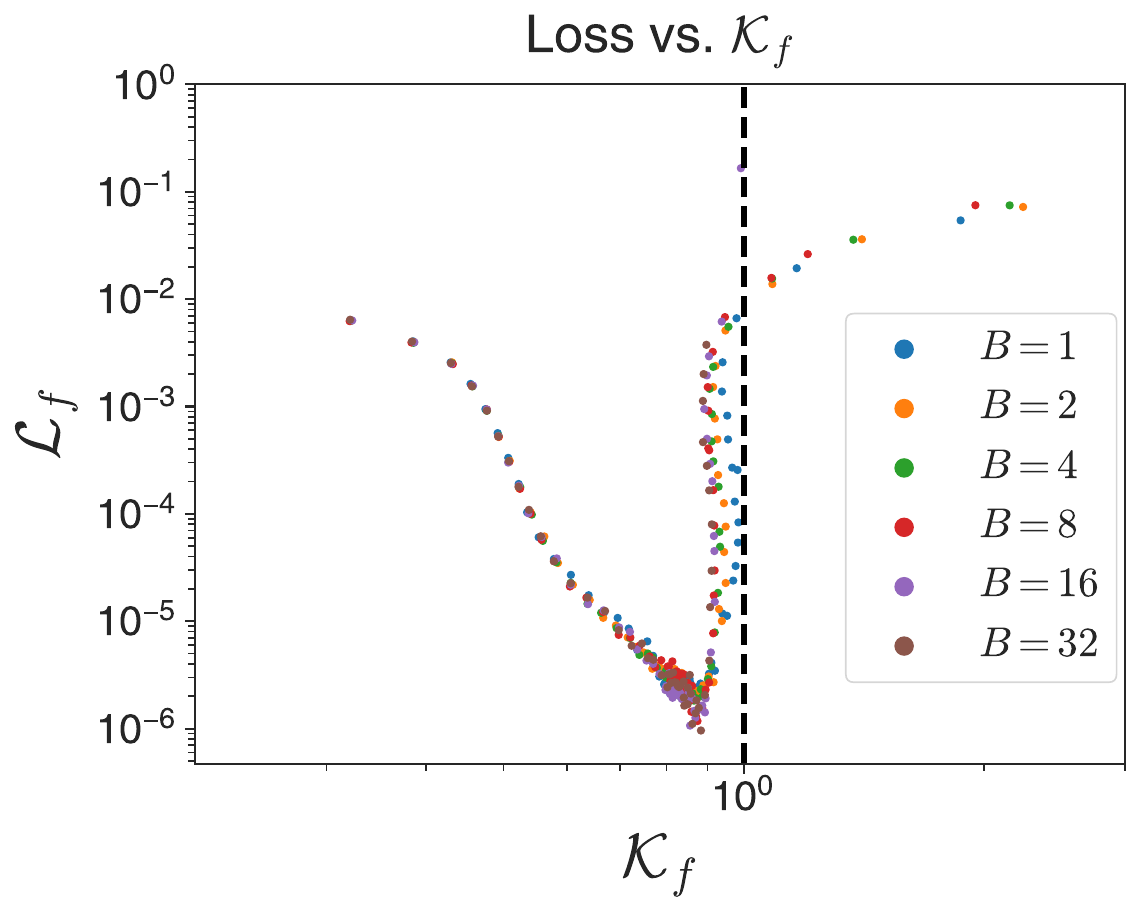} & \includegraphics[width=0.33\linewidth]{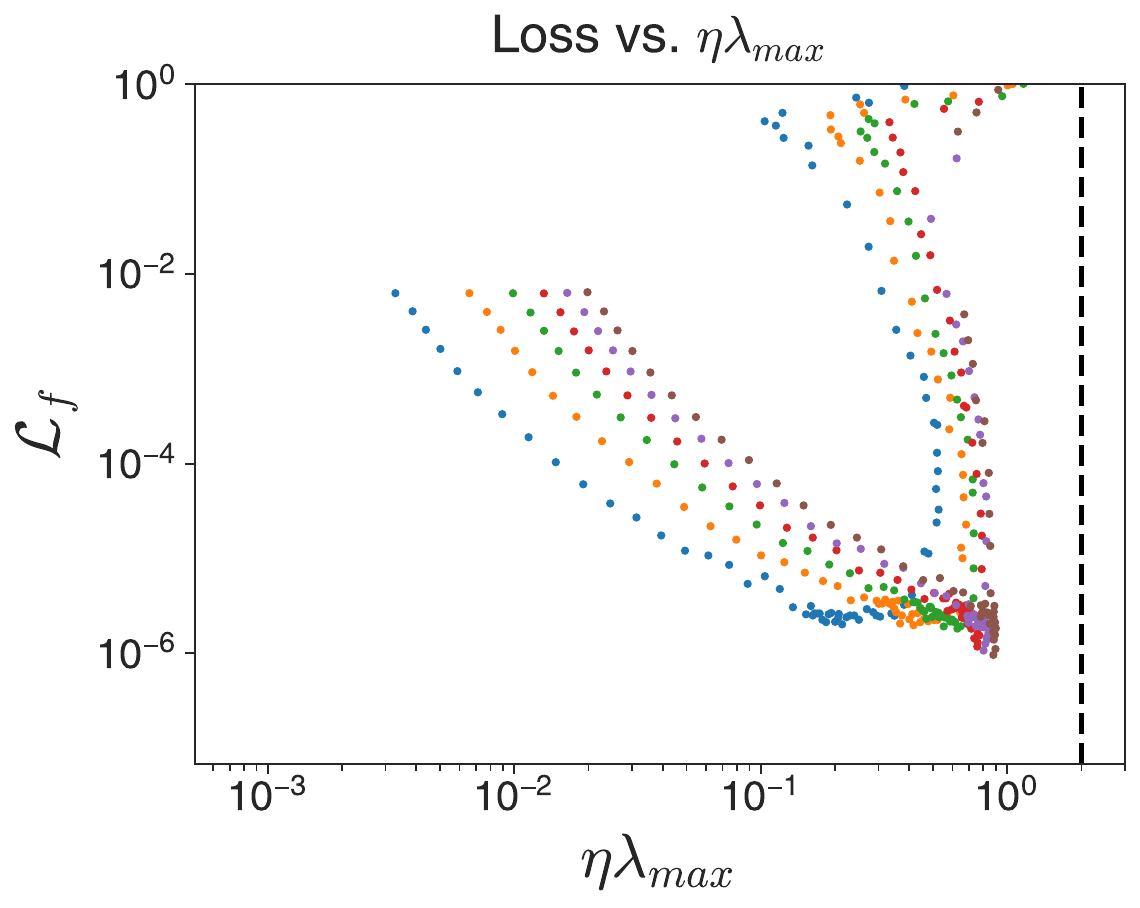}
    \end{tabular}

    \caption{Final noise kernel norm $\knorm_{f}$ is well predicted by $\lr/\B$ for
    fixed epoch training, and attains a value near $1$ over a large range of learning rates
    (left). Final loss is poor for $\knorm_{f}\ll 1$ (conservative steps) but also for $\knorm$
    too close to $1$ (aggressive steps) (middle). $\lam_{max}$ is not a good predictor
    of training loss (right).}
   \label{fig:mnist_kern_phase_error}
\end{figure}

\subsection{Momentum and learning rate schedule}

\label{sec:cifar_exp}

What does $\knorm$ look like in a bigger model where exact computation is intractable?
And what happens when common methods like momentum, learning rate schedule, and weight
norm are added?
In order to probe these questions, we ran experiments on ResNet-18 trained on CIFAR10, with
MSE loss, trained with momentum cosine learning rate schedule, and $L^{2}$ regularizer .
The experimental details can be found in Appendix \ref{app:cifar_details}.

Since the exact $\knorm$ requires analysis of a a $5\cdot 10^{5}\times5\cdot 10^{5}$
dimensional matrix, we used a trace estimator.
We computed additional corrections due to momentum and the
$L^{2}$ regularizer (see Appendices \ref{app:s_eos_mom} and \ref{app:s_eos_l2} for details).
We arrived at the estimator
\begin{equation}
\hat{\knorm}_{mom} \equiv \frac{\lr}{2\al\B}\tr\left[\ntk\right]
\end{equation}
where the momentum parameter $\mu = 1-\al$. In all our experiments,
$\al = 0.1$.


\begin{figure}[ht]
    \centering
    \begin{tabular}{ccc}
    \includegraphics[width=0.32\linewidth]{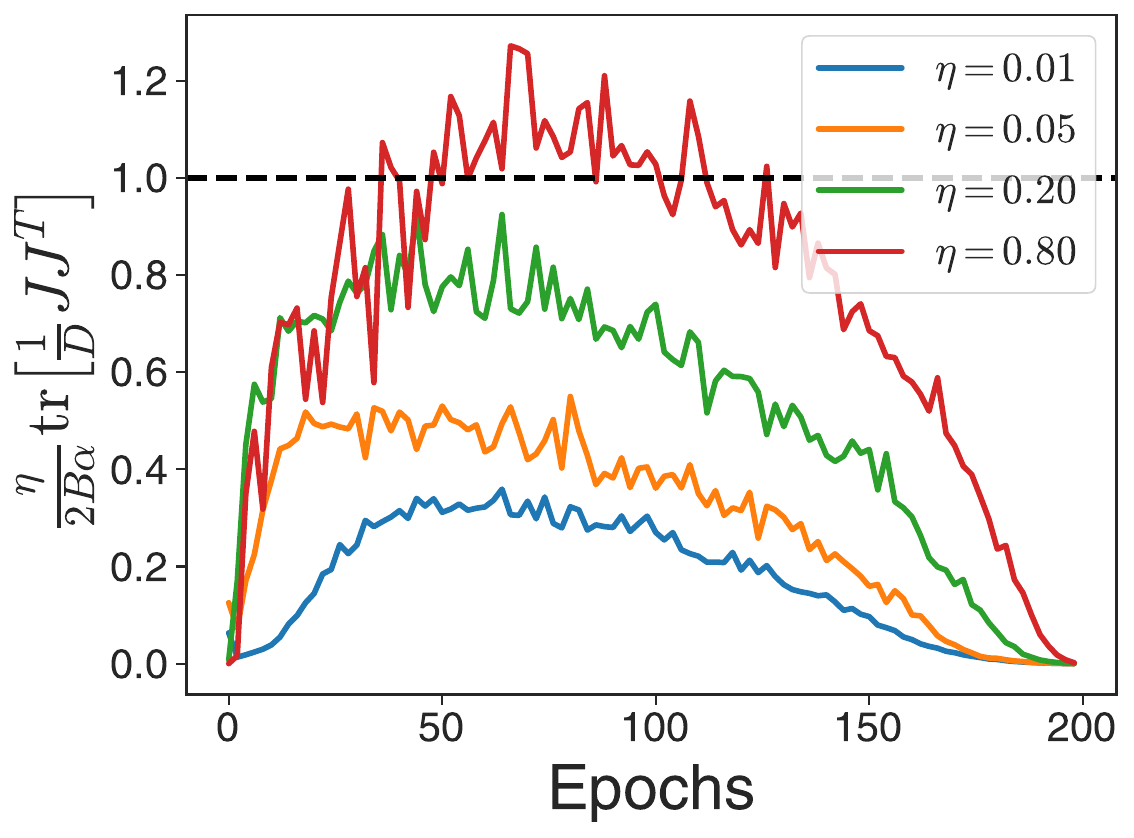} &
     \includegraphics[width=0.32\linewidth]{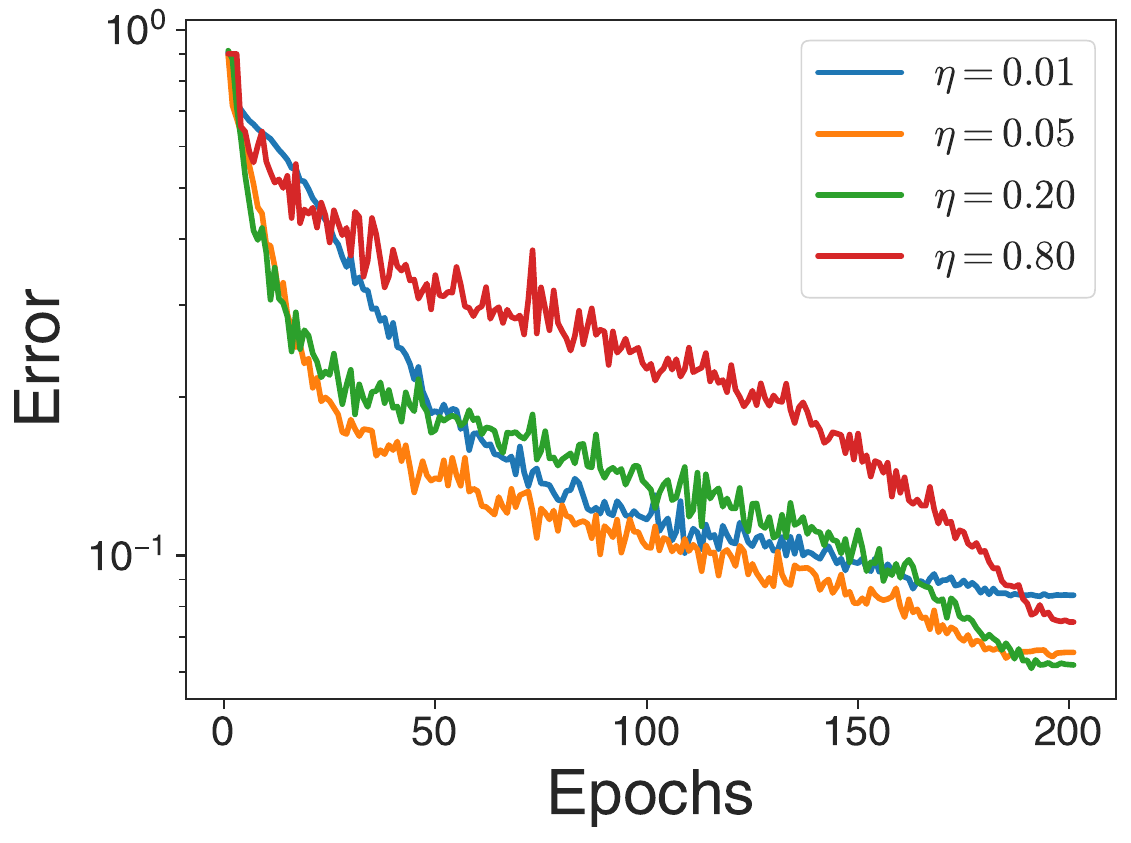} & \includegraphics[width=0.32\linewidth]{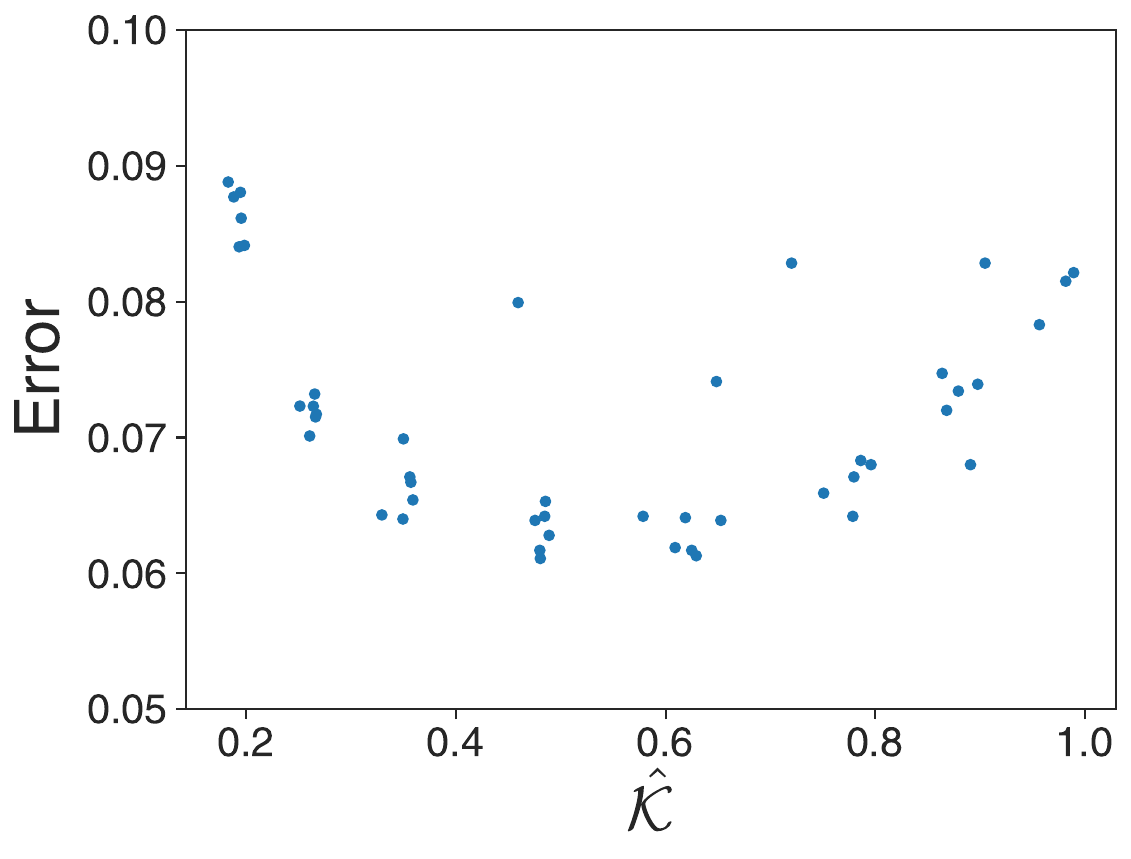}
    \end{tabular}
    \caption{$\knorm$ for ResNet-18 trained on CIFAR10 with momentum, cosine learning rate
    schedule, and $L^2$ regularization increases, remains flat at an $O(1)$ value, then
    decreases (left). Small base learning rate shows slower initial and late
    time error improvements, while large learning rate shows slow early time error improvements (middle). Best error is achieved for settings
    where median $\knorm$ remains within the interval $[0.4, 0.8]$ (right).
    }
    \label{fig:cifar_kern_norm}
\end{figure}


We trained over a variety of learning rates and batch sizes. We focus primarily on batch size
$128$ here; results for other batch sizes are similar (Appendix
\ref{app:cifar_details}). We found that the estimator $\hat{\knorm}_{mom}$ starts low, increases
dramatically at early times, levels off for much of training, and then decreases at late
times (Figure \ref{fig:cifar_kern_norm}, left). It remains $O(1)$ over a factor of $100$
variation of the base learning rate. The decrease at late times is primarily due to the
learning rate schedule; the unnormalized NTK trace is slowly increasing for most of
training (Appendix \ref{app:raw_ntk_tr}). The use of the NTK is key here;
the normalized Hessian trace has very different, non $O(1)$ dynamics (Appendix \ref{app:hess_trace}).

Even with learning rate schedules, $\knorm$ can be a useful tool to understand
aspects of learning dynamics.
At large learning rates where $\hat{\knorm}$ is near
$1$ for much of training, the test error decreases only slowly at that intermediate stage,
before dropping quickly at late times where the schedule pushes $\hat{\knorm}$ low
(Figure \ref{fig:cifar_kern_norm}, middle, red curve).
In contrast for a trajectory with low learning rate, the decrease is more smooth but still
slower overall (blue curve). The intermediate learning rates with lowest test error also
correspond to a median $\hat{\knorm}$ value in the range $[0.4, 0.8]$ (Figure \ref{fig:cifar_kern_norm}, right).
We repeated the experiments on an MLP-Mixer \texttt{S/16} architecture and found
similar results (Apppendix \ref{app:mlp-mixer-cifar10}).

\section{Discussion}

Our theoretical analysis and experiments suggest that there indeed is a stochastic edge of
stability, which can be derived simply at in the case of MSE loss. Non-linear models can
generate negative feedback to stabilize from above the S-EOS to below it; however, this
stabilization happens once on a long timescale, rather than the tight period $2$ quasi-stable
oscillations of the full batch EOS.

The approximate form of $\knorm$ in Equation \ref{eq:knorm_approx} scales as $\lr/\B$, which
is in accordance with both SDE-based analyses of SGD \citep{smith_don_2017, jastrzebski_three_2018},
as well as practical observations of the ``linear scaling rule'' regime where
scaling learning rate proportional to batch size achieves good performance
\citep{goyal_accurate_2018}. Our constant-epoch experiments on the MNIST example suggest that
there may be a link between the breakdown of the universal scaling regime of
$\knorm$ and the breakdown of the ``perfect scaling'' regime of steps to target
scaling as $\B^{-1}$ in constant epoch experiments \citep{shallue_measuring_2019}.

One advantage of the definition of $\knorm$ is the fact that it is scaled properly
independent of model and dataset size. Our experiments suggest that even in the
non-convex setting it is still meaningful. The full Hessian can suffer from sensitivity to
$L^{2}$ regularization and negative eigenvalues, and poor scaling with model size.
Our work naturally motivates the study of the NTK, which is often used to approximate
the loss Hessian in theoretical analyses \citep{wu_implicit_2023a}.

Another interesting result of our experiments is the observation that $\knorm$ can be a good 
predictor of training outcomes. Very small $\knorm$ ``wastes'' steps, while $\knorm$ close to
the S-EOS slows down \emph{all} eigenmodes and leads to poor optimization.
In a high dimensional convex setting this is the Malthusian exponent regime studied in \citet{paquette_sgd_2021}.
This is in contrast
to the full batch EOS where only one eigenmode converges slowly, leading to overall
good optimization. We hypothesize that these effects may be important in the compute
limited regimes where large models are often trained.

Both the definition of $\knorm$ and the analysis of conservative sharpening suggest that
in order to understand SGD dynamics, one must understand the \emph{distribution} of
NTK/Hessian eigenvalues. In fact our analysis of conservative sharpening suggests that the
distribution of \emph{model curvatures} is also crucial in understanding how the loss
landscape geometry evolves in SGD.



One key future direction is to extend some of the analyses to more general loss functions and
optimizers. Using local linearization of the loss function (Appendix \ref{app:non_mse_loss}) suggests that the Gauss-Newton trace may be a good estimator for non-MSE loss;
experiments on ResNet50 and ViT trained on Imagenet with cross-entropy loss show that this approximation captures some aspects of the dynamics but is quantitatively limited
(Appendix \ref{app:imagenet_experiments}).
A more sophisticated approach would be to adapt existing approaches to more general loss functions
to compute a better characterization of the EOS \citep{collins-woodfin_hitting_2023}.

Another extension is to develop algorithms that either control or use $\knorm$. Regularizing
the trace of the Gauss-Newton has been shown to have beneficial effects
\cite{dauphin_neglected_2024}, similar to the benefits of SAM at low batch size \cite{foret_sharpnessaware_2022, wen_how_2023}. A greater understanding of conservative
sharpening may lead to other ways to control SGD noise.

Maybe the most interesting direction is the prospect of using information about
$\knorm$ to dynamically choose step sizes. Traditional step size tuning methods often
fail dramatically in deep learning \citep{roulet_interplay_2023}, and some of that
failure may be due to not incorporating information relevant to SGD. This will require
further refining estimators of $\knorm$ or equivalents so the statistics can
be updated efficiently and frequently enough to be useful.


\bibliography{sgd_eos}

\begin{thebibliography}{35}
\providecommand{\natexlab}[1]{#1}
\providecommand{\url}[1]{\texttt{#1}}
\expandafter\ifx\csname urlstyle\endcsname\relax
  \providecommand{\doi}[1]{doi: #1}\else
  \providecommand{\doi}{doi: \begingroup \urlstyle{rm}\Url}\fi

\bibitem[Ghorbani et~al.(2019)Ghorbani, Krishnan, and
  Xiao]{ghorbani_investigation_2019}
Behrooz Ghorbani, Shankar Krishnan, and Ying Xiao.
\newblock An {{Investigation}} into {{Neural Net Optimization}} via {{Hessian
  Eigenvalue Density}}.
\newblock In \emph{Proceedings of the 36th {{International Conference}} on
  {{Machine Learning}}}, pages 2232--2241. PMLR, May 2019.

\bibitem[Gilmer et~al.(2022)Gilmer, Ghorbani, Garg, Kudugunta, Neyshabur,
  Cardoze, Dahl, Nado, and Firat]{gilmer_loss_2022}
Justin Gilmer, Behrooz Ghorbani, Ankush Garg, Sneha Kudugunta, Behnam
  Neyshabur, David Cardoze, George~Edward Dahl, Zachary Nado, and Orhan Firat.
\newblock A {{Loss Curvature Perspective}} on {{Training Instabilities}} of
  {{Deep Learning Models}}.
\newblock In \emph{International {{Conference}} on {{Learning
  Representations}}}, March 2022.

\bibitem[Cohen et~al.(2022{\natexlab{a}})Cohen, Kaur, Li, Kolter, and
  Talwalkar]{cohen_gradient_2022}
Jeremy Cohen, Simran Kaur, Yuanzhi Li, J.~Zico Kolter, and Ameet Talwalkar.
\newblock Gradient {{Descent}} on {{Neural Networks Typically Occurs}} at the
  {{Edge}} of {{Stability}}.
\newblock In \emph{International {{Conference}} on {{Learning
  Representations}}}, February 2022{\natexlab{a}}.

\bibitem[Cohen et~al.(2022{\natexlab{b}})Cohen, Ghorbani, Krishnan, Agarwal,
  Medapati, Badura, Suo, Cardoze, Nado, Dahl, and Gilmer]{cohen_adaptive_2022}
Jeremy~M. Cohen, Behrooz Ghorbani, Shankar Krishnan, Naman Agarwal, Sourabh
  Medapati, Michal Badura, Daniel Suo, David Cardoze, Zachary Nado, George~E.
  Dahl, and Justin Gilmer.
\newblock Adaptive {{Gradient Methods}} at the {{Edge}} of {{Stability}}, July
  2022{\natexlab{b}}.

\bibitem[Damian et~al.(2022)Damian, Nichani, and
  Lee]{damian_selfstabilization_2022}
Alex Damian, Eshaan Nichani, and Jason~D. Lee.
\newblock Self-{{Stabilization}}: {{The Implicit Bias}} of {{Gradient Descent}}
  at the {{Edge}} of {{Stability}}, September 2022.

\bibitem[Agarwala et~al.(2022)Agarwala, Pedregosa, and
  Pennington]{agarwala_secondorder_2022}
Atish Agarwala, Fabian Pedregosa, and Jeffrey Pennington.
\newblock Second-order regression models exhibit progressive sharpening to the
  edge of stability, October 2022.

\bibitem[Jastrzkebski et~al.(2018)Jastrzkebski, Kenton, Arpit, Ballas, Fischer,
  Bengio, and Storkey]{jastrzebski_three_2018}
Stanis{\l}aw Jastrzkebski, Zachary Kenton, Devansh Arpit, Nicolas Ballas, Asja
  Fischer, Yoshua Bengio, and Amos Storkey.
\newblock Three {{Factors Influencing Minima}} in {{SGD}}, September 2018.

\bibitem[Jastrzebski et~al.(2020)Jastrzebski, Szymczak, Fort, Arpit, Tabor,
  Cho, and Geras]{jastrzebski_breakeven_2020}
Stanislaw Jastrzebski, Maciej Szymczak, Stanislav Fort, Devansh Arpit, Jacek
  Tabor, Kyunghyun Cho, and Krzysztof Geras.
\newblock The {{Break-Even Point}} on {{Optimization Trajectories}} of {{Deep
  Neural Networks}}, February 2020.

\bibitem[Wu et~al.(2022)Wu, Wang, and Su]{wu_alignment_2022}
Lei Wu, Mingze Wang, and Weijie Su.
\newblock The alignment property of {{SGD}} noise and how it helps select flat
  minima: {{A}} stability analysis, October 2022.

\bibitem[Wu and Su(2023)]{wu_implicit_2023a}
Lei Wu and Weijie~J. Su.
\newblock The {{Implicit Regularization}} of {{Dynamical Stability}} in
  {{Stochastic Gradient Descent}}.
\newblock In \emph{Proceedings of the 40th {{International Conference}} on
  {{Machine Learning}}}, pages 37656--37684. PMLR, July 2023.

\bibitem[Mulayoff and Michaeli(2023)]{mulayoff_exact_2023}
Rotem Mulayoff and Tomer Michaeli.
\newblock Exact {{Mean Square Linear Stability Analysis}} for {{SGD}}, June
  2023.

\bibitem[Jacot et~al.(2018)Jacot, Gabriel, and Hongler]{jacot_neural_2018}
Arthur Jacot, Franck Gabriel, and Clement Hongler.
\newblock Neural {{Tangent Kernel}}: {{Convergence}} and {{Generalization}} in
  {{Neural Networks}}.
\newblock In \emph{Advances in {{Neural Information Processing Systems}} 31},
  pages 8571--8580. Curran Associates, Inc., 2018.

\bibitem[Lee et~al.(2019)Lee, Xiao, Schoenholz, Bahri, Novak, {Sohl-Dickstein},
  and Pennington]{lee_wide_2019}
Jaehoon Lee, Lechao Xiao, Samuel Schoenholz, Yasaman Bahri, Roman Novak, Jascha
  {Sohl-Dickstein}, and Jeffrey Pennington.
\newblock Wide {{Neural Networks}} of {{Any Depth Evolve}} as {{Linear Models
  Under Gradient Descent}}.
\newblock In \emph{Advances in {{Neural Information Processing Systems}} 32},
  pages 8570--8581. Curran Associates, Inc., 2019.

\bibitem[Adlam and Pennington(2020)]{adlam_neural_2020}
Ben Adlam and Jeffrey Pennington.
\newblock The {{Neural Tangent Kernel}} in {{High Dimensions}}: {{Triple
  Descent}} and a {{Multi-Scale Theory}} of {{Generalization}}.
\newblock In \emph{Proceedings of the 37th {{International Conference}} on
  {{Machine Learning}}}, pages 74--84. PMLR, November 2020.

\bibitem[Mei et~al.(2019)Mei, Misiakiewicz, and Montanari]{mei_meanfield_2019}
Song Mei, Theodor Misiakiewicz, and Andrea Montanari.
\newblock Mean-field theory of two-layers neural networks: Dimension-free
  bounds and kernel limit.
\newblock In \emph{Proceedings of the {{Thirty-Second Conference}} on
  {{Learning Theory}}}, pages 2388--2464. PMLR, June 2019.

\bibitem[Paquette et~al.(2021)Paquette, Lee, Pedregosa, and
  Paquette]{paquette_sgd_2021}
Courtney Paquette, Kiwon Lee, Fabian Pedregosa, and Elliot Paquette.
\newblock {{SGD}} in the {{Large}}: {{Average-case Analysis}}, {{Asymptotics}},
  and {{Stepsize Criticality}}.
\newblock In \emph{Proceedings of {{Thirty Fourth Conference}} on {{Learning
  Theory}}}, pages 3548--3626. PMLR, July 2021.

\bibitem[Paquette et~al.(2022{\natexlab{a}})Paquette, Paquette, Adlam, and
  Pennington]{paquette_homogenization_2022}
Courtney Paquette, Elliot Paquette, Ben Adlam, and Jeffrey Pennington.
\newblock Homogenization of {{SGD}} in high-dimensions: {{Exact}} dynamics and
  generalization properties, May 2022{\natexlab{a}}.

\bibitem[Paquette et~al.(2022{\natexlab{b}})Paquette, Paquette, Adlam, and
  Pennington]{paquette_implicit_2022}
Courtney Paquette, Elliot Paquette, Ben Adlam, and Jeffrey Pennington.
\newblock Implicit {{Regularization}} or {{Implicit Conditioning}}? {{Exact
  Risk Trajectories}} of {{SGD}} in {{High Dimensions}}, June
  2022{\natexlab{b}}.

\bibitem[Ben~Arous et~al.(2022)Ben~Arous, Gheissari, and
  Jagannath]{benarous_highdimensional_2022}
Gerard Ben~Arous, Reza Gheissari, and Aukosh Jagannath.
\newblock High-dimensional limit theorems for {{SGD}}: {{Effective}} dynamics
  and critical scaling.
\newblock \emph{Advances in Neural Information Processing Systems},
  35:\penalty0 25349--25362, December 2022.

\bibitem[Arnaboldi et~al.(2023)Arnaboldi, Stephan, Krzakala, and
  Loureiro]{arnaboldi_highdimensional_2023}
Luca Arnaboldi, Ludovic Stephan, Florent Krzakala, and Bruno Loureiro.
\newblock From high-dimensional \& mean-field dynamics to dimensionless
  {{ODEs}}: {{A}} unifying approach to {{SGD}} in two-layers networks, February
  2023.

\bibitem[Ma and Ying(2021)]{ma_linear_2021}
Chao Ma and Lexing Ying.
\newblock On {{Linear Stability}} of {{SGD}} and {{Input-Smoothness}} of
  {{Neural Networks}}.
\newblock In \emph{Advances in {{Neural Information Processing Systems}}},
  volume~34, pages 16805--16817. Curran Associates, Inc., 2021.

\bibitem[Smith et~al.(2017)Smith, Kindermans, and Le]{smith_don_2017}
Samuel~L. Smith, Pieter-Jan Kindermans, and Quoc~V. Le.
\newblock Don't {{Decay}} the {{Learning Rate}}, {{Increase}} the {{Batch
  Size}}.
\newblock \emph{arXiv preprint arXiv:1711.00489}, 2017.

\bibitem[Wen et~al.(2023)Wen, Ma, and Li]{wen_how_2023}
Kaiyue Wen, Tengyu Ma, and Zhiyuan Li.
\newblock How {{Does Sharpness-Aware Minimization Minimize Sharpness}}?,
  January 2023.

\bibitem[Dauphin et~al.(2024)Dauphin, Agarwala, and
  Mobahi]{dauphin_neglected_2024}
Yann~N. Dauphin, Atish Agarwala, and Hossein Mobahi.
\newblock Neglected {{Hessian}} component explains mysteries in {{Sharpness}}
  regularization, January 2024.

\bibitem[Zhu et~al.(2022)Zhu, Liu, Radhakrishnan, and
  Belkin]{zhu_quadratic_2022}
Libin Zhu, Chaoyue Liu, Adityanarayanan Radhakrishnan, and Mikhail Belkin.
\newblock Quadratic models for understanding neural network dynamics, May 2022.

\bibitem[Bao et~al.(2023)Bao, Bietti, Defazio, and Cabannes]{bao_hessian_2023}
Xuchan Bao, Alberto Bietti, Aaron Defazio, and Vivien Cabannes.
\newblock Hessian {{Inertia}} in {{Neural Networks}}.
\newblock \emph{1st Workshop on High-dimensional Learning Dynamics, ICML},
  2023.

\bibitem[Goyal et~al.(2018)Goyal, Doll{\'a}r, Girshick, Noordhuis, Wesolowski,
  Kyrola, Tulloch, Jia, and He]{goyal_accurate_2018}
Priya Goyal, Piotr Doll{\'a}r, Ross Girshick, Pieter Noordhuis, Lukasz
  Wesolowski, Aapo Kyrola, Andrew Tulloch, Yangqing Jia, and Kaiming He.
\newblock Accurate, {{Large Minibatch SGD}}: {{Training ImageNet}} in 1
  {{Hour}}, April 2018.

\bibitem[Shallue et~al.(2019)Shallue, Lee, Antognini, {Sohl-Dickstein},
  Frostig, and Dahl]{shallue_measuring_2019}
Christopher~J. Shallue, Jaehoon Lee, Joseph Antognini, Jascha {Sohl-Dickstein},
  Roy Frostig, and George~E. Dahl.
\newblock Measuring the {{Effects}} of {{Data Parallelism}} on {{Neural Network
  Training}}.
\newblock \emph{Journal of Machine Learning Research}, 20\penalty0
  (112):\penalty0 1--49, 2019.
\newblock ISSN 1533-7928.

\bibitem[{Collins-Woodfin} et~al.(2023){Collins-Woodfin}, Paquette, Paquette,
  and Seroussi]{collins-woodfin_hitting_2023}
Elizabeth {Collins-Woodfin}, Courtney Paquette, Elliot Paquette, and Inbar
  Seroussi.
\newblock Hitting the {{High-Dimensional Notes}}: {{An ODE}} for {{SGD}}
  learning dynamics on {{GLMs}} and multi-index models, August 2023.

\bibitem[Foret et~al.(2022)Foret, Kleiner, Mobahi, and
  Neyshabur]{foret_sharpnessaware_2022}
Pierre Foret, Ariel Kleiner, Hossein Mobahi, and Behnam Neyshabur.
\newblock Sharpness-aware {{Minimization}} for {{Efficiently Improving
  Generalization}}.
\newblock In \emph{International {{Conference}} on {{Learning
  Representations}}}, April 2022.

\bibitem[Roulet et~al.(2023)Roulet, Agarwala, and
  Pedregosa]{roulet_interplay_2023}
Vincent Roulet, Atish Agarwala, and Fabian Pedregosa.
\newblock On the {{Interplay Between Stepsize Tuning}} and {{Progressive
  Sharpening}}, December 2023.

\bibitem[Agarwala and Dauphin(2023)]{agarwala_sam_2023}
Atish Agarwala and Yann Dauphin.
\newblock {{SAM}} operates far from home: Eigenvalue regularization as a
  dynamical phenomenon.
\newblock In \emph{Proceedings of the 40th {{International Conference}} on
  {{Machine Learning}}}, pages 152--168. PMLR, July 2023.

\bibitem[He et~al.(2016)He, Zhang, Ren, and Sun]{he_deep_2016}
Kaiming He, Xiangyu Zhang, Shaoqing Ren, and Jian Sun.
\newblock Deep {{Residual Learning}} for {{Image Recognition}}.
\newblock In \emph{Proceedings of the {{IEEE Conference}} on {{Computer
  Vision}} and {{Pattern Recognition}}}, pages 770--778, 2016.

\bibitem[Tolstikhin et~al.(2021)Tolstikhin, Houlsby, Kolesnikov, Beyer, Zhai,
  Unterthiner, Yung, Steiner, Keysers, Uszkoreit, Lucic, and
  Dosovitskiy]{tolstikhin_mlpmixer_2021}
Ilya Tolstikhin, Neil Houlsby, Alexander Kolesnikov, Lucas Beyer, Xiaohua Zhai,
  Thomas Unterthiner, Jessica Yung, Andreas Steiner, Daniel Keysers, Jakob
  Uszkoreit, Mario Lucic, and Alexey Dosovitskiy.
\newblock {{MLP-Mixer}}: {{An}} all-{{MLP Architecture}} for {{Vision}}, June
  2021.

\bibitem[Dosovitskiy et~al.(2021)Dosovitskiy, Beyer, Kolesnikov, Weissenborn,
  Zhai, Unterthiner, Dehghani, Minderer, Heigold, Gelly, Uszkoreit, and
  Houlsby]{dosovitskiy2021imageworth16x16words}
Alexey Dosovitskiy, Lucas Beyer, Alexander Kolesnikov, Dirk Weissenborn,
  Xiaohua Zhai, Thomas Unterthiner, Mostafa Dehghani, Matthias Minderer, Georg
  Heigold, Sylvain Gelly, Jakob Uszkoreit, and Neil Houlsby.
\newblock An image is worth 16x16 words: Transformers for image recognition at
  scale, 2021.
\newblock URL \url{https://arxiv.org/abs/2010.11929}.

\end{thebibliography}
\bibliographystyle{unsrtnat}


\newpage

\newpage
\appendix
\onecolumn

\section{Stochastic edge of stability}

\subsection{Averaging lemma}

Here we prove a lemma which is used to take second moments with respect to SGD noise.
Recall that $\pmat_{t}$ is a sequence of i.i.d. random diagonal $\D\times\D$ matrices with
$\B$ $1$s and $\D-\B$ $0$s on the diagonal. We have the following lemma:

\begin{lemma}
Let $\m{M}$ be a matrix
independent of $\pmat_{t}$ for all $t$. Then we have the following moments:
\begin{equation}
\begin{split}
\expect[\pmat_{t}] & = \bfr\m{I},~\expect[\pmat_{t}\m{M}\pmat_{t+1}]  = \bfr^{2}\m{M} \\
\expect[\pmat_{t}\m{M}\pmat_{t}] & = \bfr\tl{\bfr}\m{M}+\bfr(1-\tl{\bfr})\diag(\m{M})
\end{split}
\end{equation}
where $\bfr\equiv \B/\D$ and $\tl{\bfr} \equiv (\B-1)/(\D-1)$.
\label{lem:ave_lemma}
\end{lemma}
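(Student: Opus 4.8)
The plan is to reduce every matrix-valued expectation to scalar moments of the diagonal indicators. Write $p^{(t)}_i \equiv (\pmat_t)_{ii} \in \{0,1\}$ for the indicator that coordinate $i$ is selected in minibatch $t$; since $\pmat_t$ is diagonal we have $(\pmat_t)_{ij} = p^{(t)}_i \delta_{ij}$, so any product $\pmat_t \m{M} \pmat_s$ has entries that are $\m{M}_{ij}$ multiplied by a single scalar indicator product. The whole lemma therefore follows once I know $\expect[p_i]$, $\expect[p^{(t)}_i p^{(s)}_j]$ for $s \ne t$, and $\expect[p_i p_j]$ within a single minibatch, the latter split by whether $i = j$.

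Then I would compute those scalar moments directly from the uniform-subset sampling. Because each coordinate lies in a uniformly random size-$\B$ subset of the $\D$ coordinates, $\expect[p_i] = \B/\D = \bfr$. For the same minibatch and $i \ne j$, the event $\{p_i = p_j = 1\}$ is that both coordinates are chosen, with probability $\binom{\D-2}{\B-2}/\binom{\D}{\B} = \B(\B-1)/(\D(\D-1)) = \bfr\tl{\bfr}$; and for $i = j$, $p_i^2 = p_i$ gives $\expect[p_i^2] = \bfr$. Across distinct minibatches, independence gives $\expect[p^{(t)}_i p^{(s)}_j] = \bfr^2$ for all $i,j$ when $s \ne t$.

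With these in hand the three identities drop out by matching entries. For the first, $\expect[(\pmat_t)_{ij}] = \bfr\,\delta_{ij}$ gives $\expect[\pmat_t] = \bfr \Id$. For the cross-time product, $(\pmat_t \m{M} \pmat_{t+1})_{ij} = p^{(t)}_i \m{M}_{ij} p^{(t+1)}_j$, and the factorized moment $\bfr^2$ yields $\bfr^2 \m{M}$. For the same-time product, $(\pmat_t \m{M} \pmat_t)_{ij} = p_i p_j \m{M}_{ij}$, which has expectation $\bfr\tl{\bfr}\m{M}_{ij}$ off the diagonal and $\bfr\,\m{M}_{ii}$ on it; writing $\bfr\,\m{M}_{ii} = \bfr\tl{\bfr}\m{M}_{ii} + \bfr(1-\tl{\bfr})\m{M}_{ii}$ re-expresses this as $\bfr\tl{\bfr}\m{M} + \bfr(1-\tl{\bfr})\diag(\m{M})$, matching the claim.

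The only genuinely nontrivial step is the pair-inclusion probability $\expect[p_i p_j] = \bfr\tl{\bfr}$ for distinct coordinates in a single minibatch; everything else is bookkeeping. It is exactly this correction — the gap between $\tl{\bfr}$ (both coordinates chosen) and $\bfr$ (the diagonal identity $p_i^2 = p_i$) — that produces the $\diag(\m{M})$ term, so I would take care to keep the $i = j$ and $i \ne j$ cases separate throughout rather than collapsing them prematurely.
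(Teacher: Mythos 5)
Your proof is correct and follows essentially the same route as the paper's: both reduce each expectation to entrywise indicator moments, split the same-time product into the $i=j$ case (probability $\bfr$) and the $i\neq j$ case (pair-inclusion probability $\bfr\tl{\bfr}$), and recombine to produce the $\diag(\m{M})$ correction. Your write-up is in fact slightly cleaner, since you state the pair-inclusion probability as $\binom{\D-2}{\B-2}/\binom{\D}{\B}$ with the arguments in the conventional order.
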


\begin{proof}
The first moment of $\pmat_{t}$ is derived by averaging each diagonal term. Similarly,
$\expect[\pmat_{t}\m{M}\pmat_{t+1}] = \expect[\pmat_{t}]{\m{M}}\expect[\pmat_{t+1}]$
since $\pmat_{t}$ and $\pmat_{t+1}$ are independent.

Now consider $\expect[\pmat_{t}\m{M}\pmat_{t}]$. There are two cases to consider. First, consider the
diagonal of the output. For a coordinate $\al$ we have:
\begin{equation}
[\pmat_{t}\m{M}\pmat_{t}]_{\al\al} = [\pmat_{t}]_{\al\al}[\m{M}]_{\al\al}[\pmat_{t}]_{\al\al} = \begin{cases}
[\m{M}]_{\al\al} & {\rm with~probability~}\bfr\\
0 & {\rm with~probability~}(1-\bfr)
\end{cases}
\end{equation}
That is, the $\al\al$ diagonal element is non-zero precisely when the $\al\al$ diagonal element
of $\pmat$ is non-zero.

In the off-diagonal case, the $\al\bt$ element with $\al\neq \bt$ gives us:
\begin{equation}
[\pmat_{t}\m{M}\pmat_{t}]_{\al\bt} = [\pmat_{t}]_{\al\al}[\m{M}]_{\al\bt}[\pmat_{t}]_{\al\bt} = \begin{cases}
[\m{M}]_{\al\bt} & {\rm with~probability~}\binom{\B-2}{\D-2}/\binom{\B}{\D} \\
0 & {\rm with~probability~}1-\binom{\B-2}{\D-2}/\binom{\B}{\D}
\end{cases}
\end{equation}

Here the element is non-zero if and only if both $\al$ and $\bt$ are selected in the batch.

Taken
together, in coordinates we can write:
\begin{equation}
\expect[\pmat_{t}\m{M}\pmat_{t}]_{\al\bt} = \frac{\B}{\D}[\delta_{\al\bt}+(\B-1)/(\D-1)(1-\delta_{\al\bt})]\m{M}_{\al\bt}
\end{equation}
\begin{equation}
\expect[\pmat_{t}\m{M}\pmat_{t}]_{\al\bt} = \bfr[\delta_{\al\bt}+\tl{\bfr}(1-\delta_{\al\bt})]\m{M}_{\al\bt}
\end{equation}
Writing in matrix notation, we have the desired result.
\end{proof}

\subsection{Derivation of second moment dynamics}

\label{app:second_mom_dyn}

Here we derive the various dynamical equations for the second moment of $\z$ in the
linear model. We begin by noting that:
\begin{equation}
\expect_{\pmat}[\z_{t+1}\z_{t+1}^{\top}-\z_{t}\z_{t}^{\top}|\z_{t}] = \z_{t}\expect_{\pmat}[\z_{t+1}-\z_{t}|\z_{t}]^{\tpose}+\expect_{\pmat}[\z_{t+1}-\z_{t}|\z_{t}]\z_{t}^{\tpose}+\expect_{\pmat}[(\z_{t+1}-\z_{t})(\z_{t+1}-\z_{t})^{\tpose}|\z_{t}]
\end{equation}
Substitution gives us:
\begin{equation}
\begin{split}
\expect_{\pmat}[\z_{t+1}\z_{t+1}^{\top}-\z_{t}\z_{t}^{\top}|\z_{t}]  =  -\frac{\lr}{\B}\left(\z_{t}\expect_{\pmat}[\z_{t}^{\tpose}\pmat_{t}\J\J^{\tpose}]+\expect_{\pmat}[\J\J^{\tpose}\pmat_{t}\z_{t}]\z_{t}^{\tpose}\right)   +\frac{\lr^{2}}{\B^{2}}\expect_{\pmat}[\J\J^{\tpose}\pmat_{t}\z_{t}\z_{t}^{\top}\pmat_{t}\J\J^{\tpose}]
\end{split}
\end{equation}
Evaluation using Lemma \ref{lem:ave_lemma} gives us
\begin{equation}
\begin{split}
\expect_{\pmat}[\z_{t+1}\z_{t+1}^{\top}|\z_{t}]  = \z_{t}\z_{t}^{\top}-\lr \left(\ntk\z_{t}\z_{t}^{\top}+\z_{t}\z_{t}^{\top}\ntk\right)
+\tl{\bfr}\bfr^{-1}\lr^{2}\ntk\z_{t}\z_{t}^{\top}\ntk
+ \left(\bfr^{-1}-\tl{\bfr}\bfr^{-1}\right)\lr^{2} \ntk\diag\left[\z_{t}\z_{t}^{\top}\right]\ntk
\end{split}
\end{equation}
This means that $\expect_{\pmat}[\z_{t}\z_{t}^{\tpose}]$ evolves according to a linear dynamical
system. We denote the linear operator defining the dynamics as $\linop$.

We can rotate to the eigenbasis of the NTK. Given the eigendecomposition $\ntk = \V \lmat \V^{\tpose}$,
we define the matrix $\S_{t}$ as:
\begin{equation}
\S_{t} \equiv \V^{\tpose}\expect_{\pmat}[\z_{t}\z_{t}^{\top}]\V
\end{equation}
The diagonal elements of $\S_{t}$ correspond to the squared eigenprojections $\expect_{\pmat}[(\v_{\al}\cdot\z_{t})^{2}]$, while the off-diagonal elements correspond to
correlations $\expect_{\pmat}[(\v_{\al}\cdot\z_{t})(\v_{\bt}\cdot\z_{t})]$.

$\S_{t}$ also evolves linearly, according to the dynamical system:
\begin{equation}
\expect[\S_{t+1}|\z_{t}] = \S_{t}-\lr (\lmat\S_{t}+\S_{t}\lmat)+\tl{\bfr}\bfr^{-1}\lr^2 \lmat\S_{t}\lmat+(\bfr^{-1}-\tl{\bfr}\bfr^{-1})\lr^2\lmat \V^{\tpose}\left[\sum_{\al} (\V\S_{t}\V^{\tpose})_{\al\al} \m{e}_{\al}\m{e}_{\al}^{\tpose}\right]\V \lmat
\end{equation}
where $\m{e}_{\al}$ is the basis element for coordinate $\al$ in the original coordinate system.
The last term induces coupling in between the different elements of $\S_{t}$ - that is, between
the covariances of the different eigenmodes of $\ntk$. In coordinates we have:
\begin{equation}
[\lmat\V^{\tpose}\diag\left[\V\S_{t}\V^{\tpose}\right]\V\lmat]_{\mu\nu} = \lam_{\mu}\lam_{\nu}\left[\sum_{\al}\V_{\al\bt}\V_{\al\gm}\V_{\al\mu}\V_{\al\nu}\right](\S_{t})_{\bt\gm}
\end{equation}
That is, there is non-zero coupling between the residual dynamics in the eigendirections of
$\ntk$, and potentially non-trivial contributions from the covariances between different modes.
This is an effect entirely driven by SGD noise, as in the deterministic case the eigenmodes of
$\ntk$ evolve independently.

We can write the operator $\linop$ in the $\S$ basis, using a $4$-index notation:
\begin{equation}
\linop_{\mu\nu,\bt\gm} = \delta_{\mu\bt,\nu\gm}(1-\lr(\lam_{\mu}+\lam_{\nu})+\tl{\bfr}\bfr^{-1}\lr^{2}\lam_{\mu}\lam_{\nu})+(\bfr^{-1}-\tl{\bfr}\bfr^{-1})\lr^{2}\lam_{\mu}\lam_{\nu}\left[\sum_{\al}\V_{\al\bt}\V_{\al\gm}\V_{\al\mu}\V_{\al\nu}\right]
\end{equation}
In this notation, $(\S_{t+1})_{\mu\nu} = \sum_{\bt\gm}\linop_{\mu\nu,\bt\gm}(\S_{t})_{\bt\gm}$.

In the main text, we analyzed the dynamics restricted to the diagonal of $\S$.
Let $\pvec \equiv\diag(\S)$. The dynamical equation is, coordinate-wise:
\begin{equation}
(\pvec_{t+1})_{\mu} = \sum_{\bt} \linop_{\mu\mu,\bt\bt}(\pvec_{t})_{\bt}
\end{equation}
which becomes, in matrix notation
\begin{equation}
\pvec_{t+1} = \dmat\pvec_{t},~\dmat \equiv [(\Id-\lr\lmat)^2 +(\tl{\bfr}\bfr^{-1}-1)\lr^{2}\lmat^{2}+\lr^2(\bfr^{-1}-\tl{\bfr}\bfr^{-1})\lmat^2\cmat],~\cmat_{\bt\mu} \equiv \sum_{\al}\V_{\al\bt}^2\V_{\al\mu}^2
\end{equation}
Note that $\cmat$ is a PSD (and indeed, non-negative) matrix. If $\lmat$ is invertible,
$\dmat$ has all real non-negative eigenvalues, as seen via similarity transformation
(left multiply by $\lmat^{-1}$, right multiply by $\lmat$). In the general case,
if we define $\tpvec = \lmat^{+}\pvec$ (transformation by the Moore-Penrose pseudoinverse
of $\lmat$), we have:
\begin{equation}
\tpvec_{t+1} = [(\Id-\lr\lmat)^2 +(\tl{\bfr}\bfr^{-1}-1)\lr^{2}\lmat^{2}+\lr^2(\bfr^{-1}-1)\lmat\cmat\lmat]\tpvec_{t}
\end{equation}
This leads us directly to the decomposition in Equation \ref{eq:pvec_volt}.
\aga{This might be slightly wrong - correct in later versions. The nullspace of $\lmat$
can have some action here but I think it shouldn't affect stability conditions - I think it's
just sets a different equilibrium point.}

\subsection{Proof of Theorem \ref{thm:s_eos}}

\label{app:s_eos_def}

We will use the following lemmas:
\begin{lemma}
Let $a$ and $b$ be random variables with finite first and second moment. Then
$\expect[|ab|]\leq\expect[a^2]+\expect[b^2]$.
\label{lem:prod_lemma}
\end{lemma}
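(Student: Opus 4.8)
The plan is to reduce this to the elementary pointwise arithmetic–geometric mean inequality and then integrate. First I would observe that for any real numbers the square $(|a|-|b|)^2$ is nonnegative, which expands to $a^2 - 2|a||b| + b^2 \geq 0$, i.e. $|ab| \leq \tfrac{1}{2}(a^2+b^2)$ holding pointwise (for every realization of the random variables). This is the only genuine content of the statement; the rest is bookkeeping.

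Next I would take expectations of this pointwise bound. By monotonicity and linearity of expectation, $\expect[|ab|] \leq \tfrac{1}{2}\left(\expect[a^2]+\expect[b^2]\right)$. The hypothesis that $a$ and $b$ have finite second moments guarantees that the right-hand side is finite, which in turn certifies that $\expect[|ab|]$ is well-defined and finite (so the expectation on the left is not an indeterminate $\infty$); this is the role the finite-second-moment assumption plays. Since $a^2 \geq 0$ and $b^2 \geq 0$, we have $\tfrac{1}{2}(a^2+b^2) \leq a^2+b^2$, and therefore $\expect[|ab|] \leq \expect[a^2]+\expect[b^2]$, which is the claimed (slightly loose) bound.

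There is no real obstacle here: the statement is a weakened form of the standard Cauchy–Schwarz / AM–GM estimate, so the only care needed is to confirm measurability and integrability of $|ab|$ before manipulating its expectation, both of which follow immediately from the finiteness of $\expect[a^2]$ and $\expect[b^2]$ together with the pointwise domination $|ab|\leq\tfrac12(a^2+b^2)$. I would keep the proof to two or three lines, since its purpose in the paper is simply to furnish a crude but convenient product bound for the second-moment computations that follow.
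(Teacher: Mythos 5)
Your proof is correct and follows essentially the same route as the paper: establish the pointwise bound (the paper uses $|ab|\leq a^2+b^2$ directly, you derive the sharper $|ab|\leq\tfrac12(a^2+b^2)$ from $(|a|-|b|)^2\geq 0$ and then relax it) and conclude by monotonicity/linearity of expectation. Your added remarks on integrability and the role of the finite-second-moment hypothesis are a minor refinement of the paper's two-line argument, not a different approach.
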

\begin{proof}
Given any fixed $a$ and $b$, $|ab|\leq a^2+b^2$. From the linearity of expectation
we have the desired result.
\end{proof}
\begin{lemma}
Let $\amat$ and $\bmat$ be two PSD matrices. Then
\begin{equation}
\max\lam[\amat]\leq\max\lam[\amat+\bmat]
\end{equation}
\label{lem:psd_inequality}
\end{lemma}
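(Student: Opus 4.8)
The plan is to prove this via the variational (Courant--Fischer) characterization of the largest eigenvalue, which is the most self-contained route. Since $\amat$ and $\bmat$ are PSD they are in particular symmetric, so each admits a Rayleigh-quotient description of its top eigenvalue: for any symmetric $\m{M}$, $\max\lam[\m{M}] = \max_{\|x\|=1} x^{\top}\m{M}x$. I would first record this fact and note that it applies to both $\amat$ and $\amat+\bmat$, the latter being symmetric as a sum of symmetric matrices.

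Next I would fix a unit vector $x^{*}$ achieving the maximum for $\amat$, so that $(x^{*})^{\top}\amat x^{*} = \max\lam[\amat]$. Evaluating the Rayleigh quotient of $\amat+\bmat$ at this particular $x^{*}$ gives $(x^{*})^{\top}(\amat+\bmat)x^{*} = \max\lam[\amat] + (x^{*})^{\top}\bmat x^{*}$. Because $\bmat$ is PSD, the additional term $(x^{*})^{\top}\bmat x^{*}$ is nonnegative, so this value is at least $\max\lam[\amat]$.

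Finally, since the maximum of the Rayleigh quotient of $\amat+\bmat$ over all unit vectors dominates its value at the single vector $x^{*}$, I would conclude $\max\lam[\amat+\bmat] \geq (x^{*})^{\top}(\amat+\bmat)x^{*} \geq \max\lam[\amat]$, which is exactly the claim.

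I do not anticipate a genuine obstacle here: the result is an elementary monotonicity property of the top eigenvalue under PSD perturbations, and it could equally be obtained in one line from Weyl's inequality $\max\lam[\amat+\bmat] \geq \max\lam[\amat] + \min\lam[\bmat]$ together with $\min\lam[\bmat] \geq 0$. The only points requiring any care are the bookkeeping that PSD implies symmetry (so the eigenvalues are real and the variational formula is valid) and getting the direction of the inequality right when passing from a fixed test vector to the supremum.
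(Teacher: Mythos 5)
Your proof is correct and follows essentially the same route as the paper: both fix a unit top eigenvector of $\amat$, evaluate the quadratic form of $\amat+\bmat$ at it, use PSDness of $\bmat$ to bound below by $\max\lam[\amat]$, and then bound above by $\max\lam[\amat+\bmat]$. The only cosmetic difference is that you invoke the Courant--Fischer variational characterization for the last step, whereas the paper re-derives it by expanding the test vector in the eigenbasis of $\amat+\bmat$.
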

\begin{proof}
Let $\v$ be an eigenvector of $\amat$ associated with the largest eigenvalue, with
length $1$. Then we have:
\begin{equation}
\v^{\top}[\amat+\bmat]\v = \max\lam[\amat]+\v^{\top}\bmat\v\geq \max\lam[\amat]
\end{equation}
where the final inequality comes from the PSDness of $\bmat$. Note that $\amat+\bmat$ is PSD
since $\amat$ and $\bmat$ are individually. Therefore, we have
\begin{equation}
\v^{\top}[\amat+\bmat]\v = \sum_{k}(\v\cdot\w_{k})^{2}\lam_{k}
\end{equation}
where $\w_{k}$ is the eigenvector of $\amat+\bmat$ associated with the eigenvalue
$\lam_{k}$. Since the $\lam_{k}$ are non-negative, and the $(\v\cdot\w_{k})^{2}$ are
non-negative and sum to $1$, we have
\begin{equation}
\v^{\top}[\amat+\bmat]\v \leq \max\lam[\amat+\bmat]
\end{equation}
Combining all our inequalities, we have:
\begin{equation}
\max\lam[\amat]\leq\max\lam[\amat+\bmat]
\end{equation}
\end{proof}
\begin{lemma}
Let $\amat$ and $\bmat$ be PSD matrices. Then the product $\amat\bmat$ has non-negative
eigenvalues.
\label{lem:psd_prod}
\end{lemma}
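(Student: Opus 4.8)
The plan is to reduce the claim about the (generally non-symmetric) product $\amat\bmat$ to a claim about a genuinely symmetric PSD matrix, whose spectrum is manifestly non-negative. Since $\amat$ is PSD it admits a symmetric PSD square root $\amat^{1/2}$. I would form the matrix $\amat^{1/2}\bmat\amat^{1/2}$ and record two facts. First, it is symmetric, since $(\amat^{1/2}\bmat\amat^{1/2})^{\tpose} = \amat^{1/2}\bmat^{\tpose}\amat^{1/2} = \amat^{1/2}\bmat\amat^{1/2}$ using that $\amat^{1/2}$ and $\bmat$ are symmetric. Second, it is PSD: for any $\v$, $\v^{\tpose}\amat^{1/2}\bmat\amat^{1/2}\v = (\amat^{1/2}\v)^{\tpose}\bmat(\amat^{1/2}\v)\geq 0$ by the PSDness of $\bmat$. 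Hence all eigenvalues of $\amat^{1/2}\bmat\amat^{1/2}$ are real and non-negative.

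The remaining step is to transfer this non-negativity to $\amat\bmat$ itself. If $\amat$ were invertible this would be immediate, since $\amat^{1/2}\bmat\amat^{1/2} = \amat^{-1/2}(\amat\bmat)\amat^{1/2}$ exhibits $\amat\bmat$ as similar to the PSD matrix $\amat^{1/2}\bmat\amat^{1/2}$. To handle the general, possibly singular, case uniformly I would instead invoke the standard fact that for square matrices $\m{M}$ and $\m{N}$ of the same size, $\m{M}\m{N}$ and $\m{N}\m{M}$ share the same characteristic polynomial, and hence the same eigenvalues with multiplicity. Taking $\m{M}=\amat^{1/2}$ and $\m{N}=\amat^{1/2}\bmat$ gives $\m{M}\m{N}=\amat\bmat$ and $\m{N}\m{M}=\amat^{1/2}\bmat\amat^{1/2}$, so the spectrum of $\amat\bmat$ coincides with that of $\amat^{1/2}\bmat\amat^{1/2}$ and is therefore non-negative.

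The only genuine difficulty is the possible singularity of $\amat$, which is precisely where the naive similarity argument breaks down because $\amat^{-1/2}$ need not exist. The $\m{M}\m{N}$/$\m{N}\m{M}$ identity removes this obstacle without any perturbation or limiting argument, so I expect the whole proof to be only a few lines once the square root is introduced. If one preferred to avoid quoting that identity, an alternative route would be to replace $\amat$ by $\amat+\eps\Id$, which is positive definite and hence invertible, apply the similarity argument to conclude the eigenvalues of $(\amat+\eps\Id)\bmat$ are non-negative, and then pass to the limit $\eps\to 0$ using continuity of eigenvalues; but the direct argument above is cleaner and avoids invoking continuity.
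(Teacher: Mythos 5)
Your proof is correct, and it shares the paper's central idea: symmetrize the product by passing to a matrix of the form (square root)$\cdot$(other factor)$\cdot$(square root), which is manifestly symmetric PSD, and then transfer its spectrum to the asymmetric product. Where you differ is in how that transfer is justified. The paper works with $\bmat^{1/2}\amat\bmat^{1/2}$ and handles the singular case by an explicit two-case analysis: eigenvectors killed by $\bmat^{1/2}$ are mapped to null vectors of $\amat\bmat$, and for each positive eigenvalue it constructs, in a block basis adapted to $\bmat^{1/2}$, an explicit eigenvector of $\amat\bmat$ with the same eigenvalue. You instead invoke the standard identity that $\m{M}\m{N}$ and $\m{N}\m{M}$ have the same characteristic polynomial for square $\m{M},\m{N}$, with $\m{M}=\amat^{1/2}$ and $\m{N}=\amat^{1/2}\bmat$. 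Your route is shorter and, in one respect, tighter: the paper's construction shows that every eigenvalue of $\bmat^{1/2}\amat\bmat^{1/2}$ is an eigenvalue of $\amat\bmat$, but does not explicitly argue the reverse containment (i.e., that $\amat\bmat$ has no eigenvalues \emph{other} than these), which is what the lemma actually requires; the characteristic-polynomial identity delivers equality of the full spectra with multiplicity in one stroke. What the paper's approach buys in exchange is self-containedness — it exhibits the eigenvectors concretely without quoting an external fact — at the cost of a longer block computation. Your fallback via $\amat+\eps\Id$ and continuity of eigenvalues would also work but is unnecessary given the direct argument.
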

\begin{proof}
Consider the symmetric matrix  $\m{M} = (\bmat)^{1/2}\amat\bmat^{1/2}$. This matrix is
PSD since
\begin{equation}
\w^{\tpose}(\bmat)^{1/2}\amat \bmat^{1/2}\w = [(\bmat)^{1/2}\w]^{\tpose}\amat [\bmat^{1/2}\w]\geq 0
\end{equation}
for any $\w$, by the PSDness of $\amat$. Let $\v$ be an eigenvector of
$\bmat^{1/2}\amat \bmat^{1/2}$ associated with eigenvalue $\lam$. We consider
two cases. The first is that $\bmat^{1/2}\v = 0$. In this case, $\amat\bmat\v = 0$,
and $\v$ is an eigenvector of eigenvalue $0$ for $\amat\bmat$ as well.

Now we consider non-zero eigenvalues of $\m{M}$. WLOG we choose a basis such that
the eigenvalue condition for positive $\lam$ can be written as
\begin{equation}
\m{M}\begin{pmatrix}
\v\\
0
\end{pmatrix} = \begin{pmatrix}
\m{L} & 0\\
0 & 0
\end{pmatrix}\begin{pmatrix}
\amat_{11} & \amat_{12}\\
\amat_{21} & \amat_{22}
\end{pmatrix}
\begin{pmatrix}
\m{L} & 0\\
0 & 0
\end{pmatrix}\begin{pmatrix}
\v\\
0
\end{pmatrix} = \lam \begin{pmatrix}
\v\\
0
\end{pmatrix}
\end{equation}
where $\m{L}$ is a positive diagonal matrix. Now consider the following product
involving $\amat\bmat$:
\begin{equation}
\amat\bmat \begin{pmatrix}
\m{L}^{-1}\v\\
\u
\end{pmatrix} =
\begin{pmatrix}
\amat_{11} & \amat_{12}\\
\amat_{21} & \amat_{22}
\end{pmatrix}
\begin{pmatrix}
\m{L}^{2} & 0\\
0 & 0
\end{pmatrix}\begin{pmatrix}
\m{L}^{-1}\v\\
\u
\end{pmatrix}
\end{equation}
We can rewrite this as
\begin{equation}
\amat\bmat \begin{pmatrix}
\m{L}^{-1}\v\\
\u
\end{pmatrix} = \begin{pmatrix}
\m{L}^{-1} & 0\\
0 & \Id
\end{pmatrix} \begin{pmatrix}
\m{L} & 0\\
0 & \Id
\end{pmatrix}\begin{pmatrix}
\amat_{11} & \amat_{12}\\
\amat_{21} & \amat_{22}
\end{pmatrix}
\begin{pmatrix}
\m{L}^{2} & 0\\
0 & 0
\end{pmatrix}\begin{pmatrix}
\v\\
\u
\end{pmatrix}
\end{equation}
Using the eigenvalue condition we have:
\begin{equation}
\amat\bmat \begin{pmatrix}
\m{L}^{-1}\v\\
\u
\end{pmatrix} = \begin{pmatrix}
\lam\m{L}^{-1}\v\\
\amat_{21}\v
\end{pmatrix}
\end{equation}
If we select $\u = \lam^{-1}\amat_{21}\v$, then we have
\begin{equation}
\amat\bmat \begin{pmatrix}
\m{L}^{-1}\v\\
\u
\end{pmatrix} = \lam\begin{pmatrix}
\m{L}^{-1}\v\\
\u
\end{pmatrix}
\end{equation}
Therefore $\lam$ is an eigenvalue of $\amat\bmat$. All eigenvalues of $\amat\bmat$ are
non-negative.
\end{proof}

Lemmas in hand, we can now prove the theorem. A key point is that the theorem would be
trivial if $\amat$ and $\bmat$ were scalars; in this case, it would be equivalent to
$A<1$, $A+B < 1$ if and only if $(1-A)^{-1}B<1$. We will use the PSD nature of
$\amat$ and $\bmat$ to extend the trivial manipulation of scalar inequalities to their
linear algebraic counterparts in terms of the largest eigenvalues of the corresponding
matrices.

\paragraph{Theorem \ref{thm:s_eos}}
Given the dynamics of Equation \ref{eq:pvec_volt}, $\lim_{t\to\infty}\expect_{\pmat}[\z_{t}\z_{t}^{\top}] = 0$ for any initialization $\z_{t}$
if and only if $||\amat||_{op} <1$ and $\knorm < 1$ where
\begin{equation}
\knorm \equiv \max\lam\left[(\Id-\amat)^{-1}\bmat\right]
\end{equation}
for the PSD matrices $\amat$ and $\bmat$ defined above. $\knorm$ is always non-negative.
\begin{proof}
We begin with Equation \ref{eq:pvec_volt}. This is a linear dynamical system which
determines the values of $\expect_{\pmat}[\diag(\z_{t}\z_{t}^{\top})]$. From
Lemma \ref{lem:prod_lemma}, $\lim_{t\to\infty}\expect_{\pmat}[\diag(\z_{t}\z_{t}^{\top})]$
implies $\lim_{t\to\infty}\expect_{\pmat}[\z_{t}\z_{t}^{\top}]$ for off-diagonal elements as
well.

The linear system converges to $0$ for all inputs if and only if $L_{max}$, the largest
eigenvalue of $\amat+\bmat$, has absolute value less than $1$. Since $\amat$ and $\bmat$
are both PSD, this condition is equivalent to $L_{max} <1$. From Lemma \ref{lem:psd_inequality}
we have:
\begin{equation}
||\amat||_{op} = \max\lam[\amat] \leq \max\lam[\amat+\bmat]
\end{equation}
Therefore, if $||\amat||_{op} \geq 1$, $L_{max} \geq 1$ and the dynamics does
not converge to $0$.

Now consider the case $||\amat||_{op} < 1$.
We first show that
$\max\lam[(\Id-\amat)^{-1}\bmat]\geq 1$ implies $\max\lam[\amat+\bmat] \geq 1$.
Since $||\amat||_{op}<1$, $\Id-\amat$ is invertible. Let $\w$ be an eigenvector
of $(\Id-\amat)^{-1}\bmat$ with eigenvalue $\omega\geq 1$. Then:
\begin{equation}
\w^{\tpose}\bmat\w = \w^{\tpose}(\Id-\amat)(\Id-\amat)^{-1}\bmat\w = \omega \w^{\tpose}(\Id-\amat)\w
\end{equation}
This implies that
\begin{equation}
\w^{\tpose}[\amat+\bmat]\w = \omega \w^{\tpose}\Id\w +(1-\omega)\w^{\tpose}\amat\w
\end{equation}
Since $||\amat||_{op}<1$, $(1-\omega)\w^{\tpose}\amat\w \geq 1-\omega$ and we have
\begin{equation}
\w^{\tpose}[\amat+\bmat]\w \geq \omega +(1-\omega) = 1
\end{equation}
Therefore, $\max\lam[\amat+\bmat] \geq 1$ and
$\lim_{t\to\infty}\expect_{t}[\z_{t}\z_{t}^{\tpose}]\neq 0$ for all initializations.

Now we show the converse. Suppose $\max\lam[\amat+\bmat] \geq 1$. Let $\u$ be an eigenvector
of $\amat+\bmat$ with eigenvalue $\nu > 1$. We note that the symmetric matrix $(\Id-\amat)^{-1/2}\bmat(\Id-\amat)^{-1/2}$ has the same spectrum as $(\Id-\amat)^{-1}\bmat$.
Let $\tl{\u} \equiv (\Id-\amat)^{1/2}\u$. We have:
\begin{equation}
\frac{\tl{\u}^{\tpose} (\Id-\amat)^{-1/2}\bmat(\Id-\amat)^{-1/2}\tl{\u}}{\tl{\u}^{\tpose}\tl{\u}} = \frac{\u^{\tpose}\bmat\u}{\u^{\tpose}(\Id-\amat)\u} = \frac{\u^{\tpose}(\nu\Id-\amat)\u}{\u^{\tpose}(\Id-\amat)\u} = 1+\frac{\nu-1}{\u^{\tpose}(\Id-\amat)\u}
\end{equation}
Since $\nu>1$ and $\Id-\amat$ is PSD and invertible, $\u^{\tpose}(\Id-\amat)\u > 0$.
Therefore, the expression is greater than $0$. This means that $\max\lam[(\Id-\amat)^{-1/2}\bmat(\Id-\amat)^{-1/2}]\geq 1$, and accordingly
$\max\lam[(\Id-\amat)^{-1}\bmat] \geq 1$

Note that $\max\lam[(\Id-\amat)^{-1}\bmat]$ is always non-negative by Lemma \ref{lem:psd_prod}. This concludes the
proof.
\end{proof}

\subsection{Validity of $\knorm$ and approximations}

\label{app:knorm_highd_approx}

The analysis of \citet{paquette_sgd_2021} established the following approximation for
$\knorm$:
\begin{equation}
\knorm\approx \hat{\knorm}_{HD} = \frac{\lr}{\B}\sum_{\al=1}^{\D} \frac{\lam_{\al}}{2-\lr\lam_{\al}}
\end{equation}
This approximation holds in the limit of large $\D$, with sufficiently smooth convergence of
the spectrum of $\frac{1}{\D}\J\J^{\tpose}$ to its limiting distribution, and a rotational
invariance assumption on the distribution of eigenvectors in the limit. For $\lr\lam\ll 2$,
there is an even simpler approximator:
\begin{equation}
\knorm\approx \hat{\knorm}_{tr} = \frac{\lr}{\B}\tr[\ntk]
\end{equation}

We can compare the approximations to $\knorm$ in different settings, and in turn compare
$\knorm$ to the exact $\max||\lam[\linop]||$. We performed numerical experiments in
$3$ settings ($\D = 100$, $\P = 120$, $\B = 5$):
\begin{itemize}
    \item \textbf{Flat spectrum.} Here $\J$ was chosen to have i.i.d. elements, and the resulting
    spectrum limits to Marchenko-Pastur in the high dimensional limit. This is the setting where
    $\knorm$ and its approximations best capture $\max||\lam[\linop]||$.
    \item \textbf{Dispersed spectrum.} Here we chose a spectrum $\lam_{\al} = 1/(\al^{2}+1)$ for
    the NTK, where the eigenvectors of $\ntk$ were chosen from a rotationally invariant distribution.
    This causes $\hat{\knorm}_{tr}$ to differ from $\hat{\knorm}_{HD}$ and $\hat{\knorm}_{HD}$
    differs from $\knorm$, but $\knorm$ still approximates $\max||\lam[\linop]||$.
    \item \textbf{Localized eigenvectors.} Here $\ntk = \diag(|\m{s}|)+\frac{1}{\D}\J_{0}\J_{0}^{\tpose}$ for a vector $\m{s}$ drawn i.i.d. from
    a Gaussian with $\sigma = 0.1$, and $\J_{0}$ from an i.i.d. Gaussian. This causes $\ntk$ to have additional weight
    on the diagonal, and causes the eigenvectors to delocalize in the coordinate basis. This is
    the most ``adversarial'' setup for the approximation scheme, and $\knorm$ no longer
    predicts $\max||\lam[\linop]||$ to high accuracy.
\end{itemize}

We can see the various stability measures as a function of $\lr$
in Figure \ref{fig:knorm_stability_measures}. As previously explained, $\max||\lam[\linop]||$ takes
a value close to $1$ for small learning rates, until the S-EOS is reached and it rises above $1$.
In contrast, $\knorm$ and its approximators start at $0$ for small learning rate and approach $1$
monotonically from below - by design. In all cases, the maximum eigenvalue is well below the
edge of stability value of $2/\lr$ (purple curve), so any instability is due to the S-EOS.

In the flat spectrum case (Figure \ref{fig:knorm_stability_measures}, left), $\knorm$ and its
approximators all give good predictions of the S-EOS - or equivalently, the region
of learning rates where $\max||\lam[\linop]||>1$. In the dispersed spectrum setting
(Figure \ref{fig:knorm_stability_measures}, middle), the differences between the approximations
are more apparent. However, $\knorm$ still predicts the S-EOS.

Finally, in the localized eigenvectors case, even $\knorm$ is a bad approximator of the S-EOS
(Figure \ref{fig:knorm_stability_measures}, right). The dynamics becomes unstable for values of
$\knorm$ well below $1$. It is not surprising that $\knorm$ does not capture the behavior of
$\max||\lam[\linop]||$ here. From the high dimensional analysis, we know that the effect of the noise
term is to evenly couple the different eigenmodes of $\ntk$; this is possible because the
eigenbasis of $\ntk$ has no correlation with the coordinate eigenbasis. Having eigenvectors
correlated with the coordinate basis breaks this property and leads to the approximations
leading to $\knorm$ to become bad.

\begin{figure*}[h]
    \centering
    \includegraphics[width=0.42\linewidth]{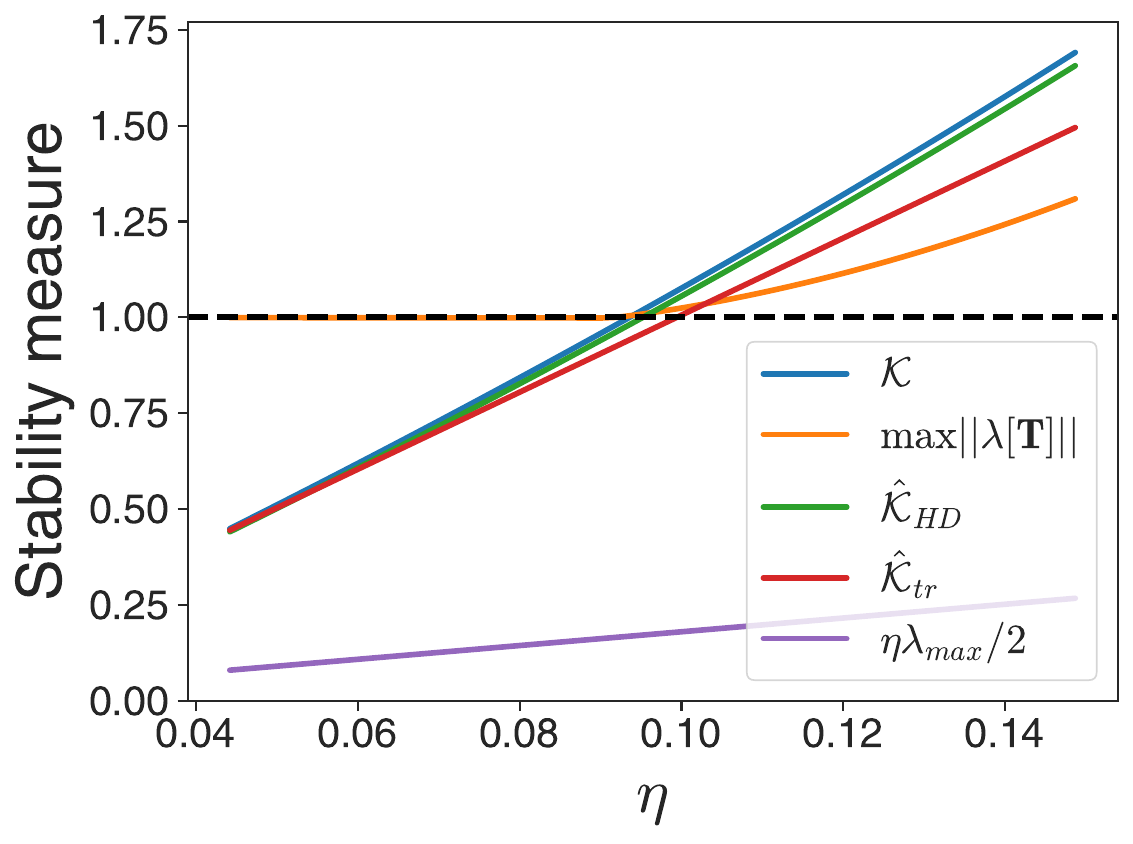}
    \begin{tabular}{cc}
 \includegraphics[width=0.42\linewidth]{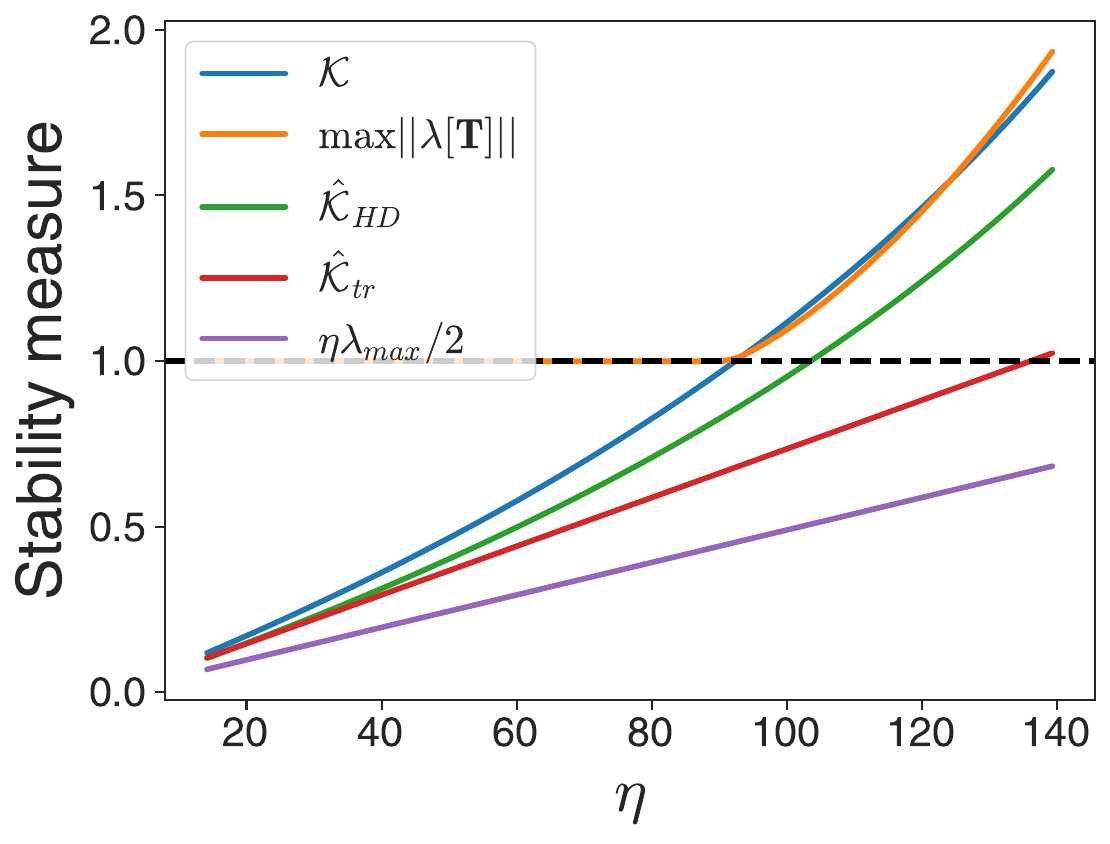} &
\includegraphics[width=0.42\linewidth]{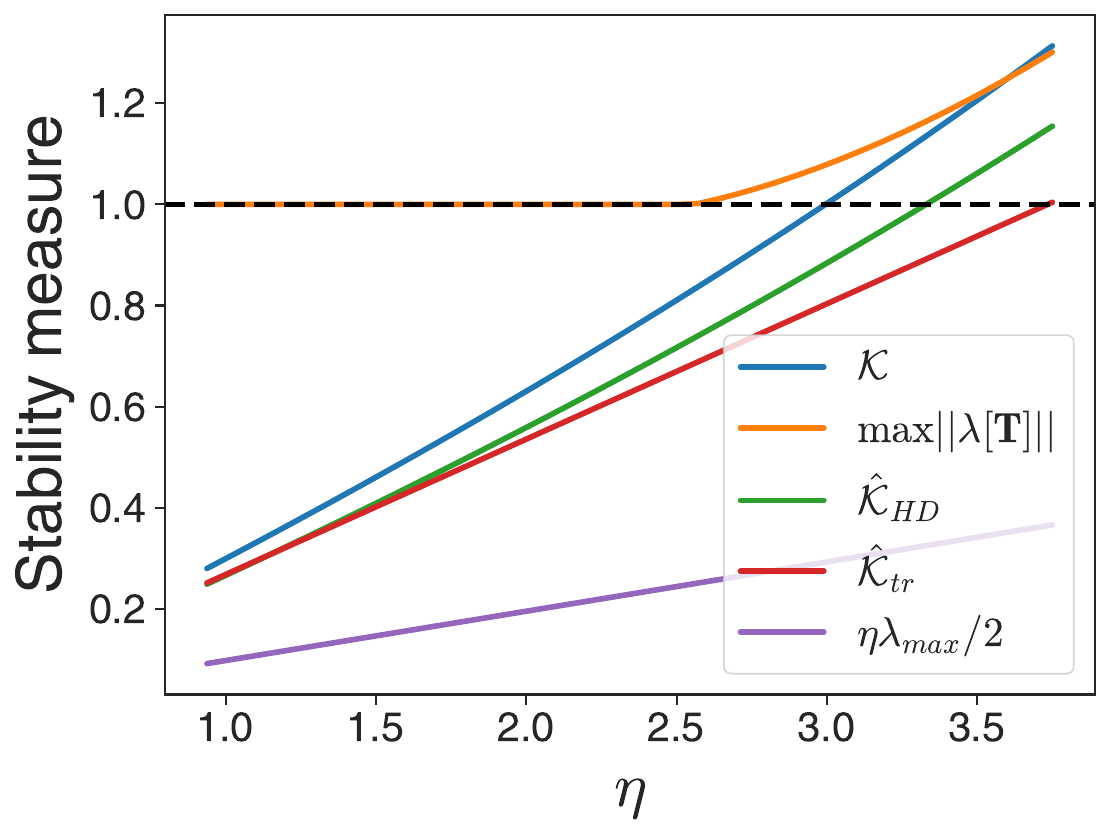}
\end{tabular}
    \caption{Stability measures for SGD in linear model with $\D = 100$, $\P = 120$, and $\B = 5$. For i.i.d. initialization of $\J$, NTK spectrum is not very varied and approximations are close to $\knorm$ (top). However, in the case of dispersed spectra ($\lam_{\al} = 1/(\al^{2}+1)$, bottom left), and localized eigenvectors (NTK $\diag(|\m{s}|)+\frac{1}{\D}\J_{0}\J_{0}^{\tpose}$, $\m{s}$ and $\J_{0}$ i.i.d, bottom right)
    approximations are less accurate. In all cases, maximum eigenvalue is well below stability
    threshold (red curves).}
    \label{fig:knorm_stability_measures}
\end{figure*}

The differences between the setups can be made even more clear by looking at the loss at late
times, as a function of the stability measures (Figure \ref{fig:knorm_loss_detailed}, for $10^{4}$
steps). We see that $\max||\lam[\linop]|| = 1$ predicts the transition from convergent to
divergent well in all settings, $\knorm = 1$ predicts it well in all but the localized eigenvectors
setting, and and $\hat{\knorm}_{HD}$ alread starts to become inaccurate in the dispersed setting.

\begin{figure*}[h]
    \centering
    \begin{tabular}{ccc}
\includegraphics[width=0.27\linewidth]{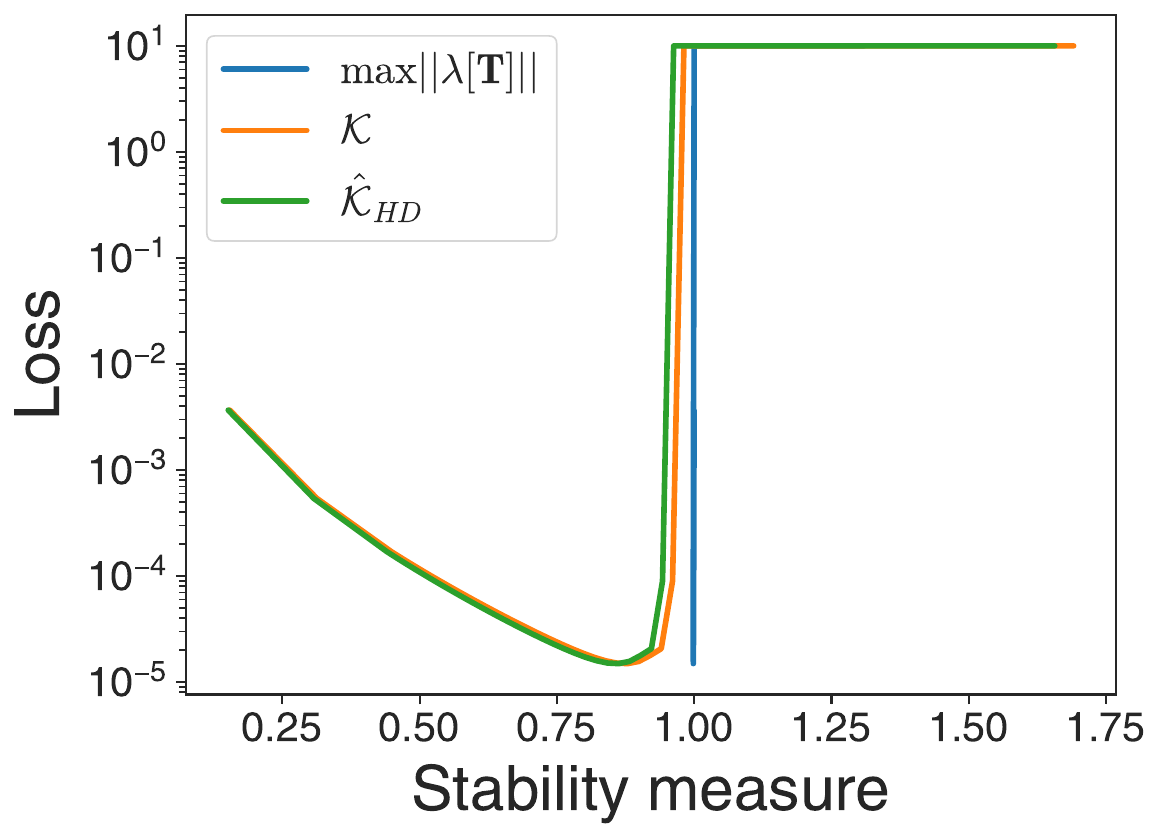} & \includegraphics[width=0.27\linewidth]{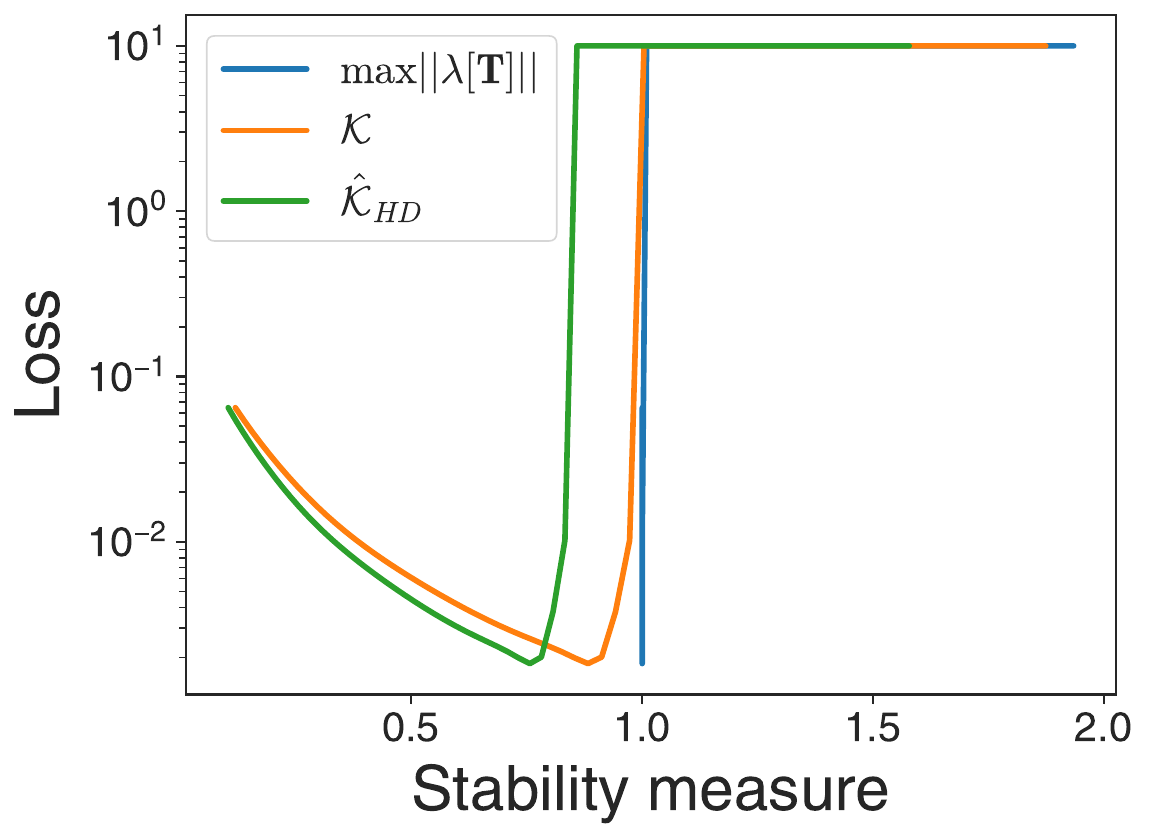} &
\includegraphics[width=0.27\linewidth]{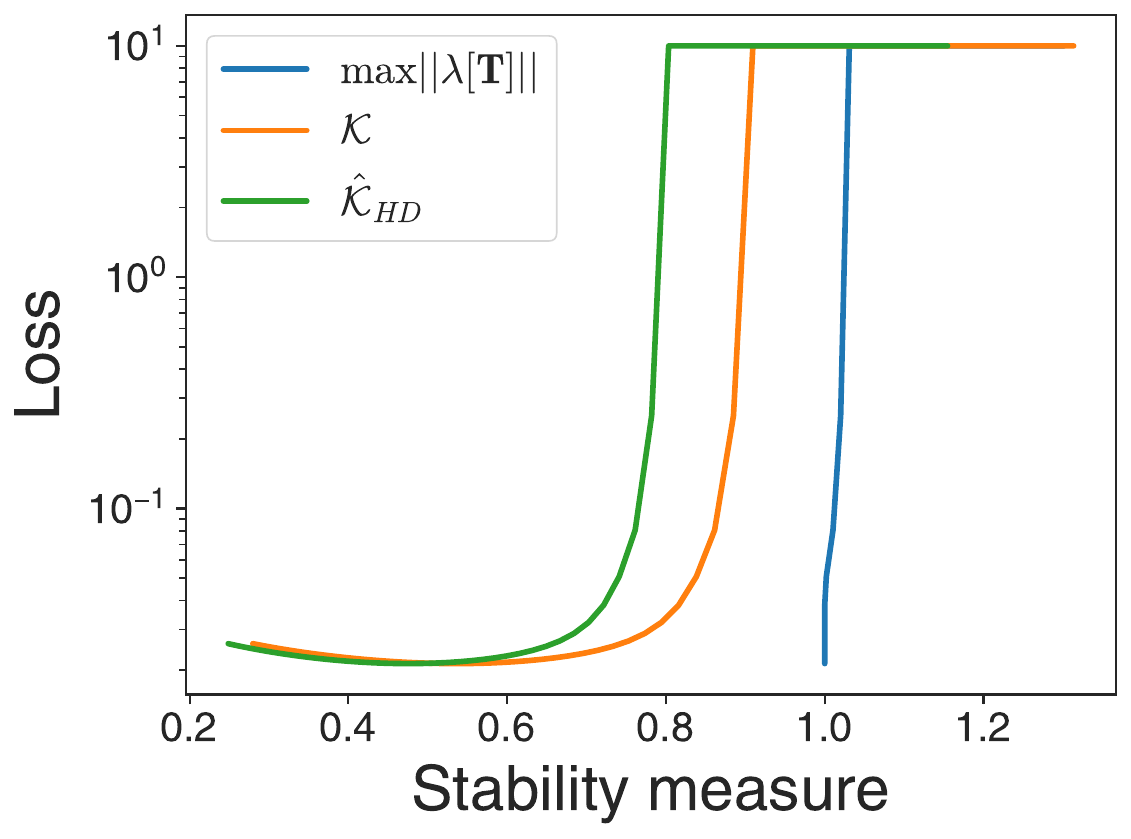}
\end{tabular}
    \caption{Loss versus stability measures after $10^{4}$ steps, for flat spectrum, dispersed
    spectrum, and localized eigenvector settings. All curves saturated at loss value $10$
    for ease of plotting. With a flat spectrum (left), all three of
    $\max||\lam[\linop]||$, $\knorm$, and $\hat{\knorm}_{HD}$ predict divergence of loss
    at the critical value of $1$. For dispersed spectrum, $\knorm$ is still a good approximator
    of the convergent regime but $\hat{\knorm}_{HD}$ is less so. For localized eigenvector setting,
    only $\max||\lam[\linop]||$ predicts the transition.
    }
    \label{fig:knorm_loss_detailed}
\end{figure*}

This analysis suggests that $\hat{\knorm}_{HD}$ and $\hat{\knorm}_{tr}$ are conservative
estimators of the noise level, but that $\knorm$ itself is a good estimator of the
S-EOS as long as the eigenvectors of $\ntk$ remain delocalized.
The approximations tend to get better in high dimensions,
but even in low dimensions they still provide valuable information on parameter ranges where
the optimization enters the noise-dominated regime.

\subsection{$\knorm$ and momentum}

\label{app:s_eos_mom}

In this section, we analyze the noise kernel norm with momentum.

In the high-dimensional isotropic case, we can compute $\knorm$ for SGD with
momentum. Consider
momentum with parameter $\mom$, where the updates evolve as:
\begin{equation}
\v_{t+1} = \mom\v_{t}+\g_{t}
\end{equation}
\begin{equation}
\th_{t+1} = \th_{t}+\lr \v_{t}
\end{equation}
for gradient $\g$. In a linear model,
\begin{equation}
\g_{t} = -\J\J^{\tpose}\pmat_{t}\z_{t}
\end{equation}

As per the analysis of \citet{paquette_sgd_2021}, the second moment dynamics
of $\z$ close once again.
In the high dimensional limit, where $\cmat = \frac{1}{\D}\m{1}\m{1}^{\tpose}$, we get the
noise kernel norm given by:
\begin{equation}
\knorm = \bfr(1-\bfr)\frac{1}{\D}\sum_{t=0}^{\infty}\sum_{\al=1}^{\D}\frac{2\lr^{2}\lam_{\al}^{2}}{\Om^{2}_{\al}-4\mom}\left(\mom^{t+1}+\frac{1}{2}\nu_{+,\al}^{t+1}+\frac{1}{2}\nu_{-,\al}^{t+1}\right)
\end{equation}
where $\Om_{\al} = 1-\bfr\lr\lam_{\al}+\mom$ and
\begin{equation}
\nu_{\al,\pm} = \frac{-2\mom+\Om_{\al}^{2}\pm\sqrt{\Om_{\al}^{2}(\Om_{\al}^{2}-4\mom)}}{2}
\end{equation}
We can simplify this expression considerably. Carrying out the sum over $t$ we have:
\begin{equation}
\knorm = \bfr(1-\bfr)\frac{1}{\D}\sum_{\al=1}^{\D}\frac{2\lr^{2}\lam_{\al}^{2}}{\Om^{2}_{\al}-4\mom}\left(\frac{\mom}{1-\mom}+\frac{1}{2}\left[\frac{\nu_{+,\al}}{1-\nu_{+, \al}}+\frac{\nu_{-,\al}}{1-\nu_{-, \al}}\right]\right)
\end{equation}
If we write $\nu_{\al, \pm} = \frac{1}{2}(a\pm b)$, we have:
\begin{equation}
\frac{\nu_{+,\al}}{1-\nu_{+, \al}}+\frac{\nu_{-,\al}}{1-\nu_{-, \al}} = \frac{a+b}{2-(a+b)}+\frac{a-b}{2-(a-b)} = \frac{(a-b)(2-(a+b))+(a+b)(2-(a-b))}{4-4a+(a^{2}-b^{2})}
\end{equation}
\begin{equation}
\frac{\nu_{+,\al}}{1-\nu_{+, \al}}+\frac{\nu_{-,\al}}{1-\nu_{-, \al}} = \frac{4a -2(a^{2}-b^{2})}{4-4a+(a^{2}-b^{2})}
\end{equation}
We have:
\begin{equation}
a^{2}-b^{2} = (-2\mom+\Om_{\al}^{2})^{2}-(\Om_{\al}^{2}(\Om_{\al}^{2}-4\mom)) = 4\mom^{2}
\end{equation}
Simplification gives us
\begin{equation}
\frac{\nu_{+,\al}}{1-\nu_{+, \al}}+\frac{\nu_{-,\al}}{1-\nu_{-, \al}} = \frac{4(\Om^{2}_{\al}-2\mom) -8\mom^{2}}{4-4(\Om^{2}_{\al}-2\mom)+4\mom^{2}} = \frac{(\Om^{2}_{\al}-2\mom) -2\mom^{2}}{1-(\Om^{2}_{\al}-2\mom)+\mom^{2}}
\end{equation}
\begin{equation}
\frac{\nu_{+,\al}}{1-\nu_{+, \al}}+\frac{\nu_{-,\al}}{1-\nu_{-, \al}} =  \frac{\Om^{2}_{\al}-2\mom -2\mom^{2}}{-\Om^{2}_{\al}+(1+\mom)^{2}}
\end{equation}
Substituting $\Om_{\al} = 1-\bfr\lr\lam_{\al}+\mom$, we have:
\begin{equation}
\frac{\nu_{+,\al}}{1-\nu_{+, \al}}+\frac{\nu_{-,\al}}{1-\nu_{-, \al}} =  \frac{(1-\bfr\lr\lam_{\al}+\mom)^{2}-2\mom -2\mom^{2}}{-(1-\bfr\lr\lam_{\al}+\mom)^{2}+(1+\mom)^{2}} = \frac{(1-\bfr\lr\lam_{\al})^{2}-2\mom\bfr\lr\lam_{\al}-\mom^{2}}{2(1+\mom)\bfr\lr\lam_{\al}-(\bfr\lr\lam_{\al})^{2}}
\end{equation}
The denominator of $\knorm$ can be written as
\begin{equation}
\Om_{\al}^{2}-4\mom = (1-\bfr\lr\lam_{\al}+\mom)^{2}-4\mom = (1-\mom)^{2}-2(1+\mom)\bfr\lr\lam_{\al}+(\bfr\lr\lam_{\al})^{2}
\end{equation}
Therefore we can re-write the noise kernel norm as:
\begin{equation}
\knorm = \frac{\bfr(1-\bfr)}{2\D}\sum_{\al=1}^{\D}\frac{2\lr^{2}\lam_{\al}^{2}}{(1-\mom)^{2}-2(1+\mom)\bfr\lr\lam_{\al}+(\bfr\lr\lam_{\al})^{2}}\left(-\frac{2\mom}{1-\mom}+\frac{(1-\bfr\lr\lam_{\al})^{2}-2\mom\bfr\lr\lam_{\al}-\mom^{2}}{2(1+\mom)\bfr\lr\lam_{\al}-(\bfr\lr\lam_{\al})^{2}}\right)
\end{equation}

As a sanity check, for $\mom = 0$ we have $\Om_{\al} = 1-\bfr\lr\lam_{\al}$ and
\begin{equation}
\frac{\nu_{+,\al}}{1-\nu_{+, \al}}+\frac{\nu_{-,\al}}{1-\nu_{-, \al}} = \frac{(1-\bfr\lr\lam_{\al})^{2}}{2\bfr\lr\lam_{\al}-(\bfr\lr\lam_{\al})^{2}}
\end{equation}
which leads to
\begin{equation}
\knorm = \frac{\bfr(1-\bfr)}{2\D}\sum_{\al=1}^{\D}\frac{2\lr^{2}\lam_{\al}^{2}}{(1-\bfr\lr\lam_{\al})^{2}}\frac{(1-\bfr\lr\lam_{\al})^{2}}{2\bfr\lr\lam_{\al}-(\bfr\lr\lam_{\al})^{2}} = (1-\bfr)\frac{1}{\D}\sum_{\al=1}^{\D}\frac{\lr\lam_{\al}}{2-\bfr\lr\lam_{\al}}
\end{equation}
as before.

If we re-write the momentum as $\mom = 1-\momd$, we have:
\begin{equation}
\frac{\nu_{+,\al}}{1-\nu_{+, \al}}+\frac{\nu_{-,\al}}{1-\nu_{-, \al}} =   \frac{(1-\bfr\lr\lam_{\al})^{2}-2\bfr\lr\lam_{\al}+2\momd\bfr\lr\lam_{\al}-1+2\momd-\momd^{2}}{{2(2-\momd)\bfr\lr\lam_{\al}-(\bfr\lr\lam_{\al})^{2}}}
\end{equation}
\begin{equation}
\frac{\nu_{+,\al}}{1-\nu_{+, \al}}+\frac{\nu_{-,\al}}{1-\nu_{-, \al}} =   \frac{-4\bfr\lr\lam_{\al}+(\bfr\lr\lam_{\al})^{2}+2\momd(1+\bfr\lr\lam_{\al})-\momd^{2}}{{2(2-\momd)\bfr\lr\lam_{\al}-(\bfr\lr\lam_{\al})^{2}}}
\end{equation}
This gives us the noise kernel norm:
\begin{equation}
\knorm = \frac{\bfr(1-\bfr)}{\D}\sum_{\al=1}^{\D}\frac{\lr^{2}\lam_{\al}^{2}}{-2\bfr\lr\lam_{\al}+2\momd\bfr\lr\lam_{\al}+\momd^{2}+(\bfr\lr\lam_{\al})^{2}}\left(-\frac{2(1-\momd)}{\momd}+\frac{-4\bfr\lr\lam_{\al}+(\bfr\lr\lam_{\al})^{2}+2\momd(1+\bfr\lr\lam_{\al})-\momd^{2}}{{2(2-\momd)\bfr\lr\lam_{\al}-(\bfr\lr\lam_{\al})^{2}}}\right)
\end{equation}
We have already simplified for no momentum ($\momd = 1$). Now we consider the opposite limit of $\momd \ll 1$.
We are also interested in $\bfr\lr\lam_{\al}\ll 1$.
In order for the denominator (of each term in the $t$ sum) to be non-negative, we take: $\momd\gg \sqrt{\bfr\lr\lam_{\al}}$. To lowest order we have:
\begin{equation}
\knorm  \approx \bfr(1-\bfr)\frac{1}{\D}\sum_{\al=1}^{\D}\frac{\lr^{2}\lam_{\al}^{2}}{\momd^{2}}\left(
-\frac{2}{\momd}+\frac{\momd}{2\bfr\lr\lam_{\al}}\right)
\end{equation}
which gives us
\begin{equation}
\knorm  \approx \bfr(1-\bfr)\left[-\frac{2}{\D}\sum_{\al=1}^{\D}\frac{\lr^{2}\lam_{\al}^{2}}{\momd^{3}}  +  \frac{1}{\D}\sum_{\al=1}^{\D}\frac{\lr\lam_{\al}}{2\momd \bfr} \right]
\end{equation}
If we perform the familiar conversions $\lam_{\al} = \D\lam_{\al}$ and $\lr = \lr_{0}/\B$, we have:
\begin{equation}
\knorm  \approx \left(\frac{1}{\B}-\frac{1}{\D}\right)\left[\frac{1}{2\momd}\sum_{\al=1}^{\D}\lr_{0}\lam_{\al}- \frac{2}{{\momd^{3}}}\sum_{\al=1}^{\D}\lr_{0}^{2}\lam_{\al}^{2}\right]
\end{equation}
Note that $\momd^{2}\gg \bfr\lr\lam_{\al} = \B\lr\lam_{\al}$. Therefore,
\begin{equation}
\frac{1}{{\momd^{3}}}\sum_{\al=1}^{\D}\lr_{0}^{2}\lam_{\al}^{2} \ll \frac{1}{\momd} \frac{1}{\B} \sum_{\al=1}^{\D}\lr_{0}\lam_{\al}
\end{equation}
At large batch size $\B$ the first term dominates and we have
\begin{equation}
\knorm  \approx \frac{1}{2}\left(\frac{1}{\B}-\frac{1}{\D}\right)\sum_{\al=1}^{\D}(\lr_{0}/\momd)\lam_{\al}
\end{equation}
The lowest order correction is evidently to replace $\lr_{0}$ with $\lr_{0}/\momd$. The form of the corrections
suggest that as $\momd$ increases ($\mom$ decreasing from $1$), the net effect is some extra stabilization
relative to the effective learning rate $\lr_{0}/\momd$.

\subsection{$\knorm$ and $L^{2}$ regularization}

\label{app:s_eos_l2}

Consider $L^{2}$ regularization in a linear model, with strength parameter $\rad$. The dynamical
equation for $\z$ becomes:
\begin{equation}
\z_{t+1}-\z_{t} = -\lr\left(\J \J^{\top}\pmat_{t}\z_{t} + \rad\J^{\tpose}\th_{t}\right)
\end{equation}
This gives us
\begin{equation}
\z_{t+1}-\z_{t} = -\lr\left(\J \J^{\top}\pmat_{t} +\rad\Id\right)\z_{t}
\end{equation}
The covariance evolves as
\begin{equation}
\z_{t+1}\z_{t+1}^{\tpose}-\z_{t}\z_{t} = -\lr\left(\J\J^{\tpose}\pmat_{t} +\rad\Id\right)\z_{t}\z_{t}^{\tpose}-\lr\z_{t}\z_{t}^{\tpose}\left(\J\J^{\tpose}\pmat_{t} +\rad\Id\right)+\lr^{2}\left(\J\J^{\tpose}\pmat_{t} +\rad\Id\right)\z_{t}\z_{t}^{\tpose}\left(\J\J^{\tpose}\pmat_{t} +\rad\Id\right)
\end{equation}
Averaging over $\pmat$ once again, we have:
\begin{equation}
\begin{split}
\expect_{\pmat}[\z_{t+1}\z_{t+1}^{\tpose}-\z_{t}\z_{t}] & = -\lr\left(\bfr\J\J^{\tpose} +\rad\Id\right)\z_{t}\z_{t}^{\tpose}-\lr\z_{t}\z_{t}^{\tpose}\left(\bfr\J\J^{\tpose} +\rad\Id\right)+\\
& \lr^{2}\left(\bfr\J\J^{\tpose} +\rad\Id\right)\z_{t}\z_{t}^{\tpose}\left(\bfr\J\J^{\tpose} +\rad\Id\right) + \bfr(1-\bfr)\lr^{2} \J\J^{\top}\diag\left[\z_{t}\z_{t}^{\top}\right]\J\J^{\top}
\end{split}
\end{equation}
If we once again define $\pvec_{t}$ to be the vector with elements $\expect_{\pmat}[(\v_{\al}\cdot\z_{t})^{2}]$, where
the $\v_{\al}$ are the eigenvectors of $\J\J^{\top}$, we have
\begin{equation}
\pvec_{t} = \dmat^{t}\pvec_{0},~\dmat\equiv (\Id-\bfr\lr\lmat-\lr\rad\Id)^2 +\bfr(1-\bfr)\lr^2\lmat^2\cmat
\end{equation}
where $\cmat$ is defined as before. In the high-dimensional limit, the noise kernel norm
becomes
\begin{equation}
||\mathcal{K}|| = \bfr(1-\bfr)\sum_{\al} \frac{\lr^{2}\lam_{\al}^{2}}{1-(1-\bfr\lr\lam_{\al}-\lr\rad)^{2}}
\end{equation}
This is bounded from above by the $\rad = 0$ case:
\begin{equation}
\bfr(1-\bfr)\sum_{\al} \frac{\lr^{2}\lam_{\al}^{2}}{1-(1-\bfr\lr\lam_{\al}-\lr\rad)^{2}} \leq  \bfr(1-\bfr)\sum_{\al} \frac{\lr^{2}\lam_{\al}^{2}}{1-(1-\bfr\lr\lam_{\al})^{2}}
\end{equation}
Which suggests that the regularization decreases the noise kernel norm in this case.

Simplifying, we have:
\begin{equation}
||\mathcal{K}|| = (\bfr^{-1}-1)\sum_{\al} \frac{\bfr^{2}\lr^{2}\lam_{\al}^{2}}{2(\bfr\lr\lam_{\al}+\lr\rad)-(\bfr\lr\lam_{\al}+\lr\rad)^{2}}
\end{equation}
Dividing the numerator and denominator by $\bfr\lr\lam_{\al}$, we have
\begin{equation}
||\mathcal{K}|| = (\bfr^{-1}-1)\sum_{\al} \frac{\bfr\lr\lam_{\al}}{2-\bfr\lr\lam_{\al}+(\rad/\lam_{\al})-2\lr\rad-\lr\rad^{2}/\lam_{\al}}
\end{equation}
We can look at limiting behaviors to see two different types of contributions. Assume
$\lr\rad\ll1$. We have:
\begin{equation}
\frac{\bfr\lr\lam_{\al}}{2-\bfr\lr\lam_{\al}+(\rad/\lam_{\al})-2\lr\rad-\lr\rad^{2}/\lam_{\al}} \approx
\begin{cases}
\frac{\bfr\lr\lam_{\al}}{2-\bfr\lr\lam_{\al}} & {\rm~if~}\bfr\lr\lam_{\al}\gg \lr\rad\\
\frac{\bfr\lam_{\al}}{\rad}\frac{\bfr\lr\lam_{\al}}{2} & {\rm~if~}\bfr\lr\lam_{\al}\ll \lr\rad
\end{cases}
\end{equation}
Evidently the effect of the normalization is to decrease the contribution of eigenvalues such that
$\bfr\lam_{\al} < \rad$.

\aga{

\begin{itemize}
    \item Non-MSE analysis?
\end{itemize}

}

\subsection{Beyond MSE loss}

\label{app:non_mse_loss}

Here we consider the stability of SGD under more general convex losses. We will derive a stability condition
by expanding around a minimum. The upshot is that under certain assumptions, we can derive a noise kernel norm
$\knorm$ for non-MSE losses, and there is a regime where we have the estimator
\begin{equation}
\knorm \approx \hat{\knorm}_{tr}\equiv \frac{\lr}{2\B} \tr\left(\frac{1}{\D}\J^{\tpose}\H_{\z}^*\J\right)
\end{equation}
where $\H^*_{\z}$ is the Hessian of the loss with respect to the logits at the minimum.
We note that $\J^{\tpose}\H_{\z}^*\J$ is the Gauss-Newton part of the Hessian.

\subsubsection{Expansion around a fixed point}

Consider a linear model $\z_{t} = \J\th_{t}$. Here $\theta_{t}$ is the $\P$-dimensional parameter vector,
and $\z_{t}$ is the output. If each data point has $\C$ outputs, then we flatten them so that $\z_{t}$ has
dimension $\C\D$. $\J$ is the (flattened) Jacobian with dimension $\C\D\times\P$.

Consider the loss function
\begin{equation}
\Lo(\th) = \frac{1}{\D}\sum_{\al=1}^{\D}\Lo_{z}(\z_{\al}(\theta_{t}))
\end{equation}
Here $\Lo_{z}$ is the per-example loss, convex in the inputs. The update equation for $\th$ under SGD with batch
size $\B$ is
\begin{equation}
\th_{t+1}-\th_{t} = -\frac{\lr}{\B}\J^{\tpose}\pmat_{t}\nabla_{\z}\Lo(\z_{t})
\end{equation}
where $\pmat_{t}$ is the projection matrix with exactly $\B$ $1$s on the diagonal, drawn i.i.d. at each step. The
update equation for $\z_{t}$ is
\begin{equation}
\z_{t+1}-\z_{t} = -\frac{\lr}{\B}\J\J^{\tpose}\pmat_{t}\nabla_{\z}\Lo(\z_{t})
\end{equation}

In general this is a non-linear stochastic system in $\z_{t}$, whose moments don't close at any finite order.
However, we can make progress by expanding around a minimum.
Let $\z^*$ be a minimum of the loss. We have:
\begin{equation}
\nabla_{\z}\Lo(\z) = \H_{\z}(\z^*)(\z-\z^*)+O(||\z-\z^*||^{2})
\end{equation}
where $\H_{\z}$ is the PSD Hessian of $\Lo$ with respect to the logits $\z$. Therefore near $\z^*$ we can write:
\begin{equation}
\z_{t+1}-\z_{t} = -\frac{\lr}{\B}\J\J^{\tpose}\pmat_{t}\H_{\z}(\z^*)(\z_{t}-\z^*)+O(||\z-\z^*||^{2})
\end{equation}
Let $\tl{\z} \equiv \z-\z^*$. Neglecting terms of $O(||\tl{\z}^*||^{2})$ we have:
\begin{equation}
\tl{\z}_{t+1}-\tl{\z}_{t} = -\frac{\lr}{\B}\J\J^{\tpose}\pmat_{t}\H_{\z}^*\tl{\z}_{t}
\end{equation}
where we denote $\H_{\z}^*\equiv \H_{\z}(\z^*)$.

This is similar to the dynamical equation for the MSE case, but with an additional PSD matrix factor. The second
moment equations are:
\begin{equation}
\begin{split}
\expect_{\pmat}[\tl{\z}_{t+1}\tl{\z}_{t+1}^{\top}-\tl{\z}_{t}\tl{\z}_{t}^{\top}|\tl{\z}_{t}] & = -\frac{\lr}{\D} (\J\J^{\top}\H^{*}_{\z}\tl{\z}_{t}\tl{\z}_{t}^{\top}+\tl{\z}_{t}\tl{\z}_{t}^{\top}\H^{*}_{\z}\J\J^{\top}) +\frac{\lr^{2}}{\D^{2}} \J\J^{\top}\H^{*}_{\z}\tl{\z}_{t}\tl{\z}_{t}^{\top}\H^{*}_{\z}\J\J^{\top}\\
&+ (\bfr^{-1}-1)\frac{\lr^{2}}{\D^{2}} \J\J^{\top}\H^{*}_{\z}\diag\left[\tl{\z}_{t}\tl{\z}_{t}^{\top}\right]\H^{*}_{\z}\J\J^{\top}
\end{split}
\label{eq:cov_dyn_gen_loss_lin}
\end{equation}

Using the PSDness of $\H_{\z}^{*}$,
we can define the modified covariance matrix $\modcov_{t}\equiv \H_{\z}^{1/2}\tl{\z}_{t}\tl{\z}_{t}^{\tpose}\H_{\z}^{1/2}$.
The dynamics are given by:
\begin{equation}
\expect_{\pmat}[\modcov_{t+1}-\modcov_{t}|\modcov_{t}]  = -\lr(\modntk\modcov_{t}+\modcov_{t}\modntk) +\lr^{2}(\modntk\modcov_{t}\modntk+
(\bfr^{-1}-1) \modntk(\H_{\z}^{*})^{1/2}\diag\left[(\H^{*}_{\z})^{-1/2}\modcov_{t}(\H^{*}_{\z})^{-1/2}\right](\H_{\z}^{*})^{1/2}\modntk)
\label{eq:cov_dyn_gen_loss_lin}
\end{equation}
where we define $\modntk\equiv \frac{1}{\D} (\H_{\z}^*)^{1/2}\J\J^{\tpose}(\H_{\z}^*)^{1/2}$. Note that $\modntk$ is the Gram
matrix of the Gauss-Newton part of the Hessian, up to a normalizing constant - they have the same non-zero eigenvalues.

We can once again attempt to
work in a diagonal basis to reduce the complexity of the analysis. Consider the eigendecomposition
$\modntk = \V\lmat\V^{\tpose}$. If $\tl{\Svec}\equiv \V^{\tpose}\modcov\V$, we have:
\begin{equation}
\begin{split}
\expect_{\pmat}[\tl{\Svec}_{t+1}-\tl{\Svec}_{t}|\tl{\Svec}_{t}] & = -\lr(\lmat\tl{\Svec}_{t}+\tl{\Svec}_{t}\lmat) +\lr^{2}(\lmat\tl{\Svec}_{t}\lmat+
\\
& (\bfr^{-1}-1) \lmat\V^{\tpose}(\H_{\z}^{*})^{1/2}\diag\left[(\H^{*}_{\z})^{-1/2}\V\tl{\Svec}_{t}\V^{\tpose}(\H^{*}_{\z})^{-1/2}\right](\H_{\z}^{*})^{1/2}\V\lmat)
\end{split}
\label{eq:cov_dyn_gen_loss_svec}
\end{equation}
This equation defines a linear operator $\tl{\linop}$ whose maximum eigenvalue defines stability. We have
\begin{equation}
\expect_{\pmat}[(\tl{\Svec}_{t+1})_{\mu\nu}|\tl{\Svec}_{t}] =\sum_{\bt\gm}\tl{\linop}_{\mu\nu,\bt\gm}(\tl{\S}_{t})_{\bt\gm}
\end{equation}
where $\tl{\linop}$ is given by
\begin{equation}
\begin{split}
\tl{\linop}_{\mu\nu,\bt\gm} = & \delta_{\mu\bt,\nu\gm}(1-\lr(\lam_{\mu}+\lam_{\nu})+\lr^{2}\lam_{\mu}\lam_{\nu})\\
& +(\bfr^{-1}-1)\lr^{2}\lam_{\mu}\lam_{\nu}\left[\sum_{\al,\dl,\eps,\phi,\psi}\V_{\phi\mu}(\H_{\z}^{*})^{1/2}_{\al\phi}(\H_{\z}^{*})^{-1/2}_{\dl\al}\V_{\dl\bt}\V_{\eps\gm}   (\H_{\z}^{*})^{-1/2}_{\al\eps}(\H_{\z}^{*})^{1/2}_{\al\psi}\V_{\psi\nu}\right]
\end{split}
\end{equation}
where $\lam_{\mu}$ is the $\mu$th eigenvalue from $\lmat$.
If we reduce the $(\D\C)^{2}\times(\D\C)^{2}$ system by restricting to the diagonal $\pvec = \diag(\tl{\Svec})$
\begin{equation}
(\pvec_{t+1})_{\mu} = \sum_{\bt} \tl{\linop}_{\mu\mu,\bt\bt}(\pvec_{t})_{\bt}
\end{equation}
which becomes, in matrix notation
\begin{equation}
\pvec_{t+1} = \dmat\pvec_{t},~\dmat \equiv [(\Id-\lr\lmat)^2 +\lr^2(\bfr^{-1}-1)\lmat^2\tl{\cmat}]
\end{equation}
with
\begin{equation}
\tl{\cmat}_{\bt\mu} \equiv \sum_{\al,\dl,\phi}[\V_{\phi\mu}(\H_{\z}^{*})^{1/2}_{\al\phi}]^{2}[(\H_{\z}^{*})^{-1/2}_{\dl\al}\V_{\dl\bt}]^{2}
\end{equation}
Note: $\H_{\z}^*$ is block-diagonal with respect to the $\C\times\C$ blocks for the $\D$ datapoints. If $\H_{\z}^*$ is diagonal within each
block (no logit-logit interactions), then $\tl{\cmat} = \cmat$ from the MSE case. Otherwise, $\tl{\cmat}$ is a slightly
different positive matrix.

This means that we can derive a noise kernel norm $\knorm$ following the analysis in Section 3.2 of the main text,
using $\amat = (\Id-\lr\lmat)^2$, $\bmat = \lr^2(\bfr^{-1}-1)\lmat\tl{\cmat}\lmat$.

\subsubsection{Relationship to previous analysis}

\label{sec:non_mse_caveats}

This analysis is analogous to the MSE case, with the modified NTK $\modntk$ taking the role of the NTK - meaning the
Gauss-Newton eigenvalues are key. If
$\tl{\cmat}\approx \frac{1}{\D}\m{1}\m{1}^{\tpose}$, then we recover the estimators from Section 3.3, replacing
$\ntk$ with $\modntk$ - or alternatively,
\begin{equation}
\knorm \approx \hat{\knorm}_{tr}\equiv \frac{\lr}{2\B} \tr\left(\modntk\right) = \frac{\lr}{2\B} \tr\left(\frac{1}{\D}\J^{\tpose}\H_{\z}^*\J\right)
\end{equation}
The last expression is written in terms of the Gauss-Newton matrix at the minimum.

However, there are a few ways this quantity may suffer compare to the MSE one:
\begin{itemize}
    \item \textbf{Expansion around $\z^*$.} In order to derive a linear recurrence relation, we expanded around the minimum
    $\z^*$. If the dynamics is near but not at a minimum, an accurate computation would require finding $\z^*$, and computing
    $\modntk$ there. If the dynamics is not near a minimum, then the accuracy of the stability condition is unclear.
    \item \textbf{Restriction of $\tl{\linop}$ to the diagonal.} In order to derive $\knorm$ we reduce to the dynamics of the
    diagonal of the covariance only. For MSE loss previous work has justified this approximation in certain high dimensional
    limits; for more general loss functions this is not clear.
    \item \textbf{Nontrival structure of $\H^*_{\z}$.} In order to use efficient high-dimensional approximators of $\knorm$,
    it is useful for $\cmat$ to have a low-rank structure. In the MSE case this can be a good approximation because eigenvectors
    are delocalized in the coordinate basis; in the more general setting, this may no longer be the case. For example,
    cross-entropy could introduce additional correlations across members of the same class, different inputs, or the same inputs,
    different classes.
\end{itemize}

\section{Conservative sharpening in the quadratic regression model}

\label{app:cons_sharp_proofs}

\subsection{Quadratic regression model definition}

\label{app:quad_reg_model}

The quadratic regression model can be derived from a second order Taylor expansion
of a model $\f(\theta)$ on $\D$ outputs with $\P$-dimensional parameter vector
$\th$:
\begin{equation}
\f(\th) \approx \f(\th_{0})+\J_{0}[\th-\th_{0}] +\frac{1}{2}\Q[\th-\th_{0},\th-\th_{0}].
\end{equation}
Here
$\J_{0}\equiv \frac{\partial \f}{\partial\th}(\th_{0})$ is the $\D\times\P$-dimensional Jacobian at $\th_{0}$,
and $\Q \equiv \frac{\partial^{2}\f}{\partial\th\partial\th'}(\th_{0})$ is the $\D\times\P\times\P$-dimensional
\emph{model curvature}. For $\Q = 0$, we recover a linear regression model.We assume, WLOG, that $\th_{0} = 0$. This means we can write the model as
\begin{equation}
\f(\th) \approx \f(\th_{0})+\J_{0}[\th] +\frac{1}{2}\Q[\th,\th].
\end{equation}

For MSE loss with targets $\y_{tr}$, the full loss is
given by
\begin{equation}
\Lo(\th) = \frac{1}{2\D}||\z||^{2},~\z\equiv \f(\th)-\y_{tr}.
\end{equation}
while the loss with minibatch SGD, batch size $\B$ is
\begin{equation}
\Lo_{mb,t}(\th) = \frac{1}{2\B}\z^{\tpose}\pmat_{t}\z.
\end{equation}
where $\pmat_{t}$ is the sequence of random diagonal projection matrices of rank $\B$ as
before. The dynamics of $\th_{t}$ are given by:
\begin{equation}
\th_{t+1}-\th_{t} = -\lr\J_{t}^{\tpose}\z_{t}
\end{equation}
where $\J_{t}\equiv \left.\frac{d\f}{d\th}\right|_{\th_{t}}$. Following the analysis of \citet{agarwala_sam_2023}, in the quadratic regression model we have:
\begin{equation}
\z_{t} = \f(\th_{0})+\J_{0}[\th_{t}] +\frac{1}{2}\Q[\th_{t},\th_{t}]-\y_{tr} 
\end{equation}
\begin{equation}
\J_{t} = \J_{0}+\Q[\th_{t},\cdot]
\end{equation}
which gives us the differences:
\begin{equation}
\z_{t+1}-\z_{t} = \J_{t}[\th_{t+1}-\th_{t}]+\lr^{2}\Q[\th_{t+1}-\th_{t}, \th_{t+1}-\th_{t}]
\end{equation}
\begin{equation}
\J_{t+1}-\J_{t} = \Q(\th_{t+1}-\th_{t}, \cdot)
\end{equation}
Substitution gives us:
\begin{align}
\z_{t+1}-\z_{t} & = -\frac{\lr}{\B}\J_{t} \J_{t}^{\top}\pmat_{t}\z_{t}  +\frac{\lr^{2}}{2\B^{2}} \Q(\J_{t}^{\top}\pmat_{t}\z_{t},\J_{t}^{\top}\pmat_{t}\z_{t})\notag\\
\J_{t+1} -\J_{t} & = -\frac{\lr}{\B} \Q(\J_{t}^{\top}\pmat_{t}\z_{t}, \cdot)\,.
\end{align}
Therefore the dynamics close in $\z_{t}$ and $\J_{t}$ given the fixed model curvature
$\Q$.

In the remainder of this section, we prove Theorem \ref{thm:cons_sharp} in two parts, and provide
numerical evidence for its validity.
For ease of notation, we define $\tlr = \lr/\B$. This is equivalent to the scaling
in \citet{paquette_sgd_2021}, and allows us to keep the calculations
in terms of $\bfr$ rather than the raw $\B$.
The final theorem can be obtained with the substitution of $\tlr$.

\subsection{First discrete derivative of NTK}

By definition we have
\begin{equation}
\Delta_{1}\lhat_{\al,t} = \w_{\al}^{\tpose}[\J_{t+1}\J^{\tpose}_{t+1}-\J_{t}\J^{\tpose}_{t}]\w_{\al}
\end{equation}
Using Equation \ref{eq:SGD_general}, and averaging over $\pmat$ we have
\begin{equation}
\begin{split}
\expect_{\pmat}[\J_{t+1}\J^{\tpose}_{t+1}-\J_{t}\J^{\tpose}_{t}|\z_{t}, \J_{t}] & = -\bfr\tlr\left[\Q(\J_{t}^{\top}\z_{t}, \J_{t}^{\tpose}\cdot)+\Q(\J_{t}^{\top}\z_{t}, \J_{t}^{\tpose}\cdot)^{\tpose}\right]+\bfr^{2}\tlr^{2}\Q(\J_{t}^{\top}\z_{t}, \cdot)\Q(\J_{t}^{\top}\z_{t}, \cdot)^{\tpose}\\
& +\bfr(1-\bfr)\tlr^2\expect_{\Q}\left[\sum_{\mu}z_{t, \mu}^{2} \left[\Q(\J_{t}^{\top}\m{e}_{\mu}, \cdot)\Q(\J_{t}^{\top}\m{e}_{\mu}, \cdot)^{\tpose}\right]\right]
\end{split}
\end{equation}
where the $\m{e}_{\mu}$ are the coordinate basis vectors.

Recall that we define $\Q$ via the equation
\begin{equation}
\Q = \sum_{\gm} \w_{\gm}\otimes\m{M}_{\gm}
\end{equation}
where the $\m{M}_{\gm}$ are i.i.d. symmetric matrices with variances $V(\sigma_{\gm})$.
Therefore, averaging over $\Q$, the first two terms vanish and we have:
\begin{equation}
\begin{split}
\w_{\al}^{\tpose} \expect_{\pmat, \Q}[\J_{t+1}\J^{\tpose}_{t+1}-\J_{t}\J^{\tpose}_{t}|\z_{t}, \J_{t}]\w_{\al} & = \bfr^{2}\tlr^{2}\expect_{\m{M}_{\al}}\left[\z_{t}^{\tpose}\J_{t}^{\tpose}\m{M}_{\al}^{\tpose}\m{M}_{\al}\J_{t}\z_{t}\right]\\
&+\bfr(1-\bfr)\tlr^2 \expect_{\m{M}_{\al}}\left[\sum_{\mu}z_{t, \mu}^{2} \m{e}_{\mu}^{\tpose}\J_{t}^{\tpose}\m{M}_{\al}^{\tpose}\m{M}_{\al}\J_{t}\m{e}_{\mu}\right]+O(\D^{-1})
\end{split}
\end{equation}
Conducting the average over $\z_{t}$ gives us, as desired:
\begin{equation}
\expect_{\pmat,\Q,\z}[\Delta_{1}\lhat_{\al,t}] =
\P\D^{2} V_{z}\tr\left[\frac{1}{\D}\J_{t}^{\tpose}\J_{t}\right]\tlr^{2} \bfr V(\sigma_{\al})+O(\D^{-1})
\end{equation}

\subsection{Second discrete derivative of $\J$}

Now we consider
\begin{equation}
\Delta_{2}\shat_{\al,t} = \w_{\al}^{\tpose}[\J_{t+2}-2\J_{t+1}+\J_{t}]\v_{\al}
\end{equation}

We can re-write this as:
\begin{equation}
\begin{split}
\J_{t+2}-2\J_{t+1}+\J_{t} & = -\tlr [\Q((\J_{t+1}-\J_{t})^{\top}\pmat_{t+1}\z_{t}, \cdot)+\Q(\J_{t}^{\top}\pmat_{t+1}(\z_{t+1}-\z_{t}), \cdot)\\
&+\Q((\J_{t+1}-\J_{t})^{\top}\pmat_{t+1}(\z_{t+1}-\z_{t}), \cdot)]
 -\tlr \Q(\J_{t}^{\top}(\pmat_{t+1}-\pmat_{t})\z_{t}, \cdot)
\end{split}
\end{equation}

Consider $\expect_{\pmat}[\J_{t+2}-2\J_{t+1}+\J_{t}]$.
Most of the terms contain only one copy of 
$\pmat_{t}$ or $\pmat_{t+1}$, so averaging gives a quantity that is 
``deterministic'' - identical for fixed values of the product $\bfr\tlr$. The one
non-trivial average is, to lowest order in $\tlr$:
\begin{equation}
\expect_{\pmat}[\Q((\J_{t+1}-\J_{t})^{\top}\pmat_{t+1}(\z_{t+1}-\z_{t}), \cdot)] = \tlr^{2} \expect_{\pmat}[\Q(\Q(\J_{t}^{\top}\pmat_{t}\z_{t}, \cdot)^{\top}\pmat_{t+1}\J_{t} \J_{t}^{\top}\pmat_{t}\z_{t}, \cdot)]+O(\tlr^3)
\end{equation}
Evaluating the average we have:
\begin{equation}
\begin{split}
\expect_{\pmat}[\Q((\J_{t+1}-\J_{t})^{\top}\pmat_{t+1}(\z_{t+1}-\z_{t}), \cdot)] & = \bfr^{2}\tlr^{2} \Q(\Q(\J_{t}^{\top}\z_{t}, \cdot)^{\top}\J_{t} \J_{t}^{\top}\z_{t}, \cdot)\\
& +\bfr^2(1-\bfr)\tlr^2\Q(\m{N}(\z_{t},\J_{t})\z_{t}, \cdot)+O(\tlr^3)
\end{split}
\end{equation}
Where the matrix valued function $\m{N}(\z, \J)$ is given by:
\begin{equation}
\m{N}(\z, \J)_{i\gm} = \sum_{\bt, j}\Q_{\bt i j}\J_{\gm j}\z_{\gm}(\J\J^{\tpose})_{\bt \gm}
\end{equation}

We can write $\expect_{\pmat}[\Delta_{2}\lhat_{\al,t}]$ as
\begin{equation}
\expect_{\pmat}[\Delta_{2}\lhat_{\al,t}] = d_{2}(\z_{t}, \J_{t}, \bfr\tlr)-\bfr^2\tlr^{3}\w_{\al}^{\tpose}\Q(\m{N}(\z_{t}, \J_{t})\z_{t}, \v_{\al})+O(\tlr^4)
\end{equation}
where the deterministic part $d_{2}$ is given by
\begin{equation}
d_{2}(\z, \J, \tlr) = \tlr^2\w_{\al}^{\tpose}\left[\Q(\z\cdot\Q(\J^{\tpose}\z, \cdot), \v_{\al})+\Q(\J^{\tpose}\J\J^{\tpose}
\z,\v_{\al})- \tlr\Q(\Q(\J^{\top}\z, \cdot)^{\top}\J \J^{\top}\z, \v_{\al})\right]+\tlr^3\w_{\al}^{\tpose}\Q(\m{N}(\z_{t},\J_{t})\z_{t}, \v_{\al})
\label{eq:d2_pre_ave}
\end{equation}
The $d_{2}$ term is the same for constant $\bfr\tlr$. In the batch-averaged setting, it
has no dependence on batch size.

It remains to average the stochastic term over $\Q$ and $\z$. Averaging over $\Q$ first, we
have
\begin{equation}
\expect_{\Q}[\w_{\al}\cdot\Q(\m{N}(\z, \J)\z, \v_{\al})] = \expect_{\Q}[[\w_{\al}\cdot\Q]_{ij}(\v_{\al})_{j}\Q_{\bt ik}\J_{\gm k}\z_{\gm}(\J\J^{\tpose})_{\bt\gm}\z_{\gm}]
\end{equation}
Expanding $\J\J^{\tpose} = \sum_{\bt} (\sigma_{\bt})^2 \w_{\bt}\w_{\bt}^{\tpose}$, we
have
\begin{equation}
\expect_{\Q}[\w_{\al}\cdot\Q(\m{N}(\z, \J)\z, \v_{\al})] = \sum_{\bt}\expect_{\Q}[(\sigma_{\bt})^{2}[\w_{\al}\cdot\Q]_{ij}(\v_{\al})_{j}[\w_{\bt}\cdot\Q]_{ik}\J_{\gm k}\z^{2}_{\gm}(\w_{\bt})_{\gm}]
\end{equation}
If $\z$ is independent of $\J$ we have
\begin{equation}
\expect_{\Q, \z}[\w_{\al}\cdot\Q(\m{N}(\z, \J)\z, \v_{\al})] =
\sigma^{3}_{\al}V(\sigma_{\al}) \P \expect_{\z}[(\w_{\al})^{\tpose}\diag(\z^{2})\w_{\al}]
\end{equation}
This is a non-negative number. The magnitude depends on the correlation
between $\w_{\sigma}$ and $\z$, the singular values $\sigma$, and the magnitude
of the projection of $\Q$ in the appropriate eigenspace.

Finally, making the i.i.d. assumption on $\z$ we have
\begin{equation}
\expect_{\Q, \z}[\w_{\al}\cdot\Q(\m{N}(\z, \J)\z, \v_{\sigma})] =
\shat_{\al, t}^{3}V(\sigma_{\al}) \P V_{z} +O(\tlr^4)
\end{equation}
In total, we have:
\begin{equation}
\expect_{\pmat, \Q, \z}[\Delta_{2}\shat_{\al,t}]  = d_{2}(\bfr\tlr) -\bfr^{2}\tlr^{3}\shat_{\al, t}^{3}V(\sigma_{\al}) \P V_{z} +O(\tlr^4)
\end{equation}
where $d_{2}(\tlr) = \expect_{\z,\Q}[d_{2}(\z, \J, \tlr)]$ from Equation \ref{eq:d2_pre_ave}.
This concludes the proof of the theorem.

\subsection{Numerical results}

In order to support the theory, we simulated a quadratic regression model with
$\D = 400$, $\P = 600$, with various $\Q$ spectra $V(\sigma)$, and plotted
the dynamics of $\Delta_{1}\hat{\lam}$ and $\Delta_{2}\hat{\sigma}$ for the
largest eigenvalues (Figure \ref{fig:cons_scaling},
averaged over $30$ seeds). We compare the ``flat'' spectrum $V(\sigma) = 1$ with the
``shaped'' spectrum $V(\sigma)\propto \sigma$. As predicted by the theory, the first derivative
increases with $\B^{-1}$ for fixed $\lr$ , while second derivative decreases. Theoretical fit is
better for flat $\Q$. Both the increase and the decrease are more extreme for
the shaped $\Q$.

\begin{figure}[h]
\centering
    \begin{tabular}{cc}
    \includegraphics[width=0.4\linewidth]{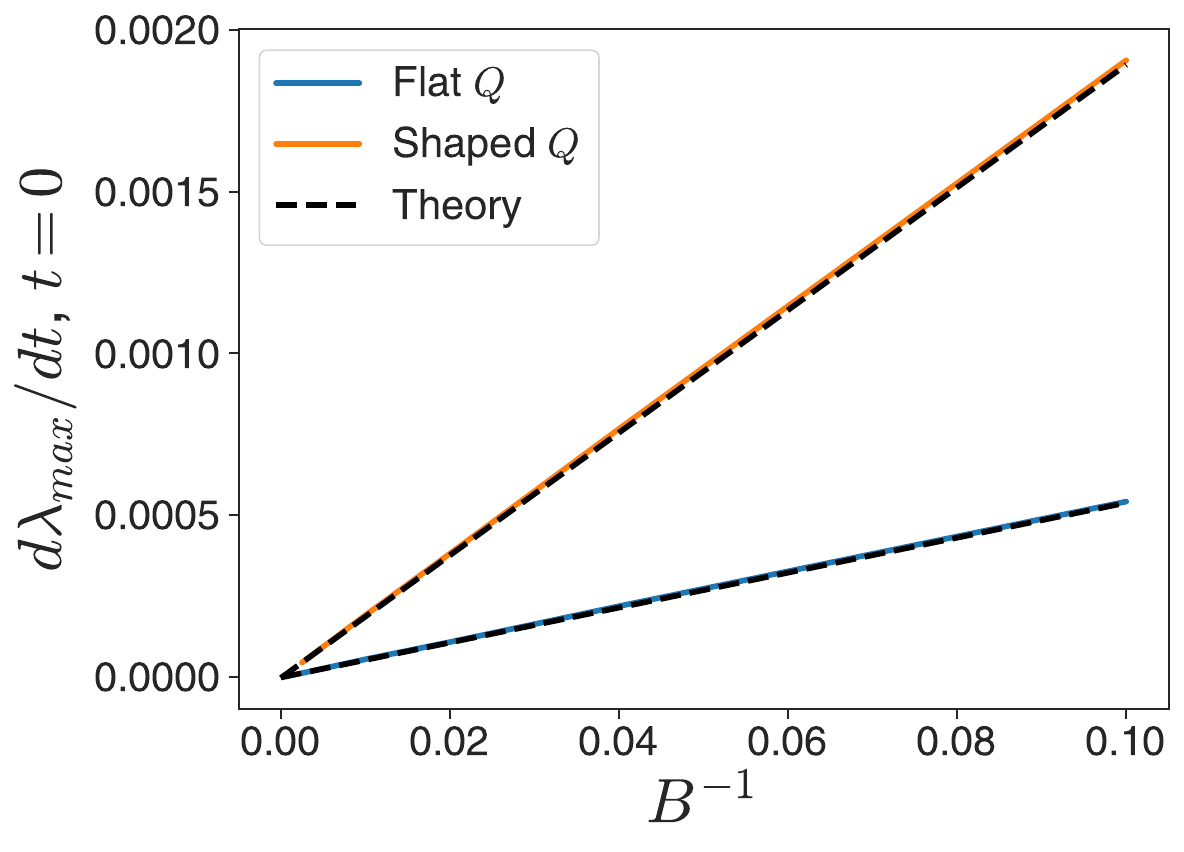} & \includegraphics[width=0.4\linewidth]{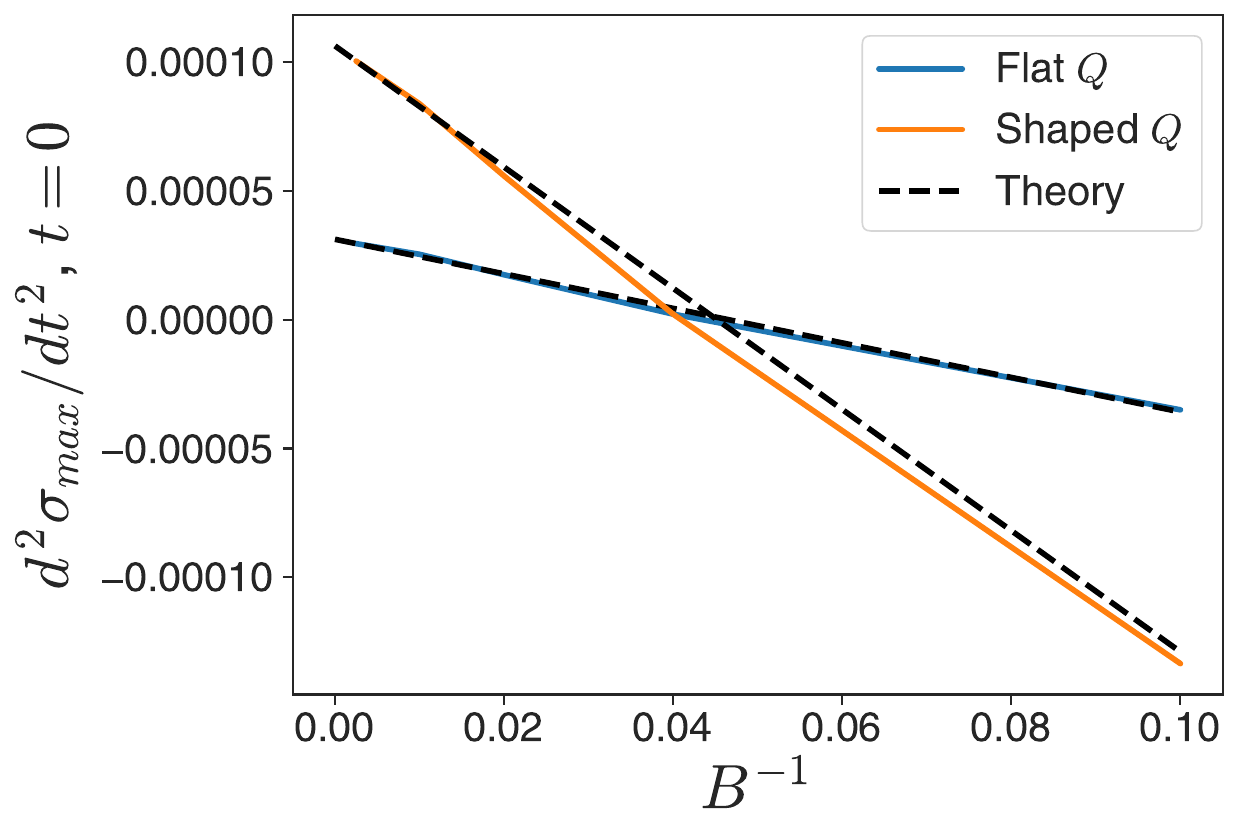}
    \end{tabular}
    \caption{Eigenvalue discrete derivatives $\Delta_{1}\hat{\lam}$ (left) and
    $\Delta_{2}\hat{\sigma}$ (right) for quadratic regression model, $\D = 400$,
    $\P = 600$, averaged over $30$ seeds. The Jacobian $\J$ is initialized with random
    elements, and $\Q$ has either a ``flat'' spectrum of $V(\sigma) = 1$ (blue) or a ``shaped''
    spectrum of $V(\sigma) = \sigma$ (orange). First derivative increases as batch size $\B$
    decreases, while second derivative decreases. Shaped $\Q$ show stronger trends for both.
    }
    \label{fig:cons_scaling}
\end{figure}

\section{MNIST experiments}

\label{app:mnist_details}

\aga{

\begin{itemize}
    \item Cross-entropy analysis?
\end{itemize}

}

\subsection{Experimental setup}

The experiments in Section \ref{sec:fcn_experiments} were all conducted using the first $2500$
examples from MNIST. The labels were converted to $1$ (odd digits) or $-1$ (even digits), and
the models were trained with MSE loss. The networks architecture was
two fully connected hidden layers of width $256$, with ${\rm erf}$ activation function.
Inputs were pre-processed with ZCA.

For small batch sizes, networks were trained with a constant number of epochs. We trained for
$1.2\cdot10^{6}$ total samples ($480$ epochs) up to and including batch size $32$. This was motivated by the observation
that for small $\lr$ and small $\B$, dynamics was roughly universal for a fixed number
of epochs for constant $\lr/\B$ (as is the case in the convex setting of \citep{paquette_sgd_2021}).
However, for larger batch sizes the dynamics is most similar for similar values of $\lr$,
keeping the number of \emph{steps} fixed. For batch size $32$ and larger, models were trained
for $3.75\cdot10^{4}$ steps. Models were trained on A100 GPUs; Figure \ref{fig:cifar_phase_plane}
took $\sim500$ GPU hours to generate due to the large number of steps. There is much room for
efficiency improvement by using just-in-time compilation for sets of steps rather than
individual ones.

We also changed the learning rate sweep range in a batch-size dependent way. For small
batch size $\B\leq 32$ we swept over a constant range in $\lr/\B$, since this was the
parameter which predicts divergence in the small batch setting. For larger batch sizes
$\B\geq 32$ we swept over a constant range in $\lr$ - chosen once again using the
same $\lr$ range as for $\B = 32$. This let us efficiently explore both the small batch and
large batch regimes in fine detail over $\lr$ and $\B$.

\subsection{Approximate vs exact $\knorm$}

This setup was chosen to allow for exact computation of $\knorm$ as per Equation
\ref{eq:pvec_eqn}. We computed the empirical NTK exactly, took its eigendecomposition,
and used that to construct the matrix $\m{M} = (\Id-\amat)^{-1/2}\bmat(\Id-\amat)^{-1/2}$.
This is similar to $(\Id-\amat)^{-1}\bmat$ but is symmetric. We then computed
the maximum eigenvalue of $\m{M}$ to obtain the instantaneous value of $\knorm$.

As in the convex case, the trace estimator of $\knorm$ systematically underestimates
the true value of $\knorm$, especially near $\knorm = 1$ (Figure \ref{fig:fcn_knorm_vs_trace}).
Both quantities are still $O(1)$ over a similar regime but quantitative prediction of
largest stable learning rate is easier with exact value.

\begin{figure}[h]
\centering
    \begin{tabular}{cc}
    \includegraphics[width=0.4\linewidth]{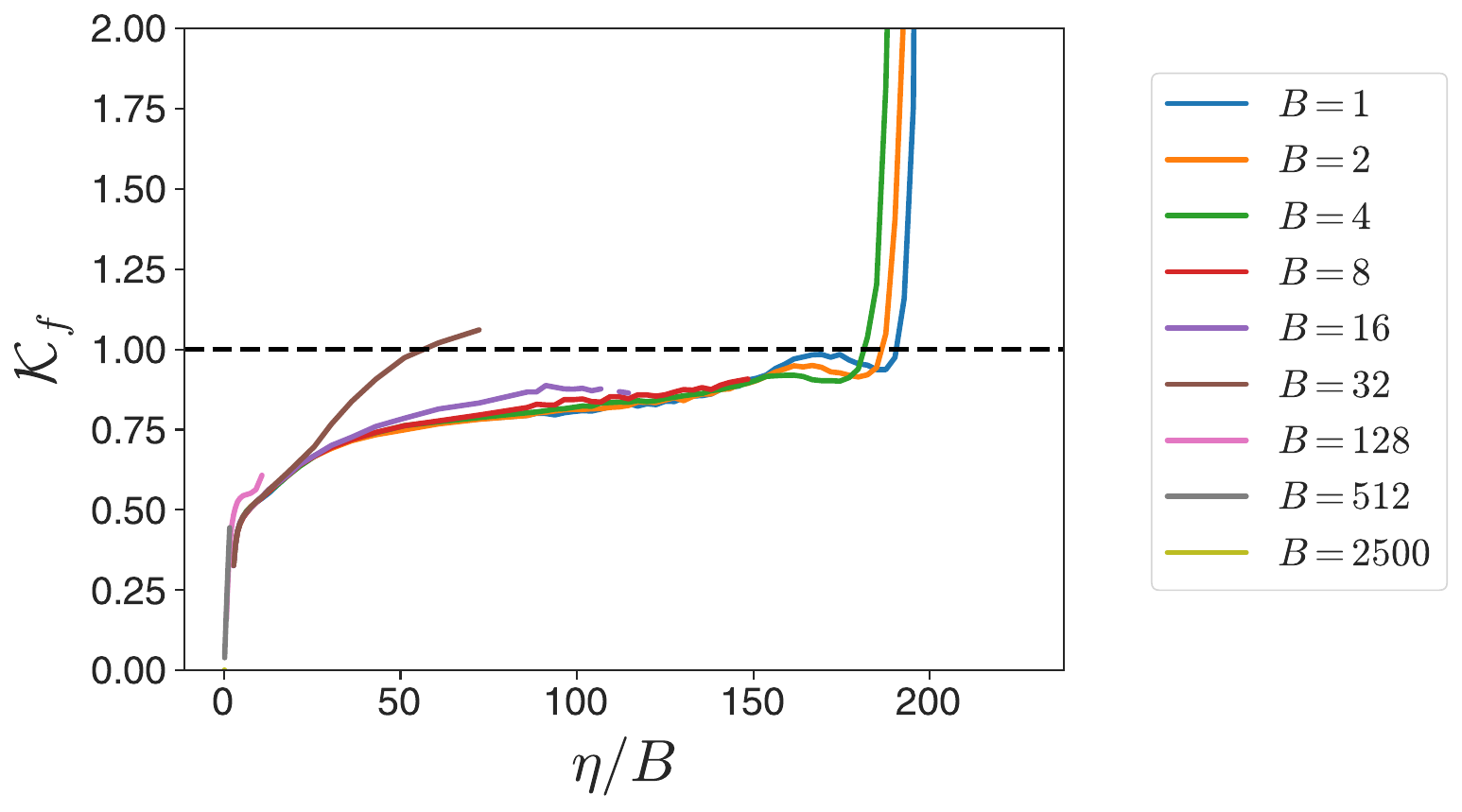} & \includegraphics[width=0.4\linewidth]{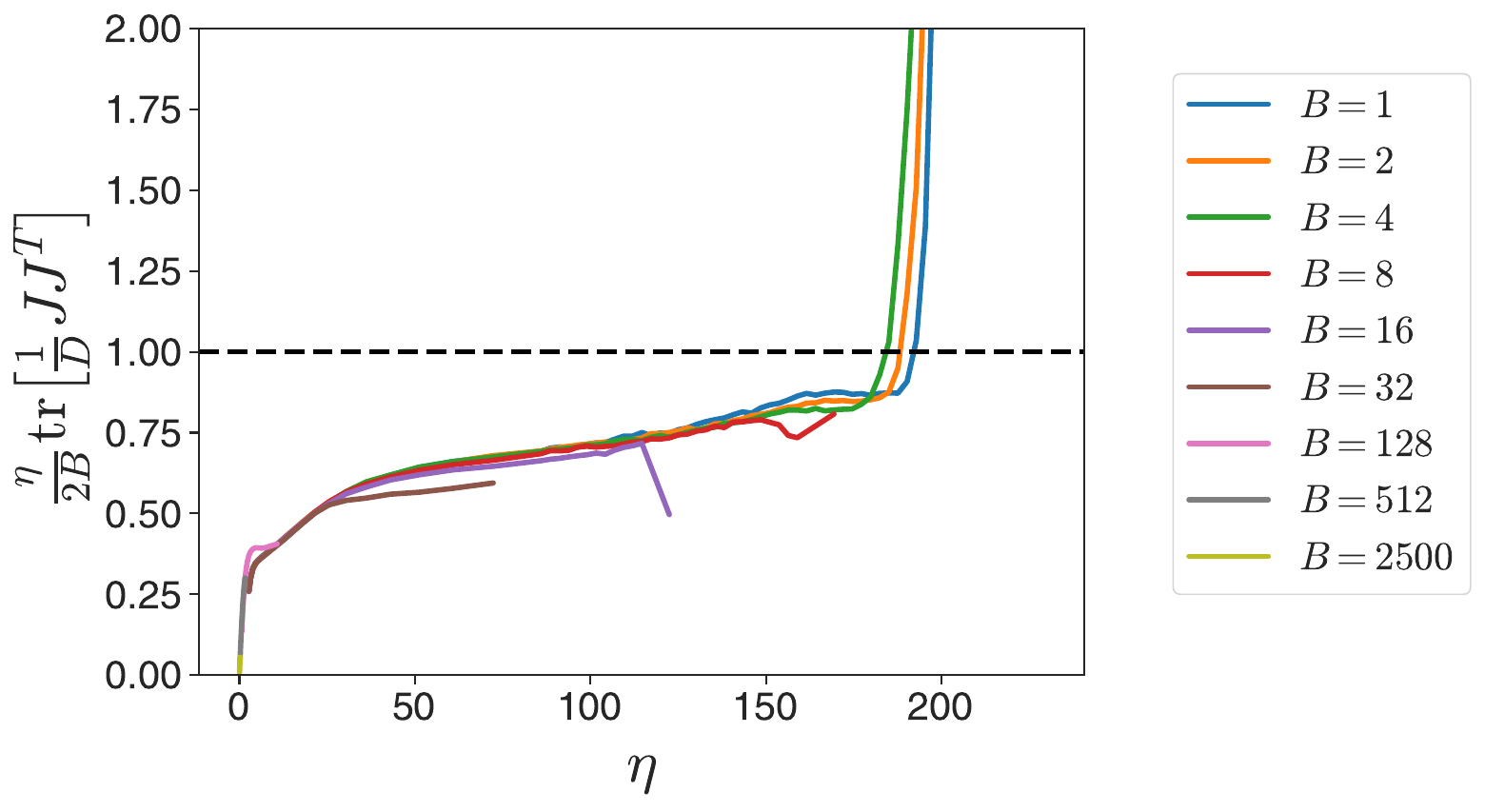}
    \end{tabular}
    \caption{Exact computation of $\knorm$ (left) vs. trace estimator (right) for FCN trained
    on MNIST. Trace estimate underestimates the true $\knorm$, especially as $\knorm$ goes
    to a value near $1$.
    }
    \label{fig:fcn_knorm_vs_trace}
\end{figure}

\section{CIFAR experiments}

\label{app:cifar_details}

\aga{

\begin{itemize}
    \item Hessian trace without L2 regularizer
\end{itemize}

}

\subsection{Experimental setup}

The experiments in Section \ref{sec:cifar_exp} were conducted on CIFAR10 using ResNet18
\citep{he_deep_2016}, with layer normalization and ${\rm GeLU}$ activation function.
The models were trained with MSE loss and $L^{2}$ regularization with
$\lambda = 5\cdot 10^{-4}$ using momentum with
parameter $0.9$ and a cosine learning rate schedule. We trained with batch sizes
$2^{k}$ for $k\in \{3, 4, \dots, 8\}$. All models were trained
for $200$ epochs on 8 V100 GPUs ($20$ hours per training run, most time
spent on full batch eigenvalue estimation).

For each batch size $\B$, we swept over constant normalized base learning rate $\lr/\B$
in the range $[10^{-4}, 0.0125]$, interpolating evenly in log space by powers of $2$.
For batch size $128$, this corresponds to a range $[0.0125, 1.6]$ in base learning
rate $\lr$.

The measurements of largest eigenvalues were made with a Lanczos method as in
\citep{ghorbani_investigation_2019}, from which we also obtained estimates of the
trace of the full Hessian. The NTK trace was computed exactly using autodifferentiation.

\subsection{Phase plane plots}

We can use the sweep over $\B$ and $\lr/\B$ to construct phase plane plots
for the CIFAR
experiments similar to those for MNIST in Section \ref{sec:fcn_experiments}. Once again we
see that the median estimated noise kernel norm (Figure \ref{fig:cifar_phase_plane}, left)
and the final error (Figure \ref{fig:cifar_phase_plane}, right) are similar for constant
$\lr/\B$ across batch sizes. We also see evidence that the universality is broken
for both large $\lr/\B$ as well as large batch size $\B$.

\begin{figure}[h]
\centering
    \begin{tabular}{cc}
    \includegraphics[width=0.4\linewidth]{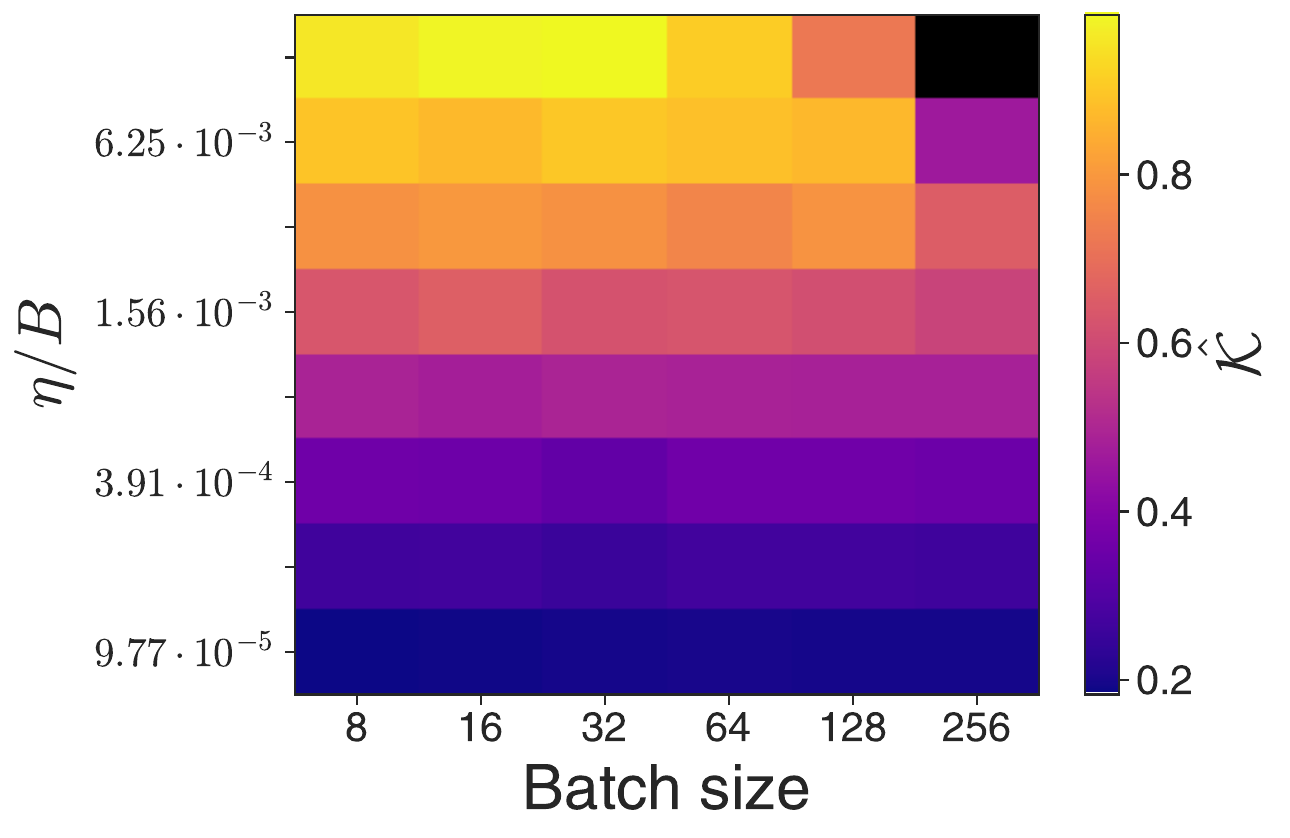} & \includegraphics[width=0.4\linewidth]{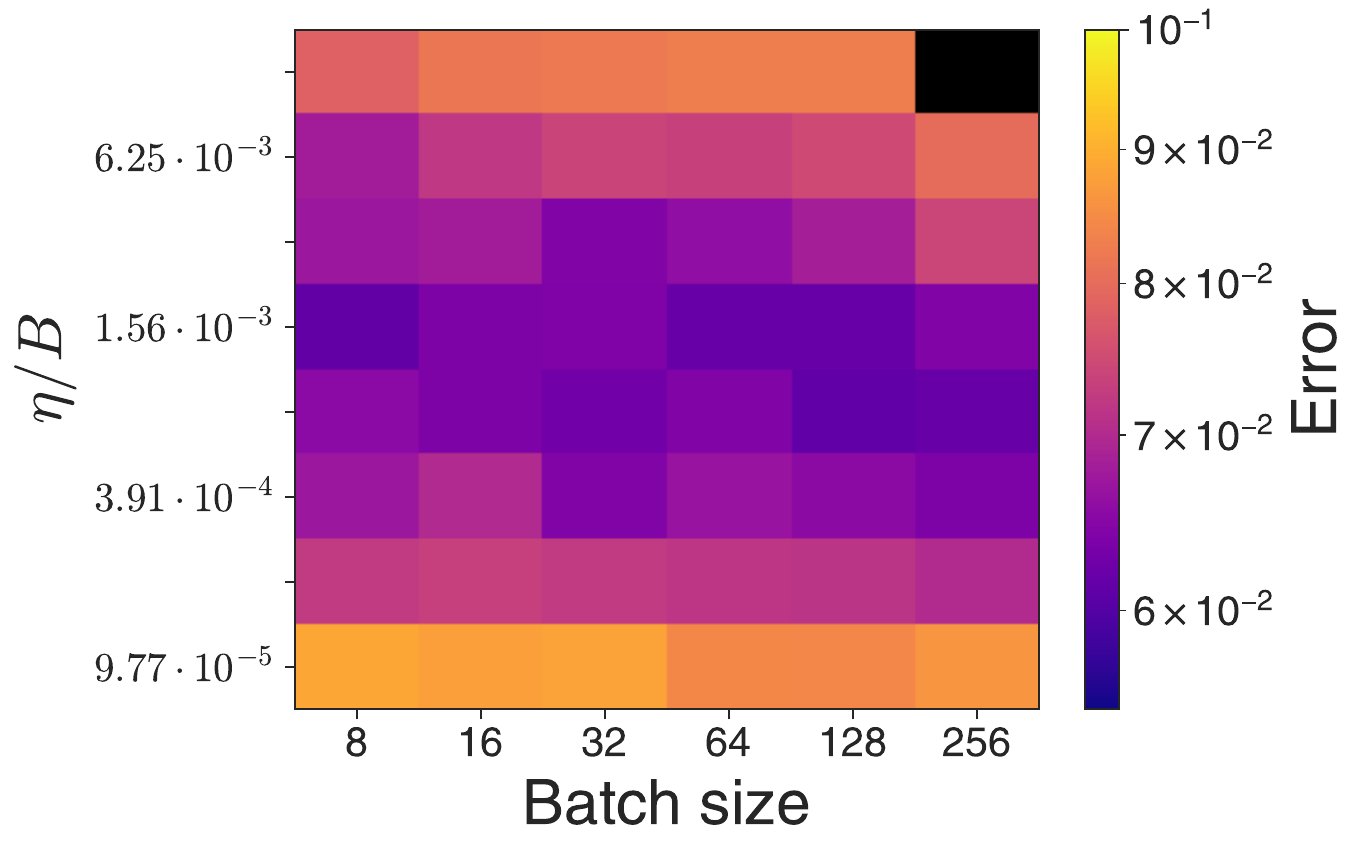}
    \end{tabular}
    \caption{Phase planes for median $\knorm$ (left) and final test error (right) for
    ResNet18 trained on CIFAR10. $\knorm$ increases with increasing $\lr/\B$. Statistics
    are consistent for a range of batch sizes for fixed $\lr/\B$. Consistency breaks down
    at large $\lr/\B$ corresponding to values of $\knorm$ close to $1$, as well as for
    larger batch size.
    }
    \label{fig:cifar_phase_plane}
\end{figure}

\subsection{Raw NTK trace}

\label{app:raw_ntk_tr}

The raw values of the NTK trace are plotted in Figure \ref{fig:cifar_ntk_tr}. The raw
eigenvalues actually increase slightly as the learning rate drops, except at late times
where they decrease.

\begin{figure}[h]
    \centering
     \includegraphics[width=0.45\linewidth]{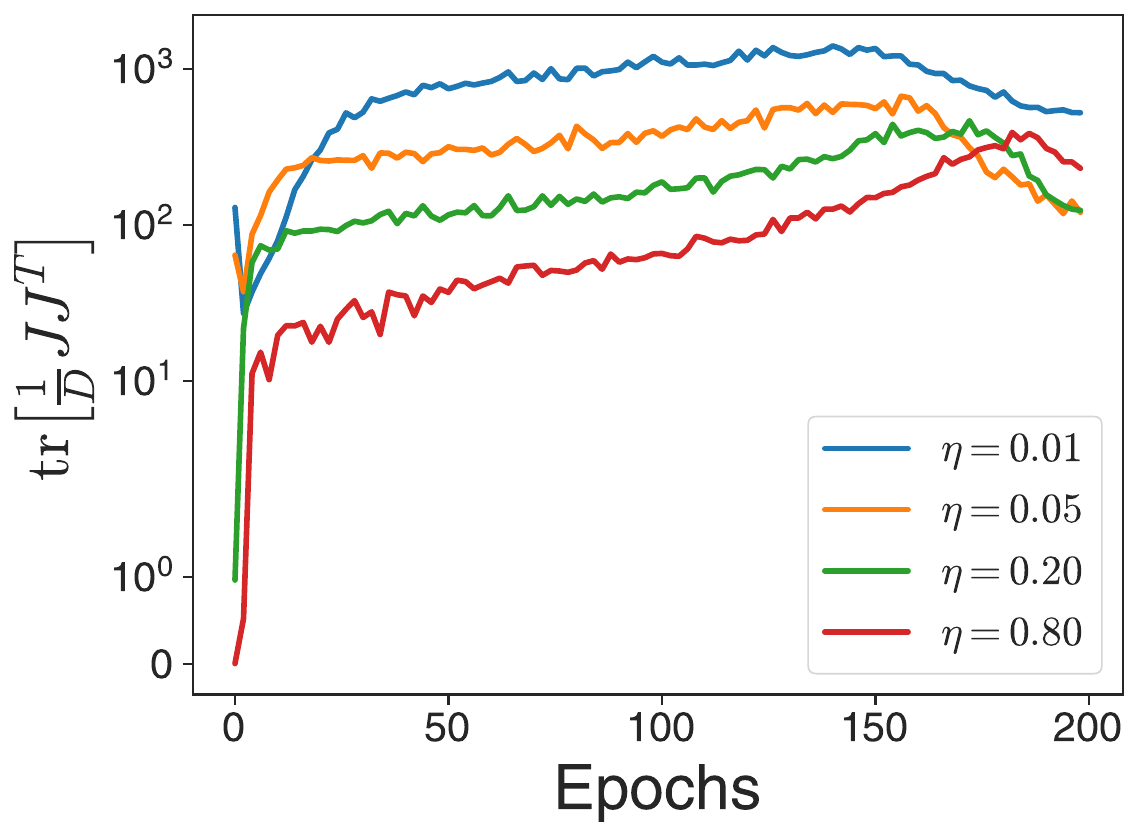}
    \caption{Un-normalized $\tr[\ntk]$ quantity is increasing for much of learning but decreases at the end of training.
    }
    \label{fig:cifar_ntk_tr}
\end{figure}

\subsection{Largest eigenvalue dynamics}

\label{app:cifar_large_eig}

The learning rate and batch size ranges were chosen, in part, because they lead to dynamics
which is well below the (deterministic) edge of stability. The dynamics of the largest
eigenvalue $\lam_{max}$ of the full-dataset Hessian can be seen in Figure
\ref{fig:cifar_lam_max}. The raw eigenvalue has an initial increase, a later decrease,
a plateau, and finally a decrease (left). However, the normalized eigenvalue
$\lr_{t}\lam_{max}$ increases and then decreases, and stays well below the edge of stability
value of $2$ (right). This suggests that the results of Section \ref{sec:cifar_exp} can't
be explained by the deterministic edge of stability. Note that the normalized values
are computed using the instantaneous step size.

\begin{figure}[h]
\centering
    \begin{tabular}{cc}
    \includegraphics[width=0.4\linewidth]{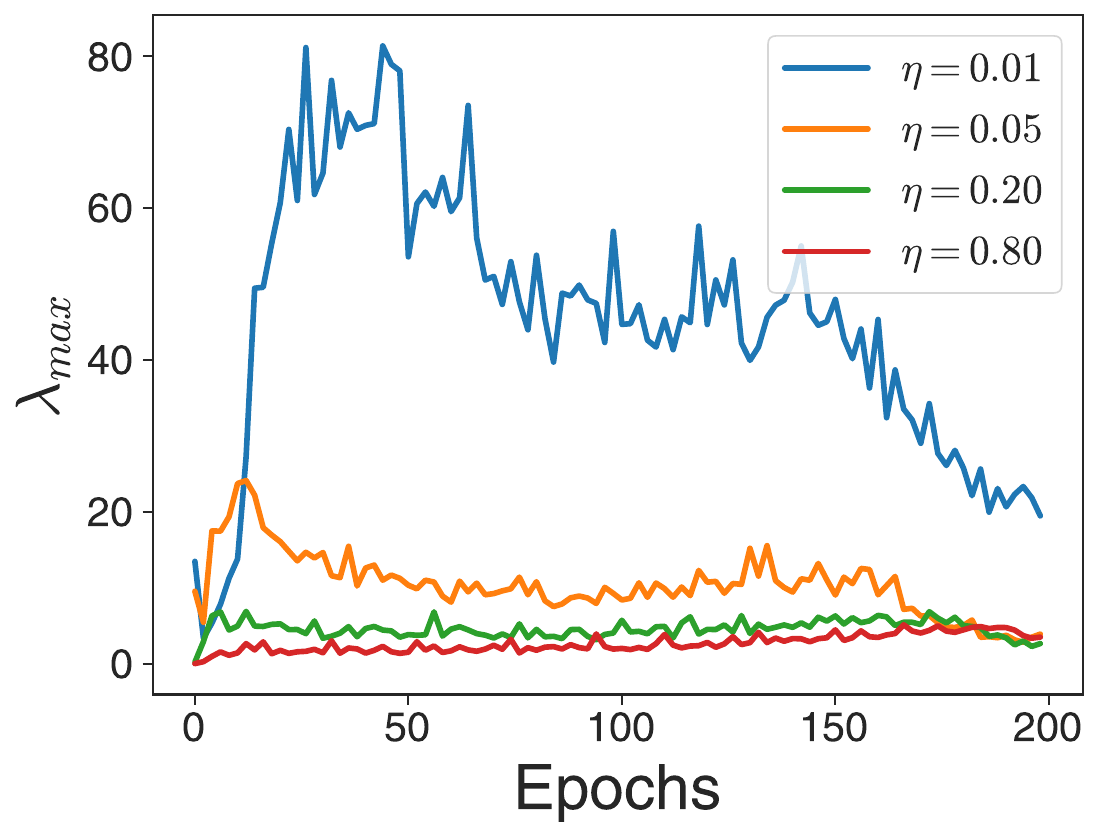} & \includegraphics[width=0.4\linewidth]{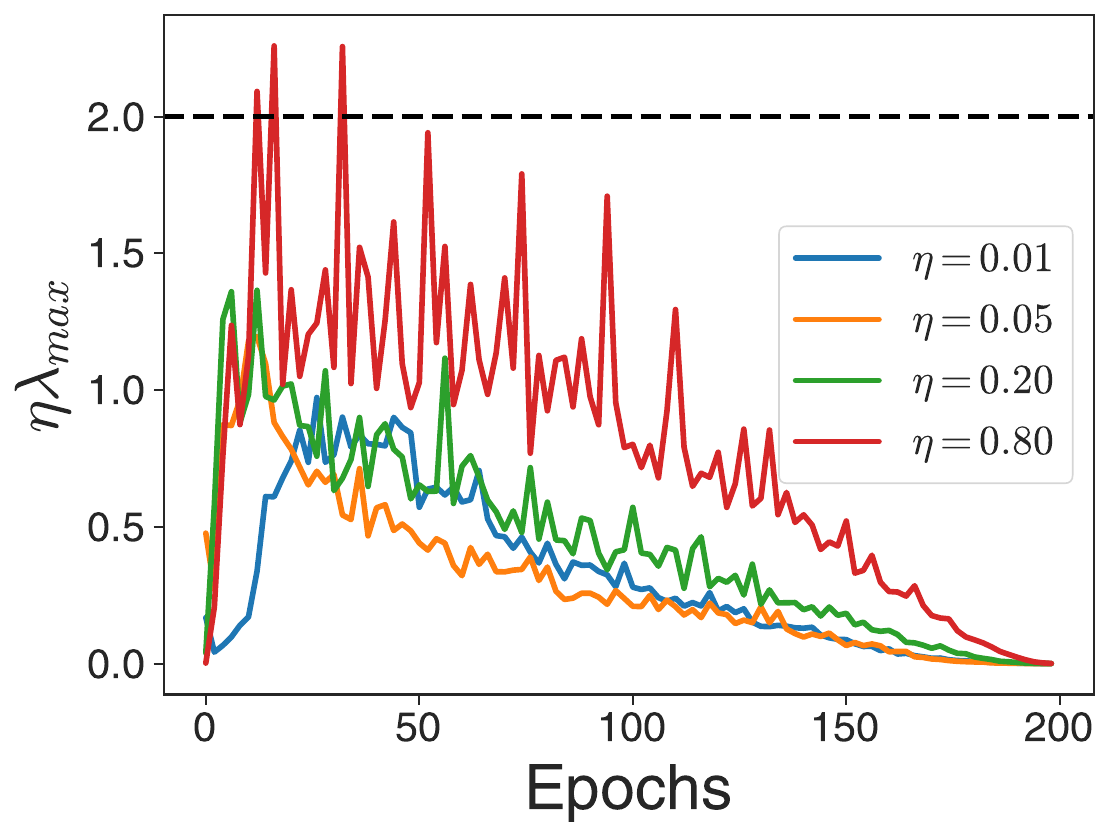}
    \end{tabular}
    \caption{Maximum eigenvalue for ResNet18 on CIFAR. Raw eigenvalue increases at
    early times, then decreases to a steady value at intermediate times, and finally
    decreases at late times (left). Normalized eigenvalue is below the edge of stability
    ($\lr_{t}\lam_{max} < 2$) for all but the largest learning rate (right).
    }
    \label{fig:cifar_lam_max}
\end{figure}

\subsection{Hessian trace}

\label{app:hess_trace}

The full Hessian trace is dominated by the $L^{2}$ regularizer coefficient, and is therefore
a poor estimator of $\knorm$ (Figure \ref{fig:hess_tr_no_l2}, left).
We can confirm that even removing the $L^{2}$ regularizer during the computation of the
Hessian trace does not fix the issue (Figure \ref{fig:hess_tr_no_l2}, right). Indeed the Hessian
trace varies wildly over the course of learning, due to the contributions from the
non-Gauss Newton part \citep{dauphin_neglected_2024}.

\begin{figure}[h]
\centering
\begin{tabular}{cc}
    \includegraphics[width=0.32\linewidth]{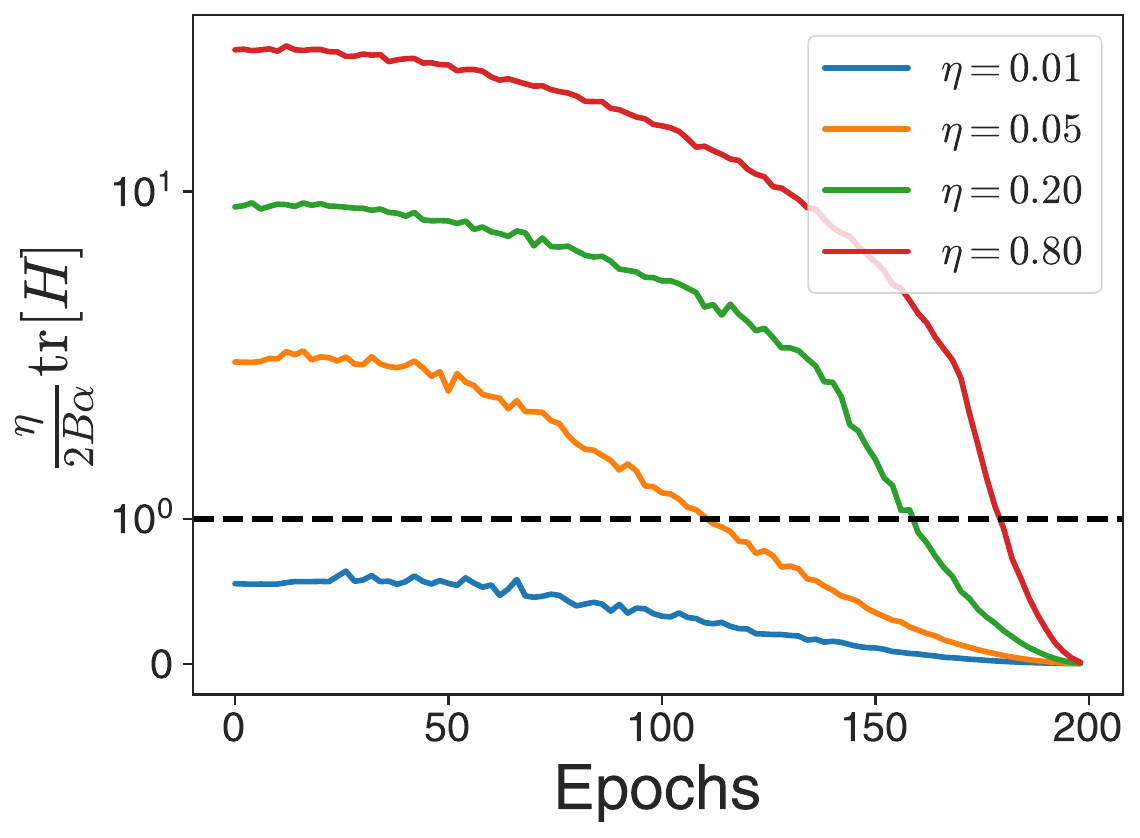} &
    \includegraphics[width=0.4\linewidth]{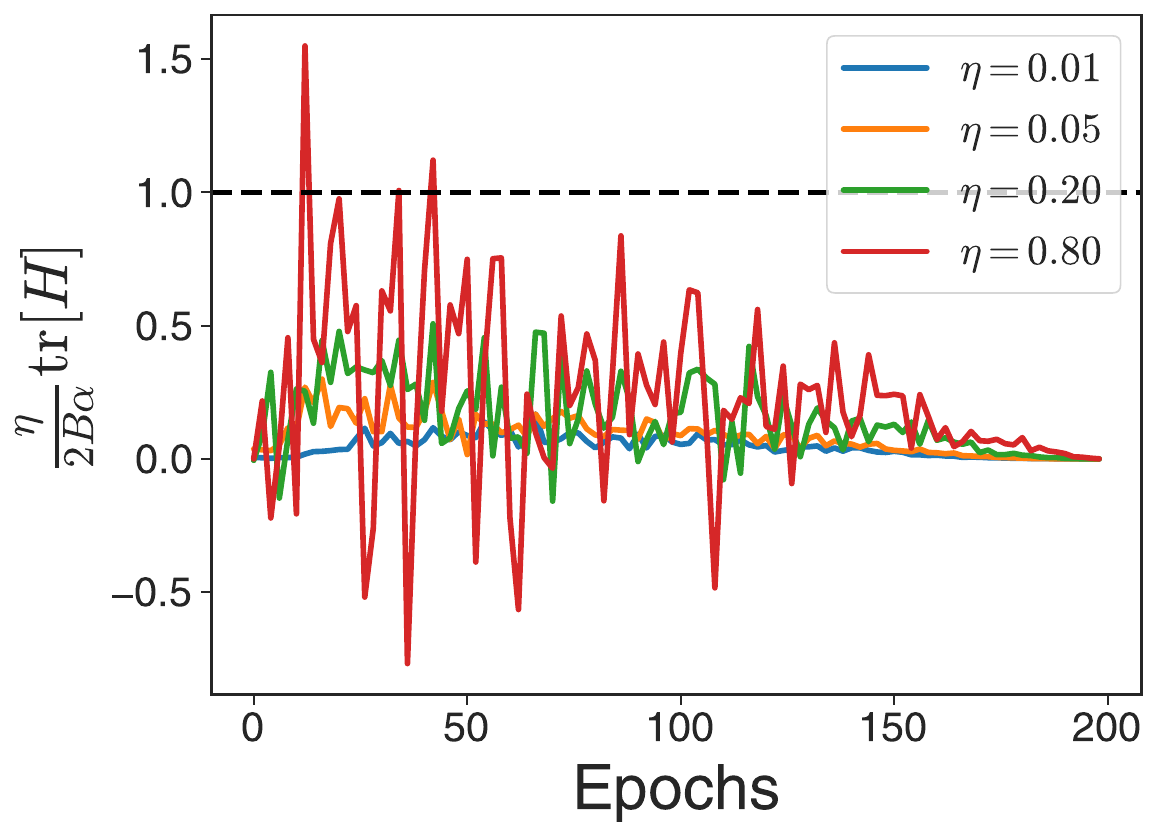}
    
    \end{tabular}
    \caption{Hessian trace for CIFAR model is dominated by $L^{2}$ regularizer (left). Ignoring $L^{2}$ regularization parameter, full Hessian trace is not a
    good approximator of $\knorm$ and does not spend most of its time near $1$ (right).
    }
    \label{fig:hess_tr_no_l2}
\end{figure}

\subsection{MLP-Mixer on CIFAR10}

\label{app:mlp-mixer-cifar10}

To provide additional evidence for the importance of the S-EOS, we also
trained the MLP-Mixer model from \citep{tolstikhin_mlpmixer_2021}, size \texttt{S/16}, on CIFAR10. We trained using SGD with momentum,
MSE loss, batch
size $128$, and a cosine learning rate schedule with $1$ epoch of linear warmup. The base learning rate
was varied by factors of $2$ from $0.00625$ to $1.6$.
We find similar trends to ResNet:

\begin{itemize}
    \item \textbf{$\knorm$ stays in range $[0.3, 1.0]$ over a wide range of learning rates.} We vary learning rates by a factor
    of $128$ and the typical $\knorm$ value only varies by a factor of $3$ (Figure \ref{fig:cifar_mixer}, top left). This suggests there is some effect
    stabilizing its growth.
    \item \textbf{$\lam_{\max}$ remains far from the edge of stability.} Even for the largest learning rates, $\lr\lam_{max}\approx 1$, far from the critical value of $2$ (Figure \ref{fig:cifar_mixer}, top right).
    \item \textbf{$\knorm$ close to $1$ impedes training.} Larger learning rates spend more time with $\knorm$ close to $1$,
    which leads to slower improvements in loss and error rate (Figure \ref{fig:cifar_mixer}, bottom row).
\end{itemize}

\begin{figure}[h]
    \centering
    \begin{tabular}{cc}
     \includegraphics[width=0.45\linewidth]{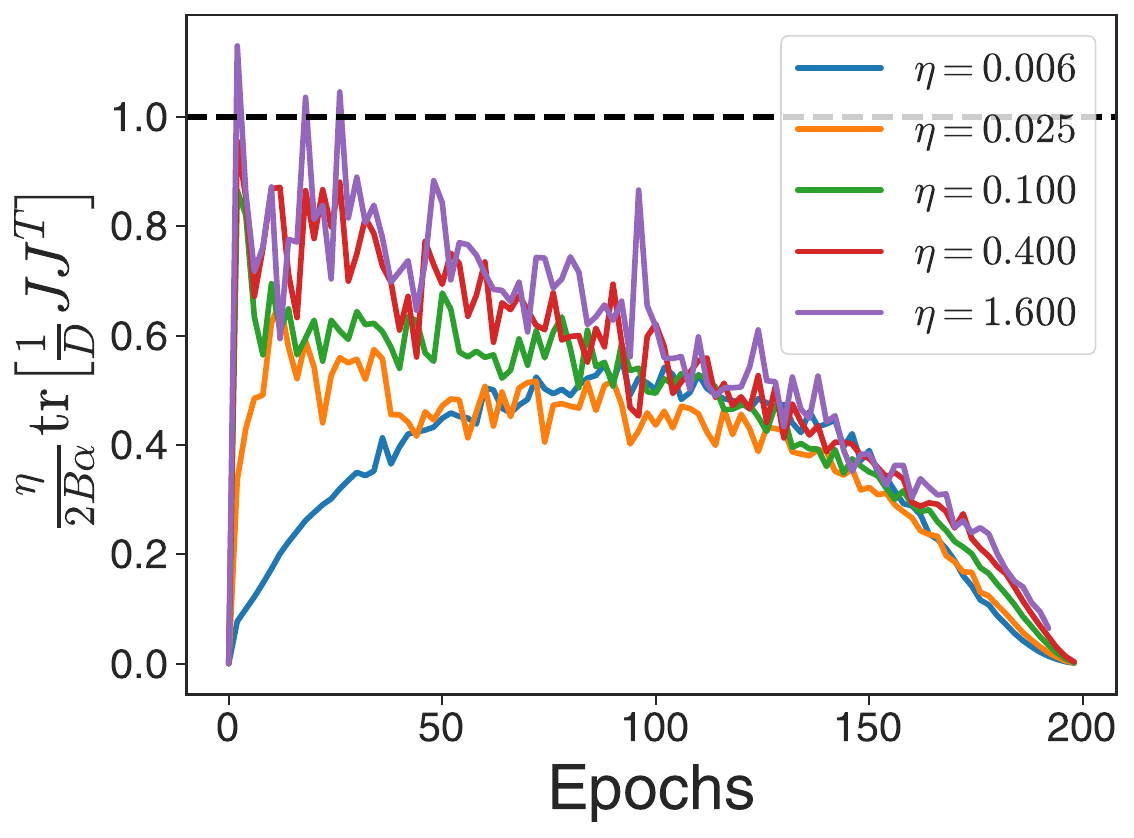} & \includegraphics[width=0.45\linewidth]{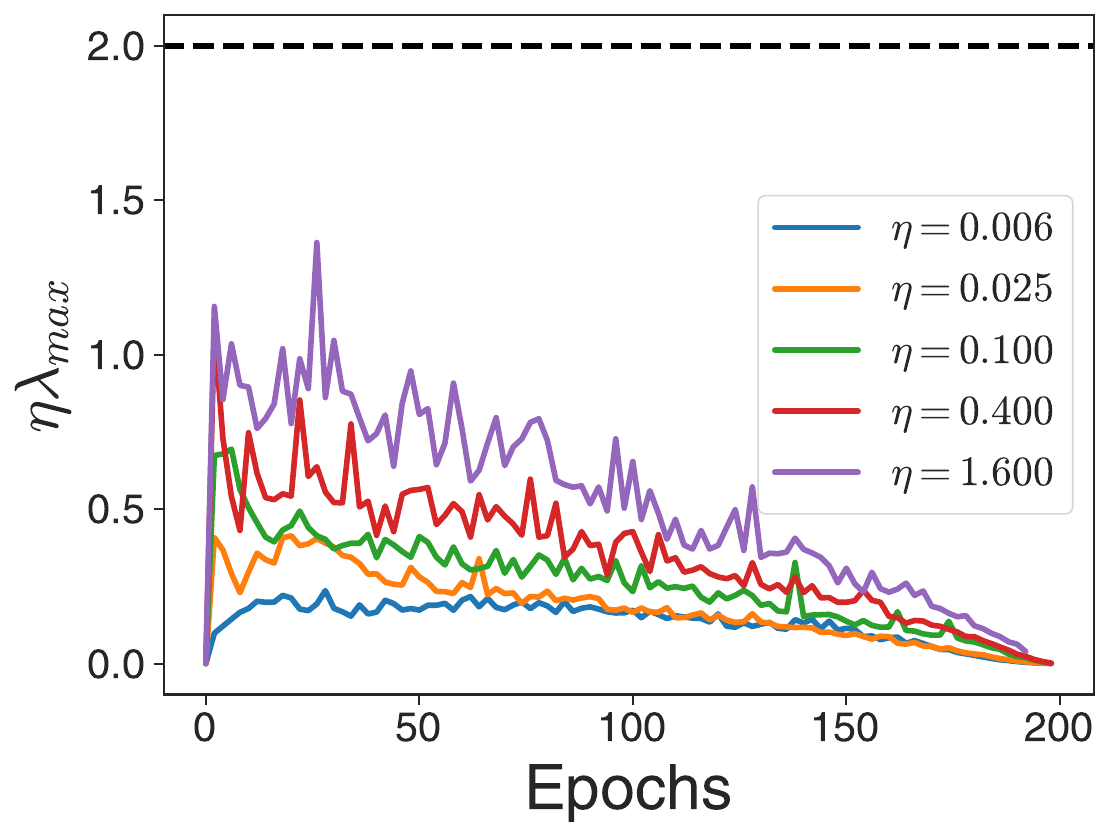}\\
    \includegraphics[width=0.45\linewidth]{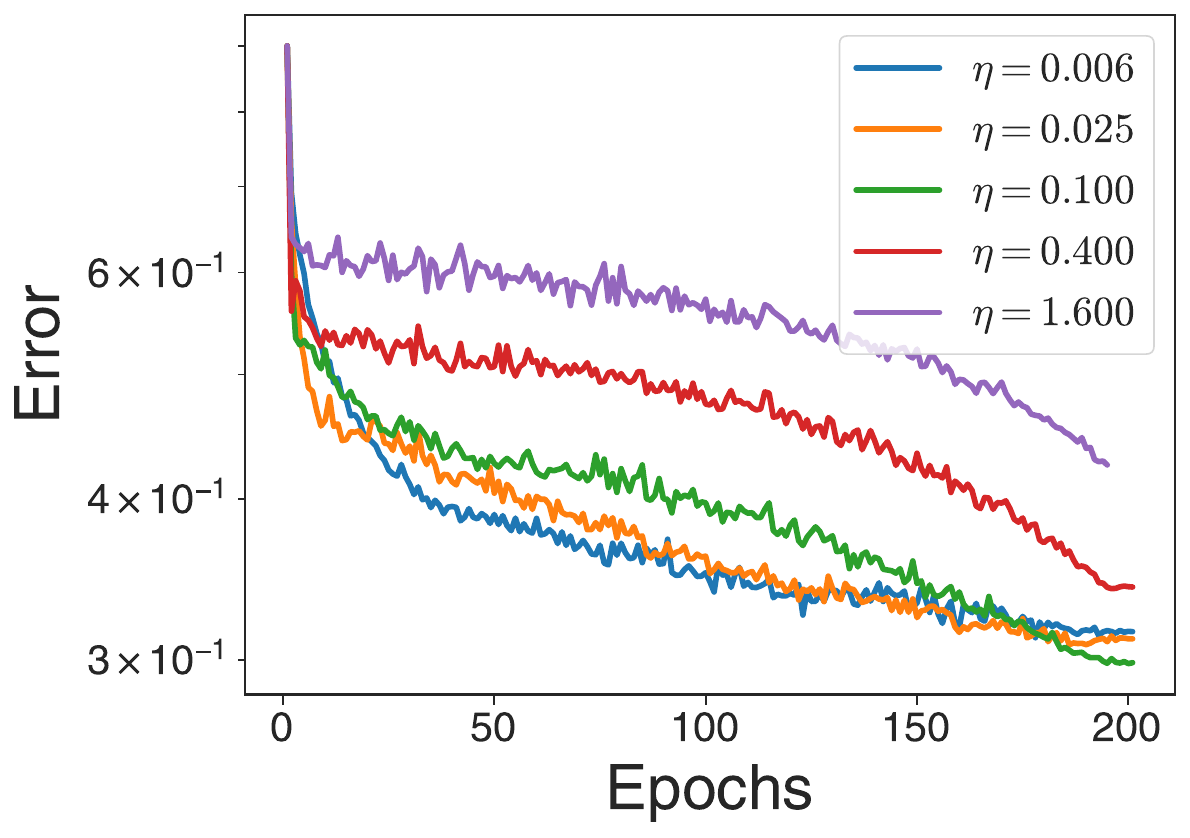} & \includegraphics[width=0.45\linewidth]{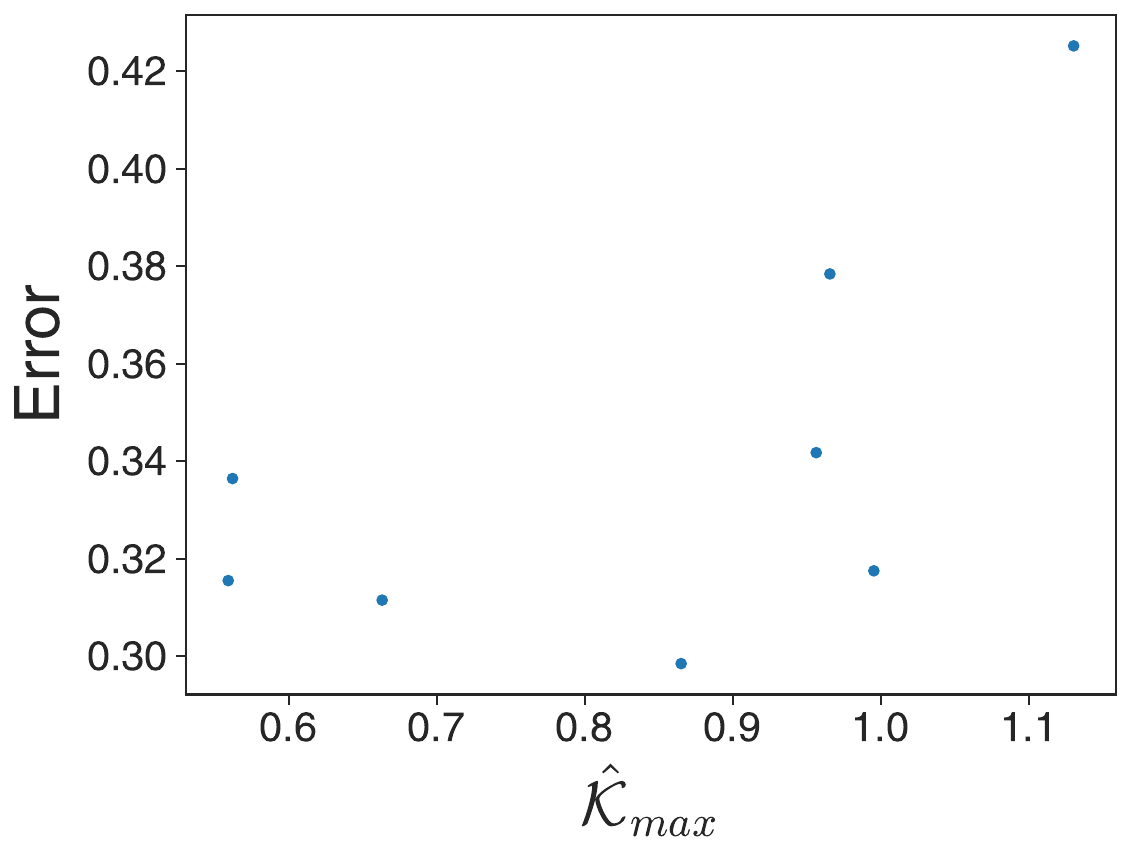}
    \end{tabular}
    \caption{MLP-Mixer trained on CIFAR10. At large learning rates $\hat{\knorm}$ is near
    $1$ at early times, and at intermediate times values cluster over a large range of learning
    rates (top left). Maximum eigenvalue remains below edge of stability (top right). Learning is slow when
    $\hat{\knorm}$ is near $1$ (bottom left), and best performance is for intermediate values of $\hat{\knorm}$
    (bottom right).}
    \label{fig:cifar_mixer}
\end{figure}

\subsection{ResNet50 and ViT on Imagenet - cross entropy loss}

\label{app:imagenet_experiments}

We conducted experiments to test the strengths and limitations of the analysis extending $\knorm$ to non-MSE loss
(Appendix \ref{app:non_mse_loss}).
We trained ResNet50 and ViT on Imagenet. The ViT implementation was the \texttt{S/16} size
from \citet{dosovitskiy2021imageworth16x16words}. Both models were trained using SGD with momentum, batch size $1024$,
on cross-entropy loss. We used a linear warmup for $5$ epochs followed by cosine learning rate schedule for both models.

We used the analysis in Appendix \ref{app:non_mse_loss} to compute an estimator of the noise kernel norm given by:
\begin{equation}
\hat{\knorm}_{mom} \equiv \frac{\lr}{2\al\B}\tr\left[\frac{1}{\D}\H_{GN}\right]
\end{equation}
where the Gauss-Newton component of the Hessian $\H_{GN}\equiv \J^{\tpose}\H_{\z}\J$, where $\H_{\z}$ is the loss Hessian
with respect to the logits. In order to compute the trace of $\H_{GN}$ efficiently over all of Imagenet,
we used the Bartlett Gauss-Newton estimator. This let us estimate
$\hat{\knorm}_{mom}$ with an epoch's worth of backwards passes.
The results are found in Figure \ref{fig:imagenet_vit_resnet}, with ResNet50 in the left column,
and ViT in the right column.

We found qualitative similarities with the experiments studying $\knorm$ in the MSE setting:
\begin{itemize}
    \item \textbf{$\knorm$ remains in a small range over a wide range of learning rates.} Over a range of learning rates of
    factor $100$, $\knorm$ only changed by a factor of $\sim 5$ (Figure \ref{fig:imagenet_vit_resnet}, top row).
    \item \textbf{There is an $O(1)$ threshold of $\knorm$ corresponding to stable training.} The stability threshold was higher than $\knorm = 1$ in both examples. For ResNet50 it appears to be slightly below $2$, for MLP-Mixer slightly above $2$.
    \item \textbf{$\knorm$ is predictive of training success.} In both cases $\knorm<0.5$ and $\knorm>2.0$ lead to either
    inefficient or unstable training respectively (Figure \ref{fig:imagenet_vit_resnet}, middle and bottom rows).
\end{itemize}

\begin{figure}[h]
    \centering
    \begin{tabular}{cc}
     \includegraphics[width=0.45\linewidth]{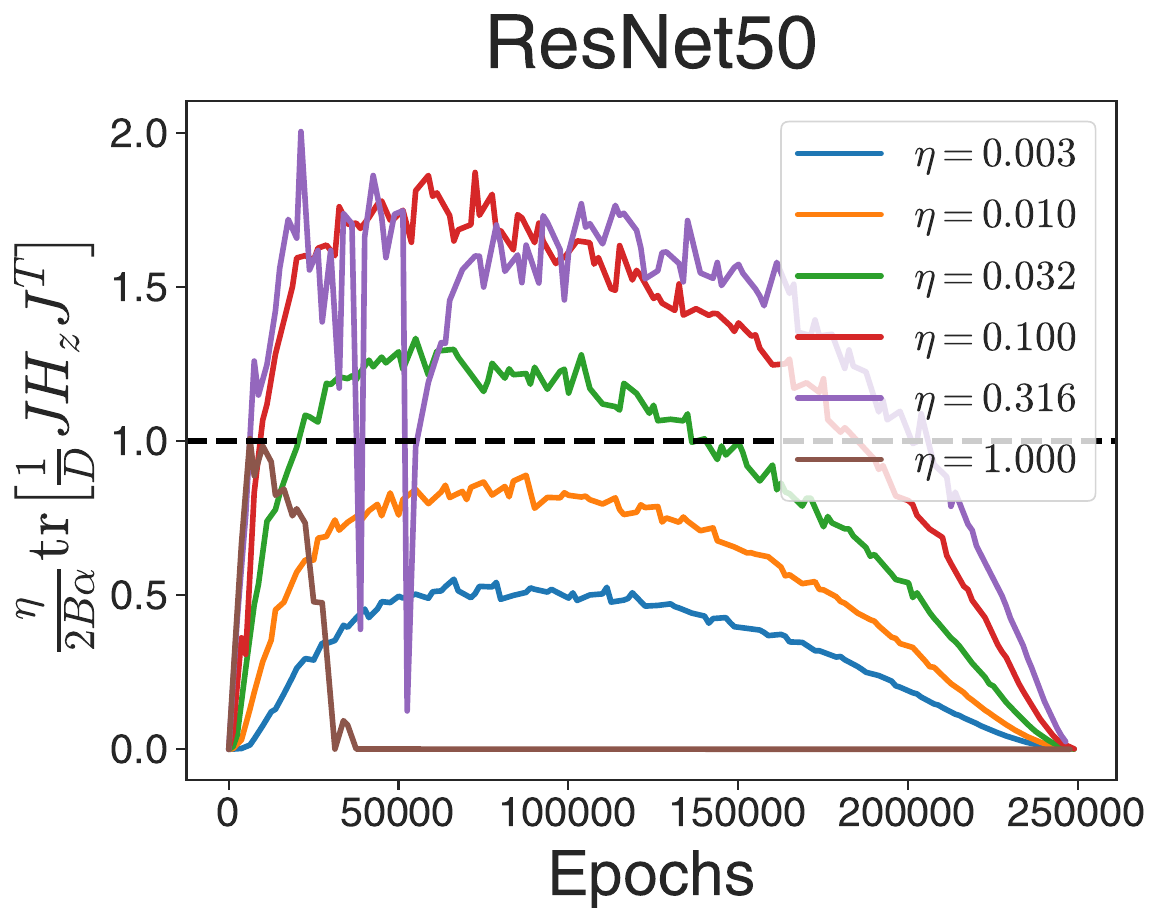} & \includegraphics[width=0.45\linewidth]{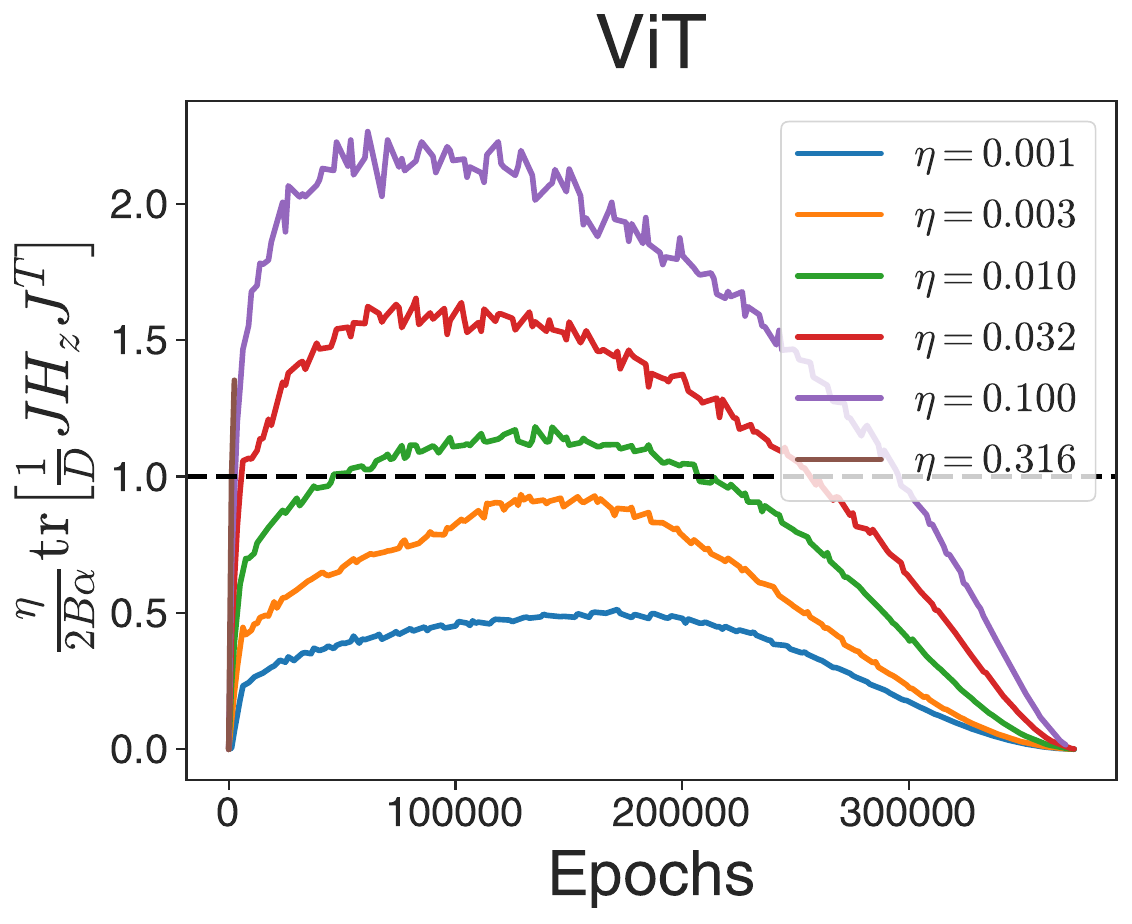}\\
    \includegraphics[width=0.45\linewidth]{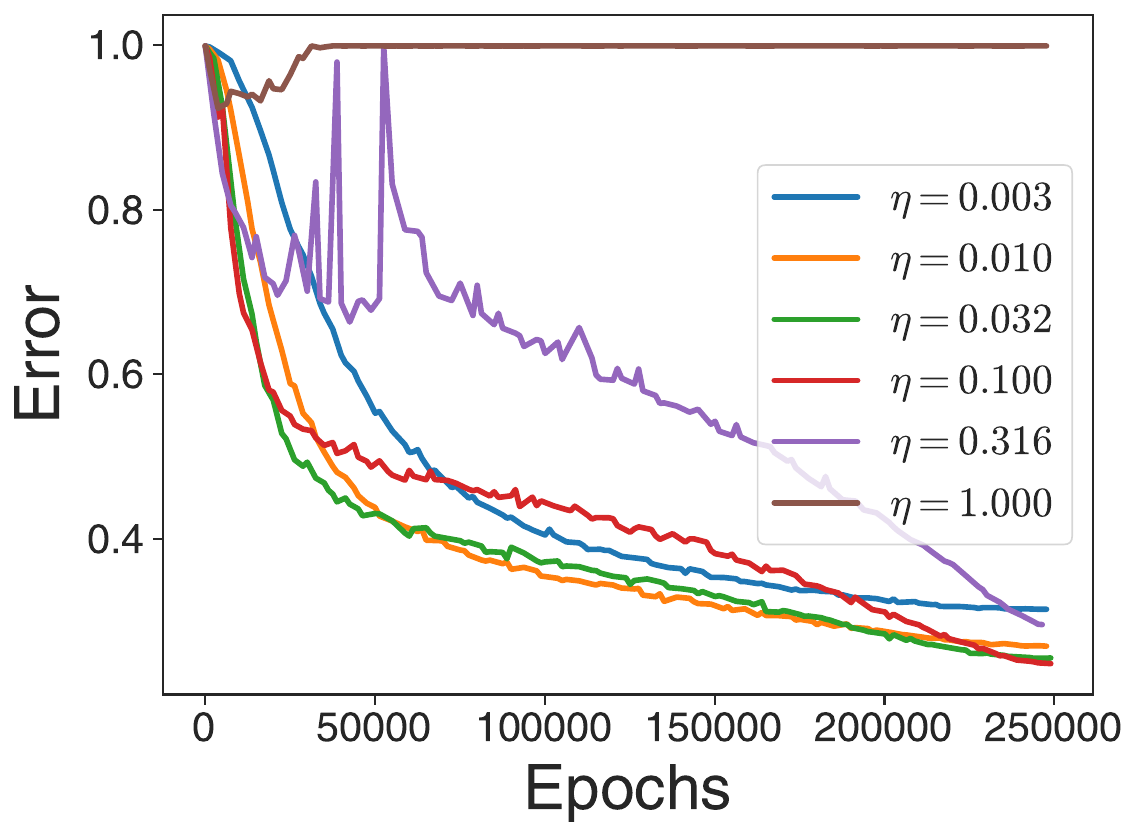} & \includegraphics[width=0.45\linewidth]{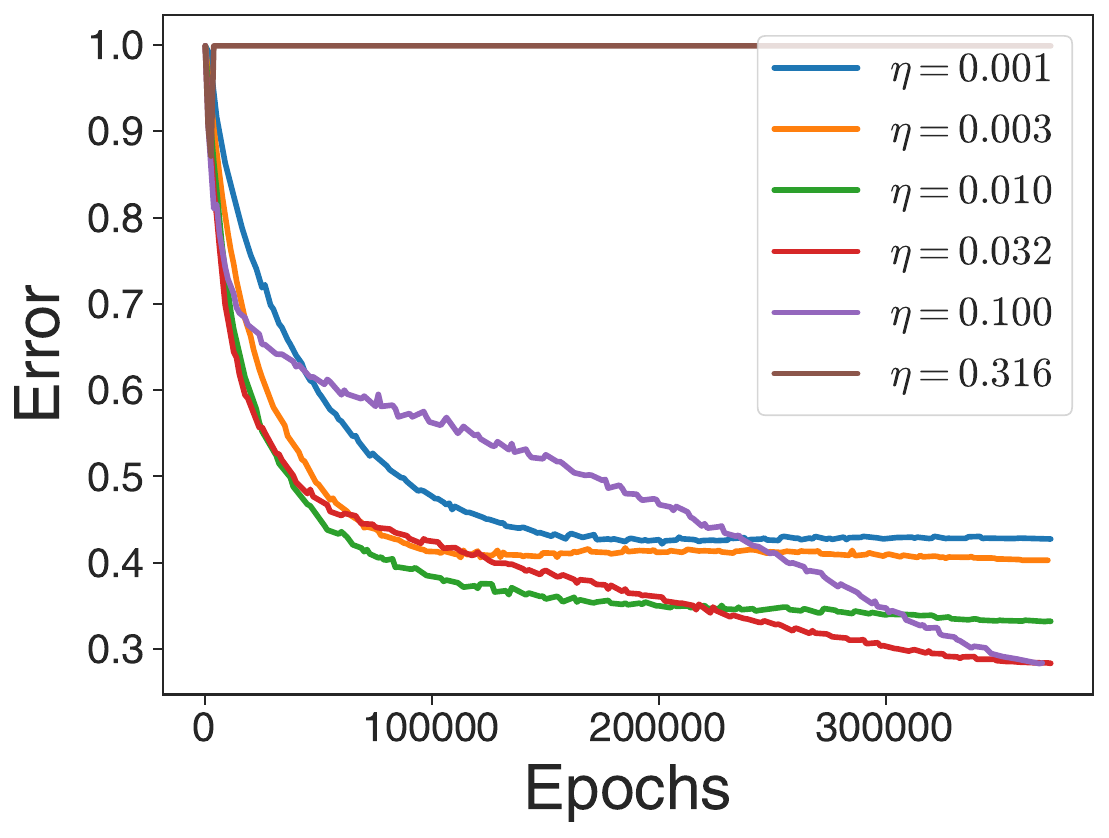}\\
     \includegraphics[width=0.45\linewidth]{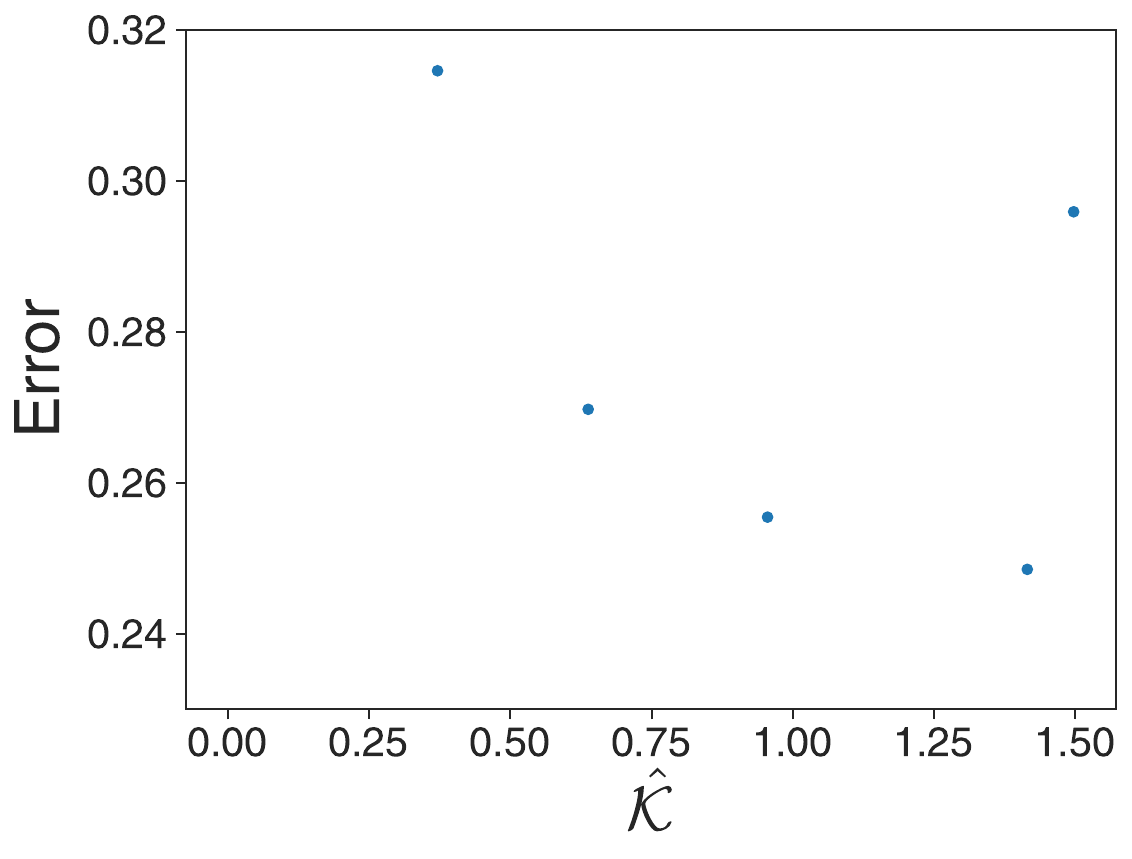} & \includegraphics[width=0.45\linewidth]{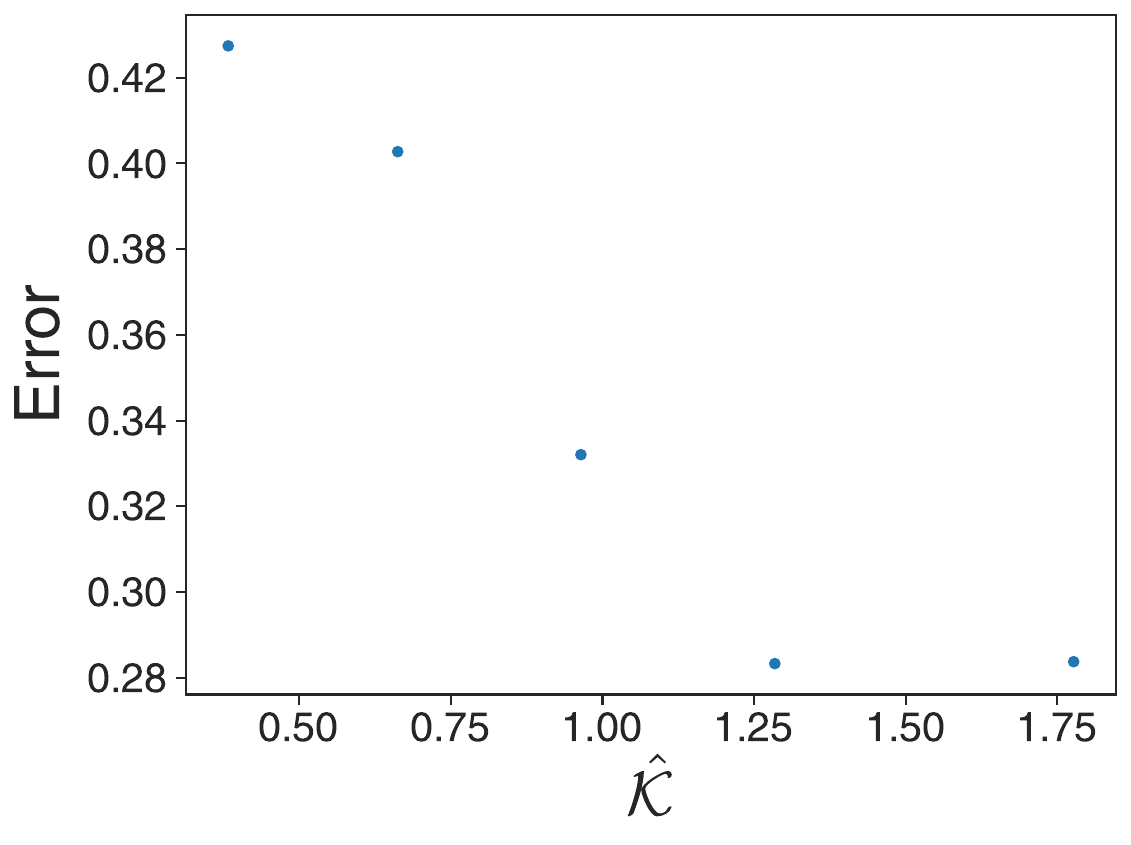}\\
    \end{tabular}
    \caption{ResNet50 (left column) and ViT (right column) trained on Imagenet with cross-entropy loss.
    $\knorm$ was approximated using the Gauss-Newton trace, estimated using the Bartlett-Gauss-Newton
    estimator. Learning rate variation of $1000$ leads to $\hat{\knorm}$ variation of
    a factor of $\sim 5$. $\hat{\knorm}$ seems to have a critical value around $2$ (top and middle row).
    There appears to be an $O(1)$ value of $\hat{\knorm}$ predictive of low error (bottom row),
    but more work is needed to refine the measurement.}
    \label{fig:imagenet_vit_resnet}
\end{figure}

These experiments suggest that extending the analysis of the MSE case to cross-entropy via the Gauss-Newton matrix
is promising, but still requires work. In particular, a better estimator is needed to bring the stability threshhold to the
predictable value $\knorm = 1$. We discuss some of the issues with the approximation in Section \ref{sec:non_mse_caveats}.

\clearpage

\end{document}
